\newif\iffinal
\newif\ifarxiv
\def\NewTheorem#1#2{%
  \newaliascnt{#1}{theorem}
  \newtheorem{#1}[#1]{#2}
  \aliascntresetthe{#1}
  \expandafter\def\csname #1autorefname\endcsname{#2}
}
 \newtheorem{theorem}{Theorem}[section]
\newcommand{\hide}[1]{}
\DeclareMathOperator{\lmax}{\ell_{max}}
\definecolor{darkred}{rgb}{0.5,0,0}
\definecolor{lightblue}{rgb}{0,0.4,0.8}
\definecolor{darkgreen}{rgb}{0,0.5,0}
\newcommand{\nnote}[1]{{\color{purple}[N: #1]}}
\begin{document}

\title{Multi-Neuron Representations of Hierarchical Concepts \\
in Spiking Neural Networks}
\author{Nancy Lynch\footnote{Massachusetts Institute of Technology.  This work was supported by NSF under grants CCR-2139936 and CCR-2003830.}}
\date{March 21, 2025}
\maketitle

\begin{abstract}
We describe how hierarchical concepts can be represented in three types of layered neural networks.  The aim is to support recognition of the concepts when partial information about the concepts is presented, and also when some of the neurons in the network might fail.  Our failure model involves initial random failures.
The three types of networks are:  feed-forward networks with high connectivity, feed-forward networks with low connectivity, and layered networks with low connectivity and with both forward edges and "lateral" edges within layers.  
In order to achieve fault-tolerance, the representations all use multiple representative neurons for each concept.

We show how recognition can work in all three of these settings, and quantify how the probability of correct recognition depends on several parameters, including the number of representatives and the neuron failure probability.
%
We also discuss how these representations might be learned, in all three types of networks.  For the feed-forward networks, the learning algorithms are similar to ones used in~\cite{LM21}, whereas for networks with lateral edges, the algorithms are generally inspired by work on the assembly calculus~\cite{DBLP:conf/innovations/Legenstein0PV18,PapadimitriouVempala,PVMM20}.
\end{abstract}

\section{Introduction}

We are interested in the general problem of representing hierarchically-structured concepts in layered Spiking Neural Networks (SNNs), in a way that supports concept recognition when partial information is presented.
This work is motivated by computer vision, and by considering how structured concepts might be represented in the brain.

Lynch and Mallmann-Trenn began this work in~\cite{LM21} with a study of simple embeddings of tree-structured concept hierarchies in feed-forward networks.
These embeddings had one representative ($rep$) neuron for each concept, located at a network layer corresponding to the level of the concept in the concept hierarchy.
We focused in~\cite{LM21} on how such embeddings might be learned, using systematic bottom-up presentation of the concepts, and simple Hebbian learning rules.
We also described how the learned representations can be used to recognize hierarchical concepts, given partial information about the concepts.

We continued in~\cite{DBLP:conf/sirocco/LynchM23} with single-$rep$ representations in networks that included feedback edges, and considered a generalization of the tree-structured hierarchies to allow a small amount of overlap between sets of child concepts.
In~\cite{DBLP:conf/sirocco/LynchM23}, we focused on recognition.

In this paper, we continue this effort by studying representations that use multiple $rep$ neurons to represent each concept.
We understand that realistic models of the brain should not use only a single $rep$ neuron per concept.
Real brains use groups of neurons, sometimes called \emph{assemblies}, to represent concepts;
see, for example, \cite{DBLP:conf/innovations/Legenstein0PV18,PapadimitriouVempala, PVMM20} for  neuroscience background on assemblies as well as theoretical models for assembly formation.
The main advantage of multiple neurons is \emph{fault-tolerance}:  multiple neurons provide redundancy so that the memory of a concept can survive some neuron failures.

Thus, we consider representations of hierarchically-structured concepts in layered SNNs, using multiple $rep$ neurons for each concept. 
We continue to represent a concept hierarchy using an embedding as in~\cite{LM21}, only now each concept has a fixed number $m > 1$ of $reps$, at a network layer corresponding to the level of the concept.  
Our intention is that using multiple $reps$ should provide some fault-tolerance during the recognition process.
We consider three types of layered SNNs:  feed-forward networks with high connectivity, feed-forward networks with low connectivity, and layered networks with low connectivity and with both feed-forward edges and edges within layers.
Our failure model involves initial random neuron failures.
Specifically, we assume that each neuron fails initially, with a small probability $q$.
We assume that failures of different neurons are independent.

In this paper, we focus mainly on concept recognition when partial information is presented.
We describe how recognition can work in all three types of networks, and quantify how the probability of correct recognition depends on several parameters, including the number $m$ of $reps$, and the neuron failure probability $q$.
As one might expect, the probability of correct recognition increases with an increase in the number of $reps$ and decreases with an increase in the failure probability.  
The proofs use Chernoff and union bounds.

The paper also discusses informally how these representations might be learned, in all three types of networks.  For the feed-forward networks, the learning algorithms are similar to ones used in~\cite{LM21}, whereas for networks with lateral edges, the algorithms are generally inspired by work on assembly calculus~\cite{DBLP:conf/innovations/Legenstein0PV18,PapadimitriouVempala,PVMM20}.

\paragraph{Roadmap:}
We define the concept hierarchy model in Section~\ref{sec: concepts}, along with a notion of "support" that we use for describing partial information about concepts.
We then describe the general SNN modeling framework in Section~\ref{sec: networks}, and
define the recognition problem in Section~\ref{sec: prob-recog}.

Sections~\ref{sec: recog-ff} begins our work on fault-tolerant recognition.
Here we consider recognition in feed-forward networks with high connectivity.
Namely, we use the simple assumption that all $m$ of the $reps$ of every child of a concept $c$ are connected to all the $reps$ of $c$, using edges with weight $1$.
This section contains results saying that recognition with partial information works correctly in feed-forward networks with high connectivity, even if some randomly-chosen neurons fail.
Our main result of this section describes some conditions under which recognition will occur, with high probability that depends on the number $m$ of $reps$ and the probability $q$ of failure of individual neurons.
We contrast this result with another result giving conditions under which recognition is guaranteed not to occur.

Section~\ref{sec: recog-ff-low} extends these result to feed-forward networks with lower connectivity, expressed by saying that, for each $rep$ $v$ of a concept $c$, and for every child $c'$ of $c$, at least a certain fraction $a m$ of the $reps$ of $c'$ are connected to $v$.   
%
Section~\ref{sec: recog-lat-low} extends the results further, this time for networks with low connectivity that include some lateral edges.
The presence of lateral edges allows us to weaken the connectivity assumption to some extent, compared to Section~\ref{sec: recog-ff-low}, allowing somewhat fewer edges from children's $reps$ to parents' $reps$, but compensating for the missing edges with lateral edges.
This approach is inspired by work on the assembly calculus~\cite{DBLP:conf/innovations/Legenstein0PV18,PapadimitriouVempala,PVMM20}.

Sections~\ref{sec: learning} and~\ref{sec: learning-lateral} discuss informally how the representations of this paper might be learned, in feed-forward networks and networks with some lateral edges, respectively.
For feed-forward networks, the learning algorithms are similar to ones used in~\cite{LM21}, whereas for networks with lateral edges, the algorithms are generally inspired by work on assembly calculus~\cite{DBLP:conf/innovations/Legenstein0PV18,PapadimitriouVempala,PVMM20}.
We conclude in Section~\ref{sec: conclusions} with some discussion and suggestions for future work.

\section{Concept Model}
\label{sec: concepts}

We define concept hierarchies as in~\cite{LM21}.  
In general, we think of a concept hierarchy as containing all the concepts that have been learned by an organism over its lifetime.
However, when we consider recognition of concepts later in the paper, we will focus on a particular concept within the hierarchy, and not worry about the entire hierarchy.


\subsection{Preliminaries}

Throughout the paper, in referring to concept hierarchies, we use the following parameters:
\begin{itemize}
\item $\ell_{max}$, a positive integer, representing the maximum level number for the concepts that we consider.
\item $n$, a positive integer, representing the total number of lowest-level concepts that we consider.
\item $k$, a positive integer, representing the number of top-level concepts in any concept hierarchy, and also the number of child concepts for each concept whose level is $\geq 1$.\footnote{Using the same value of $k$ everywhere is a simplification, which we have made in order to simplify the analysis}
\item $r_1, r_2$, reals in $[0,1]$, with $r_1 \leq r_2$; these represent thresholds for noisy recognition.
\end{itemize}

We assume a universal set $D$ of \emph{concepts}, partitioned into disjoint sets $D_{\ell}$,  $0 \leq \ell \leq \lmax$.
We refer to any particular concept $c \in D_{\ell}$ as a \emph{level} $\ell$ \emph{concept}, and write $level(c) = \ell$.
Here, $D_0$ represents the most basic concepts and $D_{\lmax}$ the highest-level concepts.
We assume that $|D_0| = n$.

\subsection{Concept hierarchies}

A \emph{concept hierarchy} $\mathcal C$ consists of a subset $C$ of $D$, together with a $children$ function.  For each $\ell$, $0 \leq \ell \leq \lmax$, we define $C_{\ell}$ to be $C \cap D_{\ell}$, that is, the set of level $\ell$ concepts in $\mathcal C$.
For each concept $c \in C_{\ell}$, $1 \leq \ell \leq \ell_{max}$, we designate a nonempty set $children(c) \subseteq C_{\ell-1}$.
We call each $c' \in children(c)$ a \emph{child} of $c$.
We assume the following properties.

\begin{enumerate}
\item
$|C_{\lmax}| = k$; that is, the number of top-level concepts is exactly $k$.
\item
For any $c \in C_{\ell}$, where $1 \leq \ell \leq \lmax$, we have that $|children(c)| = k$; that is, the degree of any internal node in the concept hierarchy is exactly $k$.
\item
For any two distinct concepts $c$ and $c'$ in $C_{\ell}$, where $1 \leq \ell \leq \lmax$, we have that $children(c) \cap children(c') = \emptyset$; that is, the sets of children of different concepts at the same level are disjoint.\footnote{Thus, we allow no overlap between the sets of children of different concepts. We study overlap in~\cite{DBLP:conf/sirocco/LynchM23}.}
\end{enumerate}
Thus, a concept hierarchy ${\mathcal C}$ is a forest with $k$ roots and height $\lmax$.  Of course, this is a drastic simplification of any real concept hierarchy, but the uniform structure makes the algorithms easier to analyze.

We extend the $children$ notation recursively by defining a concept $c'$ to be a $descendant$ of a concept $c$ if either $c' = c$, or $c'$ is a child of a descendant of $c$.
%
We write $descendants(c)$ for the set of descendants of $c$.
Let $leaves(c) = descendants(c) \cap C_0$, that is, all the level 0 descendants of $c$.

\subsection{Support}
\label{sec:support}

Now we define which sets of level $0$ concepts should trigger recognition of higher-level concepts.  This is our way of describing partial information about higher-level concepts.

We fix a particular concept hierarchy $\mathcal C$, with its concept set $C$ partitioned into $C_0,\ldots,C_{\lmax}$.
For any given subset $B$ of the universal set $D_0$ of level $0$ concepts, and any real number $r \in [0,1]$, we define the set $supp_r(B)$ of concepts in $C$.
This is intended to represent the set of concepts $c \in C$, at all levels, that have enough of their leaves present in $B$ to support recognition of $c$. 
The notion of "enough" here is defined recursively, based on having an $r$-fraction of children supported for every concept at every level.

\FloatBarrier
\begin{definition}[\textbf{Supported}]
\label{def:support}
Given $B \subseteq D_0$, define the following sets of concepts at all levels, recursively:
\begin{enumerate}
\item
$B(0) = B \cap C_0$. 
\item
For $1 \leq \ell \leq \ell_{max}$, $B(\ell)$ is the set of all concepts $c \in C_{\ell}$ such that $|children(c) \cap B(\ell - 1)|  \geq r k$. 
\end{enumerate}
Define $supp_r(B)$ to be $\bigcup_{0 \leq \ell \leq \lmax} B(\ell)$.  
\end{definition}

\section{Layered Network Model}
\label{sec: networks}

In this paper, we consider two types of layered SNNs:  feed-forward networks, in which all edges point from neurons in one layer to neurons in the next-higher layer, and networks that, in addition to forward edges, include "lateral" edges between neurons in the same layer, for layers $\geq 1$.
The lateral edges are inspired by work on the assembly calculus~\cite{DBLP:conf/innovations/Legenstein0PV18,PapadimitriouVempala,PVMM20}; our layers correspond to "areas" in those papers.
%

We will also consider neuron failures during the recognition process.
For our failure model, we consider initial stopping failures: if a neuron fails, it never performs any activity, that is, it never updates its state and never fires. 

\subsection{Preliminaries}

Throughout the paper, in referring to our networks, we use the following parameters:
\begin{itemize}
     \item
     $\ell'_{max}$, a positive integer, representing the maximum number of a layer in the network.
     \item
     $n$, a positive integer, representing the number of distinct level $0$ concepts the network can handle;  This is the same as $n$ in the concept model, where it represents the total number of level $0$ concepts to be handled.
     \item 
     $m$, a positive integer, representing the number of $reps$ we will assume for each concept.
     \item
     $\tau$, a real number, representing the firing threshold for neurons.
\end{itemize}

\subsection{Network structure}

Our networks are directed graphs consisting of neurons arranged in layers, with forward edges directed from each layer to the next-higher layer.  In our feed-forward networks, those are the only edges.  In our networks with lateral edges, we also have edges from neurons in a layer to other neurons in the same layer.

Specifically, a network $\mathcal{N}$ consists of a set $N$ of neurons, partitioned into disjoint sets $N_{\ell}, 0 \leq \ell \leq \ell'_{max}$, which we call \emph{layers}.
We refer to any particular neuron $u \in N_{\ell}$ as a \emph{layer} $\ell$ \emph{neuron}, and write $layer(u) = \ell$.
We assume that each layer contains exactly $n \cdot m$ neurons, that is, $|N_\ell |= n \cdot m$ for every $\ell$.
We refer to the $n \cdot m$ layer $0$ neurons as \emph{input neurons}.

We assume total connectivity between successive layers, that is, each neuron in $N_{\ell}$, $0 \leq \ell \leq \ell'_{max} - 1$, has an outgoing edge to each neuron in  $N_{\ell+1}$.  In our feed-forward networks, these are the only edges.
In our networks with lateral edges, we also assume total connectivity within each layer, that is, each neuron in $N_{\ell}$, $1 \leq \ell \leq \ell'_{max}$, has an outgoing edge to each other neuron in the same set $N_{\ell}$.

Corresponding to each concept $c \in D_0$, we assume a designated size-$m$ set $reps(c)$ of neurons in $N_0$.  These sets are disjoint for different $c$.  These are the input representations of $c$.
For any $B \subseteq D_0$, we define $reps(B) = \bigcup_{b \in B} reps(b)$.
That is, $reps(B)$ is the set of all $reps$ of concepts in $B$.
%

In later sections, we will consider extensions of the $reps$ function from level $0$ concepts to higher-level concepts.  Establishing such higher-level $reps$ will be the responsibility of a learning algorithm.

\subsection{Neuron states}

We assume that the state of each neuron consists of several state components.
In this paper, all neurons have the following state components:
\begin{itemize}
    \item \emph{failed}, with values in  $\{0,1\}$; this indicates whether or not the neuron has failed.
    \item \emph{firing}, with values in  $\{0,1\}$; this indicates whether or not the neuron is currently firing, where $1$ indicates that it is firing and $0$ indicates that it is not firing.
\end{itemize}

We denote the \emph{firing} component of neuron $u$ at integer time $t$ by $firing^u(t)$ and the \emph{failed} component of neuron $u$ at time $t$ by $failed^u(t)$.  In this paper, failures occur only at the start, that is, each $failed^u$ component is constant over time.  We will maintain the invariant that $failed = 1$ implies $firing = 0$, that is, a failed neuron does not fire.

Each non-input neuron $u \in N_{\ell}$, $1 \leq \ell \leq \ell'_{max}$, has two additional state components:
\begin{itemize}
\item
\emph{weight}, a real-valued vector in $[0,1]^n$ representing the current weights of all incoming edges; these can be edges from neurons at the next-lower level or the same level.
\item
\emph{engaged}, with values in $\{0,1\}$, indicating whether the neuron is currently able to learn new weights.  
\end{itemize}
We denote these two components of non-input neuron $u$ at time $t$ by $weight^u(t)$ and $engaged^u(t)$, respectively.

\subsection{Network operation}
\label{sec: neuron-transitions}

The network operation is determined by the behavior of the individual neurons.
We distinguish between input neurons and non-input (internal and output) neurons.

\paragraph{Input neurons:}
If $u$ is an \emph{input neuron}, then it has only two state components, $failed$ and $firing$.
Since $u$ is an input neuron, we assume that the values of both $failed$ and $firing$ are 
controlled by the network's environment and not by the network itself; that is, the value of $failed^u(t)$ and $firing^u(t)$ are set by some external force, which we do not model explicitly.
Since we assume initial stopping failures, the value of $failed$ is the same at every time $t$, that is, $failed^u(t) = failed^u(0)$ for every $t \geq 0$.
We assume that if an input neuron fails, it never fires, that is, $failed^u(0) = 1$ implies that $firing^u(t) = 0$ for every $t \geq 0$.

\paragraph{Non-input neurons:}
If $u$ is a \emph{non-input neuron}, then it has four state components, $failed$, $firing$, $weight$, and $engaged$.
The value of $failed^u$ is set by an external force, as for input neurons.  Again, since we assume initial stopping failures, the value of $failed$ is the same at every time $t$.
A non-input neuron $u$ that fails never fires, that is, $failed^u(0) = 1$ implies that $firing^u(t) = 0$ for every $t \geq 0$.

For a non-input neuron $u$ that does not fail, the value of $firing^u(0)$ is determined by the initial network setting, whereas the value of $firing^u(t)$, $t \geq 1$, is determined by $u$'s incoming \emph{potential} and its \emph{activation function}.
To define the potential, let $x^u(t)$ denote the vector of $firing$ values of $u$'s incoming neighbor neurons at time $t$.
These are all the nodes in the layer below $u$ plus (for our model with lateral edges) all the nodes in the same layer as $u$. 
Then the potential for time $t$, $pot^u(t)$, is given by the dot product of the $weight$ vector and incoming firing pattern at neuron $u$ at time $t-1$, that is, 
\[
pot^u(t) = weight^u(t-1) \cdot x^u(t-1) = \sum_j weight^u_j(t-1) x^u_j(t-1)j,
\]
where $j$ ranges over the set of incoming neighbors.
The activation function, which defines whether or not neuron $u$ fires at time $t$, is then defined by:
\[ firing^u(t) =  \begin{cases}
1 & \text{if $pot^u(t) \geq \tau$}, \\
0 & \text{otherwise},
\end{cases}\]
where $\tau$ is the assumed firing threshold.

For non-input neurons that fail, we assume that $engaged(t) = 0$ for every $t$.
For a non-failed input neuron $u$, we assume that the value of $engaged^u$ is controlled by an external force, which may arise from outside the network or from another part of the network, such as a "Winner-Take-All" sub-network.

For a non-input neuron $u$, the value of $weight^u(0)$ is determined by the initial network setting.
For a non-input neuron $u$ that fails, the $weight$ vector remains unchanged, $weight^u(t) = weight^u(0)$ for every $t \geq 0$.

A non-input neuron $u$ that does not fail and is engaged at time $t \geq 1$ determines $weight^u(t)$ according to a Hebbian-style learning rule, based on the $weight$ vector and incoming firing pattern at time $t-1$.
In our previous work~\cite{LM21}, we assumed Oja's learning rule.  
That is, if $engaged^u(t) = 1$, then (using vector notation for $weight^u$ and $x^u$):
\begin{equation}\label{eq:Oja} 
    \text{\emph{Oja's rule}:  $weight^u(t) = weight^u(t-1) + \rho \ pot(t) \cdot  (  x^u(t-1) -   pot(t)\cdot weight^u(t-1) )$,} 
\end{equation}
where $\rho$ is an assumed learning rate.
Thus, the weight vector is adjusted by an additive amount that is proportional to the 
potential, and depends on the incoming firing pattern, with a negative adjustment that depends on the potential and the prior weights.\footnote{
However, in Sections~\ref{sec: learning} and~\ref{sec: learning-lateral}, we will require variations on this rule.  In particular, we would like our learning rules to achieve exactly $1$ and exactly $0$ as the final edge weights, instead of the approximate, scaled versions achieved by the rule above.}
A non-input neuron $u$ that does not fail but is not engaged does not change its weights, that is, $weight^u(t) = weight^u(t-1)$.

During execution, the network proceeds through a sequence of \emph{configurations}, where each configuration specifies a state for every neuron in the network, that is, values for all the state components of every neuron.

\section{Problem Definitions}
\label{sec: prob-recog}

In this section, we define the recognition problem formally, and the learning problem less formally.  
The definitions for recognition include specifications of how the input is presented, and what outputs should be produced, with what probabilities.
The definitions are slightly different for feed-forward networks and networks with lateral edges, because of differences in timing requirements.  

\subsection{Preliminaries}

Our recognition problems use the following new parameters:
\begin{itemize}
\item 
$q$, the failure probability of each neuron; define $p = 1-q$ to be the probability of a neuron not failing, i.e., the probability of a neuron \emph{surviving}.
\item  
For each $\ell$, $0 \leq \ell \leq \ell_{max}$, $\delta_{\ell} \in [0,1]$; these  are the \emph{recognition probability} parameters, representing the probability of insufficient firing during the recognition process.
\item  $\epsilon \in [0,1]$; this is the \emph{recognition approximation} parameter, representing a fraction of $rep$ neurons that might not fire.
\end{itemize}

For all versions of the recognition problem, we assume that the failures of all neurons (including the input neurons) are determined initially, independently with a small probability $q$.
That means that the $failed$ flag of each failed neuron $u$ is set to $1$ at time $0$, $failed^u(0) = 1$, and remains $1$ thereafter, $failed^u(t) = 1$ for every $t$.
We do not here consider failures during learning, but leave that for future work.

We consider a particular concept hierarchy $\mathcal C$, with concept hierarchy notation as defined in Section~\ref{sec: concepts}. 
For our networks, we use notation as defined in Section~\ref{sec: networks}.

Our recognition problem definitions rely on the following definition of how a particular set $B$ of level $0$ concepts is ``presented'' to the network.  This involves firing exactly the input neurons that represent these level $0$ concepts.

\begin{definition}
\label{def:presented}[\textbf{Presented}]
If $B \subseteq D_0$ and $t \geq 0$, then we say that $B$ is \emph{presented at time} $t$ (in some particular network execution) exactly if the following holds.
For every layer $0$ neuron $u$:
\begin{enumerate}
    \item  If $u \in reps(B)$ and $failed^u = 0$, then $u$ fires at time $t$.
    \item  If $u \notin reps(B)$ or $failed^u = 1$, then $u$ does not fire at time $t$.
\end{enumerate}
\end{definition}

\subsection{Recognition in feed-forward networks}

In this subsection and the next, we define what it means for network $\mathcal N$ to recognize concept hierarchy $\mathcal C$.
This section assumes $\mathcal N$ is a feed-forward network and the next section allows $\mathcal N$ to include lateral edges.

In each case, the definition assumes that every concept $c \in C$, at every level $> 0$, has a size-$m$ set of representing neurons, $reps(c)$.

In both of our problem statements, we require that, for each level $\ell$ concept $c$, $0 \leq \ell \leq \ell_{max}$, that is $r_2$-supported by $B$,  with probability at least $1-\delta_{\ell}$, at least $(1 - \epsilon)m$ of the $reps(c)$ neurons should fire.
On the other hand, if $c$ is not $r_1$-supported by $B$, then none of the $reps(c)$ neurons should fire.\footnote{This is an asymmetric definition, because the firing result is probabilistic but the non-firing result is not.
We could make the non-firing result probabilistic, but that would mean relying on the occurrence of failures, which is a bad idea.
}

\begin{definition}[\textbf{Recognition problem for feed-forward networks}]
\label{def: recog-ff}
Network $\mathcal N$ $(r_1,r_2)$-\emph{recognizes} $\mathcal C$
provided that, for each concept $c \in C$ with $level(c) = \ell$, $0 \leq \ell \leq \ell_{max}$, there is a designated set of $m$ neurons, $reps(c)$, such that the following holds.
Assume that $B \subseteq C_0$ is presented at time $0$.  
Then:
\begin{enumerate}
\item
\emph{When $reps(c)$ neurons should fire:}
If $c \in supp_{r_2}(B)$, then with probability at least $1-\delta_{\ell}$, at least $(1 - \epsilon) m$ of the neurons $v \in reps(c)$ neurons fire, each such $v$ at time $layer(v)$.
\item
\emph{When $reps(c)$ neurons should not fire:}
If $c \notin supp_{r_1}(B)$, then no neuron $v \in reps(c)$ fires at time $layer(v)$.
\end{enumerate}
\end{definition}
Naturally, in Condition 1, we would like to minimize the values of $\delta_{\ell}$ and $\epsilon$, in terms of the other parameters.

A word about the two choices here.  
We assume that the random choices are made first, for which neurons fail.
Then the set $B$ is chosen, nondeterministically, potentially by an adversary.\footnote{This allows the set $B$ to be chosen with knowledge of what neurons have failed, which sounds bad.  But later in the paper, "survival lemmas" will tell us that with high probability, every concept has a high probability of having many non-failed $rep$ neurons.}

\subsection{Recognition in networks with lateral edges}

The second definition assumes that $\mathcal N$ is a network with lateral edges.
For this, the timing is harder to pin down, so we formulate the definition a bit differently.
We assume here that the input is presented continuously from some time $0$ onward, and we allow some flexibility in when the $reps(c)$ neurons are required to fire.

\begin{definition}[\textbf{Recognition problem for networks with lateral edges}]
\label{def: recog-lat}
Network $\mathcal N$ $(r_1,r_2)$-\emph{recognizes} $\mathcal C$
provided that, for each concept $c \in C$, with $level(c) = \ell$, $0 \leq \ell \leq \ell_{max}$, there is a designated set of $m$ neurons,
$reps(c)$, such that the following holds.
Assume that $B \subseteq C_0$ is presented at all times $\geq 0$.  
Then:
\begin{enumerate}
\item
\emph{When $reps(c)$ neurons should fire:}
If $c \in supp_{r_2}(B)$, then with probability at least $1-\delta_{\ell}$, at least $(1 - \epsilon) m$ of the neurons in $reps(c)$ fire at all times starting from some time $\geq 0$.
\item
\emph{When $reps(c)$ neurons should not fire:}
If $c \notin supp_{r_1}(B)$, then no neuron in $reps(c)$ fires at any time.
\end{enumerate}
\end{definition}
Again, we assume that the failures are determined first, randomly, and then the set $B$ is chosen, nondeterministically.

\subsection{Learning}

We do not give formal definitions of a "learning problem".
Each solution to one of our recognition problems depends on a particular representation for the concept hierarchy, which we will describe for our three types of networks in Sections~\ref{sec: representation}, \ref{sec: representation-low}, and~\ref{sec: representation-lat-low}.
The job of a learning algorithm is to produce that representation, starting from a default network configuration.
Since we do not know at this point in the paper what the representations should be, we cannot give a general definition of the learning problem here.

\section{Recognition in Feed-Forward Networks with High Connectivity}
\label{sec: recog-ff}

We begin with the simplest case, feed-forward networks with high connectivity, by which we mean total connectivity with weight $1$ edges from $reps$ of children to $reps$ of their parents.\footnote{Technically, we are assuming total connectivity between adjacent layers.  To remove some edges from consideration, here and later in the paper, we simply set their weights to $0$.  That implies that they cannot contribute to their target neuron's incoming potential.}
Total connectivity from $reps$ of children to $reps$ of parents is what we assumed in~\cite{LM21}.
However, that paper considered only the special case of $m = 1$ $reps$ per concept, and did not consider neuron failures.
Now we consider larger values of $m$ and (initial) neuron failures.  

This section can be considered a "warm-up" for Sections~\ref{sec: recog-ff-low} and~\ref{sec: recog-lat-low}.
In those sections, we introduce the complication of partial connectivity, i.e., missing edges, in addition to the partial information and failures considered here.

In the network of this section, we assume that $\ell'_{max} = \ell_{max}$, that is, the number of layers in the network is the same as the number of levels in the concept hierarchy.
Moreover, every concept $c \in C$ has exactly $m$ representing neurons, $reps(c)$, and every neuron $v \in reps(c)$ has $layer(v) = level(c)$.

The main result of this section is Theorem~\ref{thm: main}, which asserts correctness of recognition.
It has two parts, a positive result in Theorem~\ref{thm: main-firing} and a negative result in Theorem~\ref{thm: main-non-firing}, corresponding to the two parts of the recognition definition.
The positive result yields particular values of the recognition approximation parameter $\epsilon$ and the recognition probability parameters $\delta_{\ell}$, $0 \leq \ell \leq \ell_{max}$, in terms of the given parameters $k$, $\ell_{max}$, $m$, and $p$.
In particular, it shows how the values of $\epsilon$ and $\delta_{\ell}$ depend on $m$ and $p$.


\subsection{Parameters}
\label{sec: parameter-values}

We introduce a new parameter:
\begin{itemize}
    \item $\zeta$, a concentration parameter for Chernoff bounds, as in Appendix~\ref{app: prob}.
\end{itemize}
We also assume the following values for parameters introduced earlier:
\begin{itemize}
    \item  $\tau$, the firing threshold for all neurons at layers $\geq 1$, is defined to be $r_2 k m p (1 - \zeta)$.
    \item $\epsilon$, the recognition approximation parameter, is defined to be $1 - p(1 - \zeta)$, where $p = 1 - q$ is the survival probability for each individual neuron.
    \item For every $\ell$, $0 \leq \ell \leq \ell_{max}$,  the recognition probability parameter $\delta_{\ell}$ is defined to be 
    \[\frac{k^{\ell+1} - 1}{k-1} \cdot \exp\left(-\frac{m p \zeta^2}{2}\right).\]
    In particular, $\delta_{0} = \exp\left(-\frac{m p \zeta^2}{2}\right)$.
\end{itemize}

\subsection{The representation of concept hierarchy $\mathcal C$}
\label{sec: representation}.

We define a feed-forward network $\mathcal N$ that is specially tailored to recognize concept hierarchy $\mathcal C$.  This is for a representation that has already been learned; we will discuss learning for this case in Section~\ref{sec: learning}.  
The learning approach will be a variant of what we used in the noise-free algorithm in~\cite{LM21}.

The strategy is simply to embed the digraph induced by $\mathcal C$ in the network $\mathcal N$, using a redundant representation.
For every level $\ell$ concept $c$ of $\mathcal C$, we assume a set $reps(c)$ of $m$ designated neurons in layer $\ell$ of the network; all such sets are disjoint.
Let $R$ be the set of all representatives of concepts, that is, $R = \bigcup_c reps(c)$.
Let $rep^{-1}$ denote the corresponding inverse function that gives, for every $u \in R$, the concept $c \in C$ with $u \in reps(c)$.

We define the weights of the edges as follows.
If $v$ is any layer $\ell$ neuron,  $1 \leq \ell \leq \lmax$, and $u$ is any layer $\ell-1$ neuron, then we define the edge weight $weight(u,v)$ to be:\footnote{Note that these simple weights of $1$ and $0$ do not correspond precisely to what is achieved by the noise-free learning algorithm in~\cite{LM21}.
There, the algorithm approaches the following weights, in the limit:
\[ weight(u,v) = 
\begin{cases}
\frac{1}{\sqrt{k}} & \text{ if } u,v \in R \text{ and } rep^{-1}(u) \in children(rep^{-1}(v)),
\\ 
0 & \text{ otherwise.}
\end{cases} \]
We proved in~\cite{LM21} that, after a certain number of steps of the noise-free learning algorithm, the weights are sufficiently close to these limits to guarantee that network $\mathcal N$ $(r_1,r_2)$-recognizes $\mathcal C$.  In~\cite{DBLP:conf/sirocco/LynchM23}, we showed formally that results for the simple case of weights in $\{0,1\}$ carry over to the approximate, scaled version that is actually produced by the learning algorithm of~\cite{LM21}.}
\[ weight(u,v) = 
\begin{cases}
1 & \text{ if } u,v \in R \text{ and } rep^{-1}(u) \in children(rep^{-1}(v)),
\\ 
0 & \text{ otherwise.}
\end{cases}
\]
Thus, we have total connectivity with weight $1$ edges from $reps$ of children to $reps$ of their parents.  All other edges from layer $\ell-1$ to layer $\ell$ have weight $0$.\footnote{We can think of these edges as "missing" but as noted earlier, they are present, but with weight $0$.}

As noted in Section~\ref{sec: parameter-values}, we define the threshold $\tau$ for any non-input neuron $u$ to be $r_2 k m p (1 - \zeta)$.
The $p$ and $(1 - \zeta)$ factors are included in order to accommodate the possible failure of some of the neurons at layer $layer(u) - 1$.


Finally, we assume that the initial $firing$ and $engaged$ values for all the non-input neurons are $0$.
This completely defines network $\mathcal N$ and, together with the nondeterministic choice of input set $B$ and random choice of failed neurons, determines its behavior.



\subsection{The main theorem}

We consider a particular concept hierarchy $\mathcal C$, with concept hierarchy notation as defined in Section~\ref{sec: concepts}.
We define the parameters as in Section~\ref{sec: parameter-values} and 
define the network $\mathcal N$ as in Section~\ref{sec: representation}.
In terms of all of these, our main result is:

\begin{theorem}
\label{thm: main}
Assume that $r_1 \leq r_2 p (1 - \zeta)$ and $r_2 > 0$.
Then $\mathcal N (r_1,r_2)$-recognizes $\mathcal C$.
\end{theorem}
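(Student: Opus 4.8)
The plan is to prove the two parts of Definition~\ref{def: recog-ff} separately, by induction on the level $\ell$ of the concept $c$, since the firing behavior of $reps(c)$ at time $\ell$ depends only on the firing behavior at layer $\ell-1$, which in turn is governed by the $reps$ of concepts at level $\ell-1$. I would first state and prove the negative part (Theorem~\ref{thm: main-non-firing}), which is the easier direction and does not involve probability. The claim is: if $c \notin supp_{r_1}(B)$, then no $v \in reps(c)$ fires at time $layer(v)$. I would argue by induction that if $c \notin B(\ell)$, then at most (something like) $r_1 k m$ of the $reps$ of children of $c$ fire at time $\ell-1$ — in fact I want the stronger bookkeeping statement that the number of firing $reps$ of a concept $c'$ is $0$ when $c' \notin B(level(c'))$ and at most $mp(1-\zeta) \cdot$(number of supported descendants)... actually the clean inductive hypothesis is: a $rep$ of $c'$ fires at time $level(c')$ only if $c' \in supp_{r_1}(B)$, i.e. $c' \in B(level(c'))$. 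Given that, if $c \notin B(\ell)$ then $|children(c) \cap B(\ell-1)| < r_1 k$, so the number of firing layer-$(\ell-1)$ neurons feeding into any fixed $v \in reps(c)$ is at most $(r_1 k - 1) m < r_1 k m$. The potential at $v$ is therefore at most $r_1 k m$, and we need this to be $< \tau = r_2 k m p(1-\zeta)$; this is exactly where the hypothesis $r_1 \le r_2 p(1-\zeta)$ is used (with $r_2 > 0$ ensuring $\tau > 0$, so the base case $\ell = 0$ where no spurious input neuron fires is consistent). Hence $v$ does not fire.

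Next I would prove the positive part (Theorem~\ref{thm: main-firing}): if $c \in supp_{r_2}(B)$ then with probability $\ge 1 - \delta_\ell$ at least $(1-\epsilon)m$ neurons of $reps(c)$ fire at time $\ell$. Here the key inductive hypothesis should be a \emph{per-concept} survival-and-firing statement: for each supported concept $c'$ at level $\ell'$, the event $G_{c'}$ that "at least $(1-\epsilon)m = p(1-\zeta)m$ of the $reps(c')$ neurons fire at time $\ell'$" holds with probability $\ge 1 - \delta_{\ell'}$, and moreover these bad events can be controlled by a union bound over the subtree rooted at $c$. For a fixed $v \in reps(c)$ with $v$ non-failed: condition on the event that every child $c' \in children(c) \cap B(\ell-1)$ (there are $\ge r_2 k$ of them) has $\ge p(1-\zeta)m$ firing $reps$ at time $\ell-1$. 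Each such firing $rep$ $u$ contributes weight $1$ to $pot^v(\ell)$, so $pot^v(\ell) \ge r_2 k \cdot p(1-\zeta) m = \tau$, hence $v$ fires — provided $v$ itself did not fail. So among the $m$ neurons of $reps(c)$, each fires with probability (at least) $p$ independently, and a Chernoff bound (from Appendix~\ref{app: prob}, with parameter $\zeta$) gives that at least $p(1-\zeta)m$ of them fire except with probability $\le \exp(-mp\zeta^2/2) = \delta_0$. Combining with a union bound over all concepts in $descendants(c)$ at levels $0,\dots,\ell$ — there are $\sum_{j=0}^{\ell} k^{\ell - j} \le \frac{k^{\ell+1}-1}{k-1}$ of them (counting the subtree, or more simply bounding the whole level structure) — yields the stated $\delta_\ell = \frac{k^{\ell+1}-1}{k-1}\exp(-mp\zeta^2/2)$. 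The base case $\ell = 0$ is immediate from Definition~\ref{def:presented} and the Chernoff bound on how many of the $m$ input $reps$ of $c$ survived.

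The main obstacle, and the point requiring the most care, is handling the probabilistic dependencies cleanly across levels: the firing of $reps(c)$ depends on which $reps$ of descendants survived, and these survival events, while independent across neurons, get reused as we move up the tree. The right way to organize this is to do the union bound over \emph{all} the relevant concepts first — define the "good" event $\mathcal{G}$ to be the intersection, over every concept $c'$ in the subtree below $c$ (including $c$ itself), of "at least $p(1-\zeta)m$ of $reps(c')$ survived", bound $\Pr[\neg\mathcal{G}]$ by the union bound $\frac{k^{\ell+1}-1}{k-1}\delta_0$, and then argue \emph{deterministically} that on $\mathcal{G}$ the firing propagates bottom-up exactly as desired (each surviving $rep$ of a supported concept fires, because its potential meets $\tau$). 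A secondary subtlety is the order of quantifiers flagged in the paper: failures are fixed first and $B$ is adversarial afterward, so the Chernoff bounds must be stated for every concept simultaneously (a survival lemma), which is precisely what the union bound above accomplishes. Once that scaffolding is in place, the arithmetic ($\tau = r_2kmp(1-\zeta)$, $\epsilon = 1 - p(1-\zeta)$, the threshold comparison $r_1 km < \tau$) is routine.
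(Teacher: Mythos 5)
Your proposal is correct and follows essentially the same route as the paper: split the claim into the firing guarantee and the non-firing guarantee, encapsulate all probability in a survival event (Chernoff bound per concept plus a union bound over the $\frac{k^{\ell+1}-1}{k-1}$ descendants of $c$), and then, conditioned on that good event, argue deterministically by bottom-up induction that every surviving $rep$ of a supported concept meets the threshold $\tau = r_2 k m p(1-\zeta)$, while the non-firing direction is the deterministic threshold comparison using $r_1 \leq r_2 p(1-\zeta)$. Your final paragraph's reorganization (union bound over all descendants first, then deterministic propagation) is exactly the paper's structure and correctly supersedes the slightly looser "each $rep$ fires independently with probability $p$" phrasing earlier in your sketch.
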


We prove Theorem~\ref{thm: main} in two parts, following the definition of $(r_1,r_2)$-recognition.
Theorem~\ref{thm: main-firing} expresses the firing guarantee, and Theorem~\ref{thm: main-non-firing} expresses the non-firing guarantee.  Theorems~\ref{thm: main-firing} and~\ref{thm: main-non-firing} together immediately imply Theorem~\ref{thm: main}.

\subsection{Survival lemmas}
\label{sec: prob-lemmas1}

The next section, Section~\ref{sec: guar-firing}, is devoted to proving a probabilistic guarantee for firing, in Theorem~\ref{thm: main-firing}.  In this section, we give two preliminary lemmas.

According to our definitions, for the simple case of feed-forward networks with high connectivity, the only probabilistic behavior of the network arises from the independent initial failures of the individual neurons in the network.  In this section, we give two simple lemmas that provide bounds on the number of surviving (i.e., non-failing) neurons in the $reps$ sets.
These serve to encapsulate all of the probabilistic reasoning that is needed to prove Theorem~\ref{thm: main-firing}.

The first lemma bounds the probability of "sufficient survival" for the $reps$ of a particular concept $c$.
Recall that we are assuming that each neuron fails, independently, with probability $q = 1-p$.

\begin{lemma}
\label{lem: reps-for-one-concept}
For every concept $c$ in the concept hierarchy $\mathcal C$, at any level, the probability that the number of surviving neurons in $reps(c)$ is $\leq m p(1 - \zeta)$ is at most $\delta_0 = \exp\left(-\frac{m p \zeta^2}{2}\right)$.   
\end{lemma}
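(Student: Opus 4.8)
The plan is to apply a standard multiplicative Chernoff bound to the number of surviving neurons in $reps(c)$. Fix a concept $c$ (at any level); by the construction of $\mathcal N$, $reps(c)$ is a fixed set of exactly $m$ neurons. For each $v \in reps(c)$ let $X_v$ be the indicator of the event that $v$ survives, i.e. $X_v = 1 - failed^v(0)$. By the failure model, the $X_v$ are independent Bernoulli random variables, each equal to $1$ with probability $p = 1-q$. Let $S = \sum_{v \in reps(c)} X_v$ be the number of survivors, so $\mathbb{E}[S] = mp$.

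The key step is to bound $\Pr[S \leq mp(1-\zeta)] = \Pr[S \leq (1-\zeta)\,\mathbb{E}[S]]$ using the lower-tail Chernoff bound. The multiplicative Chernoff bound (in the form presumably stated in Appendix~\ref{app: prob}, which I am free to invoke) gives $\Pr[S \leq (1-\zeta)\mu] \leq \exp(-\mu \zeta^2 / 2)$ for $\mu = \mathbb{E}[S] = mp$ and $\zeta \in [0,1]$. Substituting $\mu = mp$ yields exactly $\Pr[S \leq mp(1-\zeta)] \leq \exp(-mp\zeta^2/2) = \delta_0$, which is the claim. I would also note the harmless subtlety that the bound is stated with ``$\leq mp(1-\zeta)$'' whereas some Chernoff statements give strict inequality; this is absorbed by the standard form, or one observes that $\Pr[S \le (1-\zeta)\mu] \le \Pr[S < (1-\zeta')\mu]$ for slightly smaller $\zeta'$ and takes a limit, but in practice the Appendix version will be stated in the convenient form so no extra care is needed.

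Honestly there is no real obstacle here — this lemma is a direct packaging of one Chernoff application, and the only thing to get right is matching notation: confirming that $reps(c)$ has size exactly $m$, that failures are independent across these $m$ neurons with probability $q$ each (both stated in the model section), and that $\delta_0$ was defined in Section~\ref{sec: parameter-values} precisely as $\exp(-mp\zeta^2/2)$. The ``hard part,'' such as it is, is merely citing the correct form of the Chernoff bound from the appendix and plugging in $\mu = mp$; everything else is bookkeeping.
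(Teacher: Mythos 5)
Your proposal is correct and matches the paper's own proof essentially verbatim: both define the number of survivors in $reps(c)$ as a sum of independent Bernoulli($p$) variables with mean $\mu = mp$ and apply the lower-tail Chernoff bound from Appendix~\ref{app: prob} to obtain $\exp\left(-\frac{mp\zeta^2}{2}\right) = \delta_0$. The extra remarks on independence and the non-strict inequality are fine but not needed beyond what the paper states.
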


\begin{proof}
We use a Chernoff bound.  
The probability of failure for each neuron in $reps(c)$ is $q$.  
To upper-bound the probability that no more than $m p(1-\zeta)$ of the $m$ $reps$ survive, we use a lower tail bound, in the form given in Appendix~\ref{app: prob}:

\[
\text{For any } \zeta \in [0,1], \Pr[X \leq (1-\zeta) \mu] \leq \exp\left(-\frac{\mu \zeta^2}{2}\right).
\]
\noindent
In our case, the mean $\mu$ is equal to $m p$.
So this says that $Pr[X \leq m p (1 - \zeta)] \leq \exp\left(-\frac{m p \zeta^2}{2}\right)$.
\end{proof}

We extend Lemma~\ref{lem: reps-for-one-concept} to give an overall probability of "sufficient survival" for all descendants of $c$:

\begin{lemma}
\label{lem:  reps-for-all-descendants}
Consider a particular concept $c$ with $level(c) = \ell$, $0 \leq \ell \leq \ell_{max}$.
Let $A$ be the event that, for some descendant $c'$ of $c$ (possibly $c$ itself), the number of surviving neurons in $reps(c')$ is $\leq m p (1 - \zeta)$. 
Then $Pr(A) \leq \delta_{\ell} = \frac{k^{\ell+1} - 1}{k-1} \cdot \exp\left(-\frac{m p \zeta^2}{2}\right)$.
\end{lemma}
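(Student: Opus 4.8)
The plan is to reduce Lemma~\ref{lem: reps-for-all-descendants} to Lemma~\ref{lem: reps-for-one-concept} via a union bound over all descendants of $c$. First I would count the descendants: since $\mathcal C$ is a forest in which every internal node has exactly $k$ children and $c$ has level $\ell$, the descendants of $c$ form a complete $k$-ary tree of height $\ell$ rooted at $c$. Hence the number of descendants (including $c$ itself) is $\sum_{i=0}^{\ell} k^i = \frac{k^{\ell+1}-1}{k-1}$.

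Next, for each descendant $c'$ of $c$, let $A_{c'}$ be the event that the number of surviving neurons in $reps(c')$ is $\leq mp(1-\zeta)$. By Lemma~\ref{lem: reps-for-one-concept}, $\Pr(A_{c'}) \leq \exp\left(-\frac{mp\zeta^2}{2}\right)$ for each such $c'$, since that lemma applies to any concept in $\mathcal C$ at any level. The event $A$ in the statement is exactly $\bigcup_{c'} A_{c'}$, where the union ranges over all descendants $c'$ of $c$. Applying the union bound gives
\[
\Pr(A) \;\leq\; \sum_{c'} \Pr(A_{c'}) \;\leq\; \frac{k^{\ell+1}-1}{k-1} \cdot \exp\left(-\frac{mp\zeta^2}{2}\right) \;=\; \delta_{\ell},
\]
which is the claimed bound.

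This argument is essentially routine; there is no real obstacle. The only point requiring a small amount of care is the descendant count: one must use properties 2 and 3 of the concept hierarchy (every internal concept has exactly $k$ children, and sibling subtrees are disjoint) to conclude that the descendants of $c$ form a disjoint union over levels $\ell, \ell-1, \dots, 0$ with $k^{\ell-j}$ concepts at level $j$ — equivalently $k^i$ concepts at "depth $i$ below $c$" for $0 \le i \le \ell$ — so that the total is the geometric sum above and matches the coefficient in the definition of $\delta_\ell$ from Section~\ref{sec: parameter-values}. The independence of neuron failures is not even needed here, since the union bound requires no independence; it was only used inside Lemma~\ref{lem: reps-for-one-concept}.
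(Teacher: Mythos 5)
Your proposal is correct and matches the paper's own argument: count the descendants of $c$ as $1 + k + \cdots + k^{\ell} = \frac{k^{\ell+1}-1}{k-1}$ and apply a union bound over them together with Lemma~\ref{lem: reps-for-one-concept}. The extra remarks on the tree structure and on not needing independence for the union bound are fine but add nothing beyond the paper's proof.
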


\begin{proof}
The number of descendants of $c$ is exactly $1 + k + k^2 + \cdots + k^{\ell} = \frac{k^{\ell+1} - 1}{k-1}$.  A union bound, taken over all of these descendants, and Lemma~\ref{lem: reps-for-one-concept} yield the result. 
\end{proof}

To give an idea of the size of $Pr(A)$ in Lemma~\ref{lem:  reps-for-all-descendants}, we imagine that the parameters $k$ and $\ell$ of the concept hierarchy are small, say $4$.
The number $m$ of $reps$ of a concept should be fairly large, say $320$.
The survival probability $p$ for an individual neuron should be close to $1$, say $\frac{31}{32}$.
The concentration parameter $\zeta$ might be around $1/4$.
With these values, the probability in this last result is approximately
$4^4 \cdot \exp\left(-\frac{(320)(31/32) (1/4)^2}{2}\right)$, or $256 \cdot \exp\left(-9.7\right)$.
This last expression evaluates to approximately $256 / 16000$, or $.016$.

\subsection{Proof of guaranteed firing}
\label{sec: guar-firing}

Now we prove our main firing theorem.

\begin{theorem}
\label{thm: main-firing}
Assume that a set $B \subseteq C_0$ is presented at time $0$.
Let $c$ be a level $\ell$ concept, $0 \leq \ell \leq \ell_{max}$, that is in $supp_{r_2}(B)$.
Then with probability at least $1 - \delta_{\ell}$, at least $m p(1 - \zeta)$ of the neurons in $reps(c)$ fire at time $\ell$.
\end{theorem}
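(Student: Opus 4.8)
The plan is to isolate all of the probabilistic reasoning into a single application of the survival lemma, and then argue the firing conclusion deterministically, level by level. Fix the level-$\ell$ concept $c$, and let $A$ be the event of Lemma~\ref{lem:  reps-for-all-descendants}: for some descendant $c'$ of $c$ (possibly $c$ itself), the number of surviving neurons in $reps(c')$ is at most $mp(1-\zeta)$. That lemma gives $\Pr(A)\le\delta_\ell$, so it suffices to prove the deterministic implication: if $\neg A$ holds, then at least $mp(1-\zeta)$ neurons of $reps(c)$ fire at time $\ell$. Note that $\neg A$ says exactly that every descendant $c'$ of $c$ has more than $mp(1-\zeta)$ non-failed reps.

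To establish that implication I would induct on the level $j$, $0\le j\le\ell$, proving: for every descendant $c'$ of $c$ with $level(c')=j$ and $c'\in supp_{r_2}(B)$, every non-failed neuron of $reps(c')$ fires at time $j$ --- which, combined with $\neg A$, yields at least $mp(1-\zeta)$ firing reps of $c'$ at time $j$. For $j=0$, such a $c'$ lies in $B(0)=B\cap C_0\subseteq B$, so Definition~\ref{def:presented} makes every non-failed rep of $c'$ fire at time $0$. For $j\ge1$: since $c'\in B(j)$, Definition~\ref{def:support} gives $|children(c')\cap B(j-1)|\ge r_2 k$; each such child $c''$ is a level-$(j-1)$ descendant of $c$ lying in $supp_{r_2}(B)$, so by the inductive hypothesis at least $mp(1-\zeta)$ reps of $c''$ fire at time $j-1$, and since the $reps$ sets of distinct concepts are disjoint, the total number of firing reps of children of $c'$ at time $j-1$ is at least $(r_2 k)\,mp(1-\zeta)$. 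For any non-failed $v\in reps(c')$, the weight definition of $\mathcal N$ makes $weight(u,v)=1$ exactly when $u$ is a rep of a child of $c'$ and $0$ otherwise, so
\[
pot^v(j)=\sum_u weight(u,v)\,firing^u(j-1)\ \ge\ (r_2 k)\,mp(1-\zeta)\ =\ \tau,
\]
and hence $v$ fires at time $j$. Taking $j=\ell$ and $c'=c$ gives the implication, and $\Pr(A)\le\delta_\ell$ then yields the theorem.

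I expect the one point requiring care to be this potential computation. Because the embedding puts weight $1$ only on edges from a rep of $c'$ to reps of children of $c'$ (and weight $0$ everywhere else), $pot^v(j)$ is simply the count of firing reps, at time $j-1$, among the children of $c'$; contributions from reps of children that happen not to be in $supp_{r_2}(B)$, or from any other layer-$(j-1)$ neuron, can only raise $pot^v(j)$, so the bound $(r_2 k)\,mp(1-\zeta)=\tau$ obtained from the supported children alone suffices --- with equality against $\tau$ being precisely why the threshold was chosen to be $r_2 k m p(1-\zeta)$. The remaining ingredients are routine: the base case is immediate from the ``presented'' definition, disjointness of $reps$ sets is part of the network model, and all the probability is already discharged by Lemma~\ref{lem:  reps-for-all-descendants} (itself built from the Chernoff bound of Lemma~\ref{lem: reps-for-one-concept}).
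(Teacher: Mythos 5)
Your proposal is correct and follows essentially the same route as the paper: condition on the complement of the event in Lemma~\ref{lem:  reps-for-all-descendants} (all probability discharged there via Chernoff plus union bound), then argue deterministically by induction on the level that supported descendants have enough firing reps, using the threshold $\tau = r_2 k m p(1-\zeta)$ exactly as you do. The only cosmetic difference is that you phrase the inductive statement as ``every non-failed rep fires'' and then intersect with survival, whereas the paper states the count directly; the content is identical.
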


\begin{proof}
Fix $B \subseteq C_0$, and fix a particular concept $c$ at level $\ell$ that is in $supp_{r_2}(B)$.
Consider the event $\overline{A}$, where $A$ is the event defined in the statement of Lemma~\ref{lem:  reps-for-all-descendants}.
The event $\overline{A}$ says that every descendant $c'$ of $c$ (including $c$ itself) has more than $mp(1 - \zeta)$ surviving neurons in $reps(c')$.
By Lemma~\ref{lem:  reps-for-all-descendants}, we have that $Pr(A) \leq \delta_{\ell}$.

For the rest of the proof, we condition on event $\overline{A}$, that is, we assume that every descendant $c'$ of $c$ has more than $mp(1-\zeta)$ surviving neurons in $reps(c')$.  
We prove that, under this assumption, at least $m p(1-\zeta)$ of the neurons in $reps(c)$ fire. This follows from the following Claim:

\begin{claim}  
\label{claim: firing}
If $c'$ is a descendant of $c$ with $level(c') = \ell'$, $0 \leq \ell' \leq \ell$, with $c' \in supp_{r_2}(B)$, then at least $m p (1 - \zeta)$ of the neurons in $reps(c')$ fire at time $\ell'$. 
\end{claim}


\noindent
\emph{Proof of Claim~\ref{claim: firing}:}
We prove this by induction on $\ell'$.

\emph{Base:} $\ell' = 0$: 
Consider any level $0$ descendant $c'$ of $c$ that is supported by $B$.  Then $c' \in B$.
Then since we are conditioning on the event $\overline{A}$, more than $mp(1 - \zeta)$ of the neurons in $reps(c')$ survive, and so they fire at time $0$.
This suffices for the base case.

\emph{Inductive step:} $1 \leq \ell' \leq \ell$:
Suppose that the claim holds for $\ell'-1$ and consider a level $\ell'$ descendant $c'$ of $c$ that is supported by $B$.  Then by the definition of "supported", at least $r_2 k$ children of $c'$ at level $\ell'-1$ are supported by $B$.  By the inductive hypothesis, each of these children has at least $m p (1 - \zeta)$ of its $reps$ firing at time $\ell'-1$.  So there are at least $r_2 k m p (1 - \zeta)$ level $\ell'-1$ neurons that are $reps$ of children of $c'$ and fire at time $\ell'-1$.  

By our connectivity assumption, all of these firing $reps$ are connected to each neuron $v$ in $reps(c')$ with weight $1$ edges, so the firing threshold of each such $v$, which is $r_2 k m p (1 - \zeta)$, is met.  
So if $v$ does not fail, it will fire at time $\ell'$; thus, in the absence of failures, all $m$ of the neurons in $reps(c')$ would fire at time $\ell'$.  
However, we do need to take account of failures.

Because we are conditioning on event $\overline{A}$, we know that more than $m p (1 - \zeta)$ of the neurons in $reps(c')$ survive.  Since all of their firing thresholds are met, all of these surviving neurons fire at time $\ell'$.
That means that more than $m p (1 - \zeta)$ of the neurons in $reps(c')$ fire at time $\ell'$, which suffices for the inductive step. \\
\emph{End of proof of Claim~\ref{claim: firing}.}

Instantiating Claim~\ref{claim: firing} with $c' = c$ yields that, conditioned on $\overline{A}$, at least $m p (1 - \zeta)$ of the neurons in $reps(c)$ fire at time $\ell$. 
Since $Pr(\overline{A}) \geq 1 - \delta_{\ell}$,
the Theorem follows.
\end{proof}

An interesting aspect of our proof is that all of the probabilistic arguments are encapsulated within the survival lemmas in Section~\ref{sec: prob-lemmas1}; these assert certain levels of survival with high probability.  The arguments in this section are entirely non-probabilistic, showing firing guarantees assuming certain minimum levels of survival.

\subsection{Proof of guaranteed non-firing}
\label{sec: guar-non-firing}

And now we prove our main non-firing theorem.

\begin{theorem}
\label{thm: main-non-firing}
Assume that $r_2 > 0$ and $r_1 \leq r_2 p (1 - \zeta)$.
Assume that a set $B \subseteq C_0$ is presented at time $0$.
Let $c$ be a level $\ell$ concept, $0 \leq \ell \leq \ell_{max}$, that is not in $supp_{r_1}(B)$.
Then none of the neurons in $reps(c)$ fire at time $\ell$.
\end{theorem}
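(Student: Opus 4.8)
The plan is to prove the contrapositive by induction on the layer number: I claim that for every $\ell$ with $0 \le \ell \le \ell_{max}$, if $v$ is a layer $\ell$ neuron that fires at time $\ell$, then $v \in R$ and $rep^{-1}(v) \in supp_{r_1}(B)$. This immediately gives the theorem, since any $v \in reps(c)$ lies at layer $level(c) = \ell$ with $rep^{-1}(v) = c$, so its firing at time $\ell$ would put $c$ into $supp_{r_1}(B)$, contradicting the hypothesis. In contrast with Theorem~\ref{thm: main-firing}, this argument uses no probability at all: the only way $p$ and $\zeta$ enter is through the threshold $\tau = r_2 k m p(1-\zeta)$, and all I need about it is $\tau > 0$ (true since $r_2 > 0$ and $p(1-\zeta) > 0$).

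For the base case $\ell = 0$: if a layer $0$ neuron $v$ fires at time $0$, the contrapositive of clause~2 of Definition~\ref{def:presented} gives $v \in reps(B)$, so $v \in reps(b)$ for some $b \in B$; disjointness of the $reps$ sets gives $rep^{-1}(v) = b \in B(0) \subseteq supp_{r_1}(B)$. For the inductive step, assume the claim at level $\ell - 1 \ge 0$ and let $v$ be a layer $\ell$ neuron firing at time $\ell$, so $pot^v(\ell) \ge \tau > 0$. If $v \notin R$ then every edge into $v$ has weight $0$, forcing $pot^v(\ell) = 0$, a contradiction; hence $v \in R$, say $c' = rep^{-1}(v) \in C_\ell$. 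The only edges into $v$ with nonzero weight originate at neurons $u \in R$ with $rep^{-1}(u) \in children(c')$, all of which lie in layer $\ell - 1$; thus $pot^v(\ell)$ is exactly the number of such $u$ firing at time $\ell-1$. By the inductive hypothesis each such $u$ has $rep^{-1}(u) \in supp_{r_1}(B) \cap C_{\ell-1} = B(\ell-1)$, so it is a $rep$ of a concept in $children(c') \cap B(\ell-1)$; since each concept has exactly $m$ $reps$, $pot^v(\ell) \le m\, |children(c') \cap B(\ell-1)|$.

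Chaining $m\,|children(c') \cap B(\ell-1)| \ge pot^v(\ell) \ge \tau = r_2 k m p(1-\zeta)$ and dividing by $m$ gives $|children(c') \cap B(\ell-1)| \ge r_2 k p(1-\zeta) \ge r_1 k$, the last inequality being precisely the hypothesis $r_1 \le r_2 p(1-\zeta)$; hence $c' \in B(\ell) \subseteq supp_{r_1}(B)$ by Definition~\ref{def:support}, closing the induction. I do not expect a serious obstacle; the only care needed is bookkeeping — that non-$rep$ neurons contribute nothing to any potential (so a positive threshold rules them out immediately), that each child concept contributes at most $m$ firing neurons, and that in a feed-forward network $pot^v(\ell)$ depends only on layer $\ell-1$ firing at time $\ell-1$, which is exactly what the inductive hypothesis governs. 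The one spot where the precise form of the hypothesis $r_1 \le r_2 p(1-\zeta)$ is essential is that final inequality, which explains why the bound must carry that slack.
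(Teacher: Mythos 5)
Your proof is correct and follows essentially the same route the paper intends: the paper only sketches this result by deferring to Lemma~5.2 of its predecessor, noting that the lowered threshold $r_2 k m p(1-\zeta)$ forces the stronger separation $r_1 \leq r_2 p(1-\zeta)$, and your layer-by-layer induction (firing at layer $\ell$ implies being a $rep$ of an $r_1$-supported concept, via the count of at most $m$ firing $reps$ per supported child) is precisely the argument being referenced, with the separation hypothesis entering exactly where you say it does. The only implicit assumption worth flagging is $p(1-\zeta) > 0$, which you need for $\tau > 0$ to rule out firing of non-$rep$ neurons; this holds throughout the paper's parameter regime.
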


Notice that, unlike Theorem~\ref{thm: main-firing}, Theorem~\ref{thm: main-non-firing}
does not mention probabilities or a fraction of the $reps$; this is because we are ignoring failures for the lower bound.  
If we were to take account of failures here, it would mean that we are counting on a certain number of failures to occur (with high probability), which seems like a bad idea.
This is because failure estimates are generally designed to be conservative, that is, to be upper bounds on the number of failures, rather than precise estimates.

\begin{proof}
(Sketch:)  
The argument is similar to that of Lemma 5.2 in our previous paper~\cite{DBLP:conf/sirocco/LynchM23}.
The main change arises from the lower threshold, $r_2 k m p (1 - \zeta)$.   This requires a change in the assumption about the separation between $r_1$ and $r_2$.
Namely, we now require $r_1 \leq r_2 p (1 - \zeta)$ instead of just $r_1 \leq r_2$ as in~\cite{DBLP:conf/sirocco/LynchM23}.
\end{proof}



\section{Recognition in Feed-Forward Networks with Low Connectivity}
\label{sec: recog-ff-low}

Our connectivity assumptions in Section~\ref{sec: recog-ff} are probably stronger than what occurs in real brains.  
In this section, we weaken the connectivity assumptions, at the cost of somewhat weaker firing guarantees. 
Formally, this weakening means that some of the edges from $reps$ of children to $reps$ of parents may have weight $0$.

Specifically, we reduce the number of assumed weight $1$ edges from $reps$ of children of a concept $c$ to a $rep$ of $c$ from $k m$ to $a k m$, for some constant $a$ between $0$ and $1$.  We imagine that $a$ should be fairly large, say approximately $3/4$.

However, we must be careful about which edges we exclude.  The danger is that the choice of the presented set $B$ might interact badly with the choice of excluded edges; for example, the excluded edges might all happen to involve $reps$ of concepts supported by $B$.
We must formulate the connectivity assumption in a way that takes account of such interactions.

Our approach is to refine the connectivity assumptions so that they are expressed in terms of the individual children of a given concept $c$, rather than in terms of all the children of $c$ in the aggregate. 
Specifically, for every $rep$ $v$ of $c$, and every child $c'$ of $c$, $v$ will have a large fraction of connections from $reps$ of $c'$.
The resulting finer-granularity connectivity assumptions allow us to ignore the dependencies between the choices of $B$ and of the missing edges, and obtain reasonable firing guarantees.




In the network of this section, we again assume that $\ell'_{max} = \ell_{max}$, that is, the number of layers in the network is the same as the number of levels in the concept hierarchy.
Moreover, every concept $c \in C$ has exactly $m$ representing neurons, $reps(c)$, and every neuron $v \in reps(c)$ has $layer(v) = level(c)$.

The rest of this section follows the general outline of Section~\ref{sec: recog-ff}.

\subsection{Parameters}
\label{sec: parameter-values-low}

We introduce the following new parameter:
\begin{itemize}
\item $a \in [0,1]$, a coefficient to describe a lower bound on connectivity.
\end{itemize}

We also assume the following values for parameters introduced earlier.  Compared to Section~\ref{sec: recog-ff}, we are lowering the firing threshold $\tau$ and modifying the $\delta_{\ell}$ probabilities.
\begin{itemize}
    \item  $\tau$, the firing threshold for all neurons at layers $\geq 1$, is defined to be $a r_2 k m p (1 - \zeta)$.
    \item $\epsilon$, the recognition approximation parameter, is defined to be $1 - p(1 - \zeta)$. 
    \item $\delta_{0} = \exp\left(-\frac{m p \zeta^2}{2}\right)$, and for $\ell \geq 1$, 
    \[ \delta_{\ell} = \frac{k^{\ell+1} - 1}{k-1} \cdot \exp\left(-\frac{m p \zeta^2}{2}\right) + \frac{k^{\ell} - 1}{k-1} \cdot k m \cdot \exp\left(-\frac{am p \zeta^2}{2}\right).\]
\end{itemize}

\subsection{The representation of concept hierarchy $\mathcal C$}
\label{sec: representation-low}

As in Section~\ref{sec: recog-ff}, we define a feed-forward network $\mathcal N$ that is specially tailored to recognize concept hierarchy $\mathcal C$. 
As before, we embed the digraph induced by $\mathcal C$ in the network $\mathcal N$, using a redundant representation.
Thus, for every level $\ell$ concept $c$ of $\mathcal C$, we assume a set $reps(c)$ of $m$ designated neurons in layer $\ell$ of the network; all such sets are disjoint.
Let $R = \bigcup_c reps(c)$.
Let $rep^{-1}$ denote the corresponding inverse function that gives, for every $u \in R$, the concept $c \in C$ with $u \in reps(c)$.

Now we assume that, for any level $\ell$ concept $c$ in the concept hierarchy, $1 \leq \ell \leq \ell_{max}$, for every neuron $v$ in $reps(c)$, and for every child $c'$ of $c$, neuron $v$ has at least $a m$ incoming weight $1$ edges from $reps$ of $c'$. 
More precisely, for each child $c'$ of $c$ and $v \in reps(c)$, define $inc(v,c')$ to be the set of all neurons in $reps(c')$ that have weight $1$ edges connecting them to $v$.
Then our connectivity assumption says that $|inc(v,c')| \geq a m$.
All other edges from layer $\ell-1$ to layer $\ell$ have weight $0$.

As noted in Section~\ref{sec: parameter-values-low}, we define the threshold $\tau$ for any non-input neuron to be $a r_2 k m p (1 - \zeta)$.
We assume that the initial $firing$ and $engaged$ values for all the non-input neurons are $0$.


\subsection{The main theorem}

We consider a particular concept hierarchy $\mathcal C$, with concept hierarchy notation as defined in Section~\ref{sec: concepts}.
We define the parameters as in Section~\ref{sec: parameter-values-low} and 
define the network $\mathcal N$ as in Section~\ref{sec: representation-low}.
Our main result is:

\begin{theorem}
\label{thm: main-low}
Assume that $r_1 \leq a r_2 p (1 - \zeta)$ and $r_2 > 0$.
Then $\mathcal N (r_1,r_2)$-recognizes $\mathcal C$.
\end{theorem}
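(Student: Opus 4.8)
The plan is to follow the template of Section~\ref{sec: recog-ff}: I would split Theorem~\ref{thm: main-low} into a positive (firing) statement and a negative (non-firing) statement, in direct analogy with Theorems~\ref{thm: main-firing} and~\ref{thm: main-non-firing}, and note that the two together are exactly the two clauses of $(r_1,r_2)$-recognition. The non-firing statement is a purely deterministic induction on level and barely changes from before; the positive statement needs one genuinely new ingredient: a second family of survival events that tracks not only how many $reps$ of each concept survive, but how many survive \emph{inside each incoming-edge set} $inc(v,c')$. This second family is what lets us avoid reasoning about dependencies between the adversary's choice of $B$ and the (fixed) choice of missing edges.

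For the positive part I would first define the bad event $A$, relative to the target concept $c$ with $level(c)=\ell$, as the union of: (i) for some descendant $c'$ of $c$, $reps(c')$ has at most $mp(1-\zeta)$ survivors; and (ii) for some descendant $c'$ of level $\geq 1$, some $v\in reps(c')$, and some child $c''$ of $c'$, the set $inc(v,c'')$ has at most $amp(1-\zeta)$ survivors. A type-(i) event has probability at most $\exp(-mp\zeta^2/2)$ by Lemma~\ref{lem: reps-for-one-concept}; a type-(ii) event has probability at most $\exp(-amp\zeta^2/2)$ by the same Chernoff lower tail applied to the $|inc(v,c'')|\geq am$ independent trials, using $amp(1-\zeta)\leq|inc(v,c'')|\,p(1-\zeta)$. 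Counting $\frac{k^{\ell+1}-1}{k-1}$ descendants for (i), and $\frac{k^{\ell}-1}{k-1}$ descendants of level $\geq 1$ each with $km$ choices of $(v,c'')$ for (ii), a union bound gives $\Pr(A)\leq\delta_\ell$ with exactly the $\delta_\ell$ of Section~\ref{sec: parameter-values-low}. Conditioning on $\overline{A}$, I would then prove by induction on $\ell'$ that every descendant $c'$ of $c$ with $c'\in supp_{r_2}(B)$ and $level(c')=\ell'$ has \emph{all} of its surviving $reps$ firing at time $\ell'$; the base case $\ell'=0$ is immediate from "presented". For the step, fix such a $c'$ and any $v\in reps(c')$: each of the $\geq r_2 k$ supported children $c''$ contributes, by the inductive hypothesis and by $\overline{A}$, at least $amp(1-\zeta)$ firing neurons lying in $inc(v,c'')$ (the firing neurons of $reps(c'')$ are exactly its survivors, and $inc(v,c'')\subseteq reps(c'')$), so $pot^v(\ell')\geq r_2 k\cdot amp(1-\zeta)=\tau$ and every surviving $v$ fires. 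Instantiating with $c'=c$ yields the firing theorem.

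For the non-firing part I would assume $r_1\leq a r_2 p(1-\zeta)$ and $r_2>0$ and show by induction on $level(c)$ that if $c\notin supp_{r_1}(B)$ then no neuron of $reps(c)$ fires at time $level(c)$; this parallels the sketch after Theorem~\ref{thm: main-non-firing}, with the threshold changed from $r_2 k m p(1-\zeta)$ to $a r_2 k m p(1-\zeta)$ and the separation condition correspondingly strengthened. The base case $level(c)=0$ holds since $c\notin B$. For the step, $c\notin supp_{r_1}(B)$ means fewer than $r_1 k$ children of $c$ are $r_1$-supported; by the inductive hypothesis only $reps$ of those children can fire at time $level(c)-1$, and since the weight-$1$ in-edges of any $v\in reps(c)$ all come from $reps$ of children of $c$ with $|inc(v,c'')|\leq m$, we get $pot^v(level(c))\leq(\#\text{ supported children})\cdot m< r_1 k m\leq a r_2 k m p(1-\zeta)=\tau$. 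Since $r_2>0$ makes $\tau>0$, the threshold is not met and $v$ does not fire.

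The main obstacle is a bookkeeping one rather than a conceptual one: setting up the second survival family correctly and performing the potential estimate \emph{per $rep$ $v$} rather than in the aggregate. The fine-grained connectivity assumption $|inc(v,c')|\geq am$ for \emph{every} child $c'$ is exactly what decouples the choice of $B$ from the choice of missing edges, so that each supported child is guaranteed to push at least $amp(1-\zeta)$ firing signals into each $v$ no matter which children end up supported; once this is phrased correctly the induction is routine. The only arithmetic worth checking is that the two union-bound terms add up to the stated $\delta_\ell$ (they do, using $\frac{k^{\ell+1}-1}{k-1}-k^{\ell}=\frac{k^{\ell}-1}{k-1}$) and that the hypothesis $r_1\leq a r_2 p(1-\zeta)$ is precisely the inequality needed in the non-firing threshold comparison.
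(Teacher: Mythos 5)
Your proposal is correct and follows essentially the same route as the paper: the same split into a firing theorem and a non-firing theorem, the same two families of survival events ($reps(c')$ survivors and $inc(v,c'')$ survivors) combined by the same union-bound count to give $\delta_\ell$, and the same per-$rep$, per-child induction on level conditioned on the good event. The only differences are presentational: you carry the single invariant ``all surviving $reps$ fire'' where the paper separately tracks ``threshold met'' and ``survivors fire,'' and you spell out the non-firing induction that the paper only sketches by reference to prior work.
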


We prove Theorem~\ref{thm: main-low} in two parts.
Theorem~\ref{thm: main-firing-low} expresses the firing guarantee, and Theorem~\ref{thm: main-non-firing-low} expresses the non-firing guarantee.  Theorems~\ref{thm: main-firing-low} and~\ref{thm: main-non-firing-low} together immediately imply Theorem~\ref{thm: main-low}.

\subsection{Survival lemmas}

We begin with two lemmas analogous to Lemmas~\ref{lem: reps-for-one-concept} and~\ref{lem:  reps-for-all-descendants}.  Proofs are as before.

\begin{lemma}
\label{lem: reps-for-one-concept-low}
For every concept $c$ in the concept hierarchy $\mathcal C$, at any level, the probability that the number of surviving neurons in $reps(c)$ is $\leq m p (1 - \zeta)$ is at most $\delta_0 = \exp\left(-\frac{m p \zeta^2}{2}\right)$. 
\end{lemma}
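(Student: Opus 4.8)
The plan is to repeat verbatim the Chernoff-bound argument used for Lemma~\ref{lem: reps-for-one-concept}, since nothing about the concept $c$, the size $m$ of $reps(c)$, or the failure model has changed between the two sections. First I would fix an arbitrary concept $c$ and let $X$ denote the number of surviving (non-failed) neurons in $reps(c)$. Because $|reps(c)| = m$ and each of these $m$ neurons fails independently with probability $q = 1-p$, the random variable $X$ is a sum of $m$ independent Bernoulli$(p)$ indicators, so $X$ has mean $\mu = mp$.

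Next I would invoke the lower-tail Chernoff bound in the form stated in Appendix~\ref{app: prob}: for any $\zeta \in [0,1]$, $\Pr[X \leq (1-\zeta)\mu] \leq \exp\!\left(-\tfrac{\mu \zeta^2}{2}\right)$. Substituting $\mu = mp$ gives $\Pr[X \leq mp(1-\zeta)] \leq \exp\!\left(-\tfrac{mp\zeta^2}{2}\right) = \delta_0$, which is exactly the claimed bound. Since $c$ was arbitrary, this holds for every concept in $\mathcal C$ at every level.

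There is no real obstacle here: the only thing to check is that the hypotheses of the cited tail bound are met (independence of failures, which is assumed in Section~\ref{sec: prob-recog}, and $\zeta \in [0,1]$, which is part of the parameter conventions in Section~\ref{sec: parameter-values}), and that the mean is correctly identified as $mp$. The statement and proof are identical to those of Lemma~\ref{lem: reps-for-one-concept}; the only reason it is restated is that the companion descendant-survival lemma in this section will use the revised $\delta_\ell$ from Section~\ref{sec: parameter-values-low}, so the base case $\delta_0$ needs to be on record in this context as well.
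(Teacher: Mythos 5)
Your proposal is correct and is exactly the paper's argument: the paper proves this lemma by noting that the proof is identical to that of Lemma~\ref{lem: reps-for-one-concept}, namely the lower-tail Chernoff bound applied to the sum of $m$ independent survival indicators with mean $\mu = mp$. Nothing further is needed.
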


\begin{lemma}
\label{lem:  reps-for-all-descendants-low}
Consider a particular concept $c$ with $level(c) = \ell$, $0 \leq \ell \leq \ell_{max}$.
Let $A$ be the event that, for some descendant $c'$ of $c$ (possibly $c$ itself), the number of surviving neurons in $reps(c')$ is $\leq m p (1 - \zeta)$. 
Then $Pr(A) \leq \frac{k^{\ell+1} - 1}{k-1} \cdot \exp\left(-\frac{m p \zeta^2}{2}\right)$.
\end{lemma}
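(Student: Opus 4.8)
The plan is to follow the proof of Lemma~\ref{lem: reps-for-all-descendants} essentially verbatim, since weakening the connectivity assumption does not touch the only probabilistic ingredient, namely the per-concept survival bound of Lemma~\ref{lem: reps-for-one-concept-low}. First I would count the descendants of $c$: by properties (1)--(3) of the concept hierarchy, the sub-forest rooted at $c$ is a complete $k$-ary tree of height $\ell$, so $|descendants(c)| = 1 + k + k^2 + \cdots + k^{\ell} = \frac{k^{\ell+1}-1}{k-1}$, counting $c$ itself.

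Next, for each descendant $c'$ of $c$ (including $c$), let $A_{c'}$ be the event that the number of surviving neurons in $reps(c')$ is $\leq m p(1-\zeta)$. Lemma~\ref{lem: reps-for-one-concept-low} gives $Pr(A_{c'}) \leq \exp\!\big(-\tfrac{m p \zeta^2}{2}\big)$, and this bound is identical for every $c'$ because it depends only on $m$, $p$, and $\zeta$, not on $level(c')$. Since $A = \bigcup_{c' \in descendants(c)} A_{c'}$, a union bound over the $\frac{k^{\ell+1}-1}{k-1}$ descendants yields $Pr(A) \leq \frac{k^{\ell+1}-1}{k-1}\exp\!\big(-\tfrac{m p \zeta^2}{2}\big)$, which is exactly the claimed bound. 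No independence among the events $A_{c'}$ is needed here; the independence of the initial per-neuron failures is used only inside Lemma~\ref{lem: reps-for-one-concept-low}.

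I do not expect any real obstacle: this is a routine union bound, structurally identical to Lemma~\ref{lem: reps-for-all-descendants}. The only points worth a moment's care are (i) getting the descendant count right, and (ii) observing that the sets $reps(c')$ for distinct concepts are disjoint (part of the network structure fixed in Section~\ref{sec: networks}), so that "the number of surviving neurons in $reps(c')$" is a well-defined per-concept quantity. Both are immediate from the model, so the argument stays short.
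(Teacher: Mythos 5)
Your proof is correct and matches the paper's argument exactly: the paper proves this lemma by the same descendant count $\frac{k^{\ell+1}-1}{k-1}$ and a union bound over Lemma~\ref{lem: reps-for-one-concept-low} (it simply notes ``Proofs are as before,'' referring to Lemma~\ref{lem:  reps-for-all-descendants}). Your added remarks about not needing independence of the events and about disjointness of the $reps$ sets are accurate but not essential.
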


Now we have some new lemmas, which consider separately the number of survivals among the $reps$ of each child of a concept.
Namely, we consider a single neuron $v$ in $reps(c)$ and a single child $c'$ of $c$, and give a lower bound for the number of surviving neurons in $inc(v,c')$, which is the set of neurons in $reps(c')$ that are connected to $v$ by weight $1$ edges.  Proofs again use Chernoff bounds and union bounds.

\begin{lemma}
\label{lem: one-concept-incoming-low}
Consider a particular concept $c$ with $level(c) = \ell \geq 1$.
Let $v \in reps(c)$.
Then for each child $c'$ of $c$, the probability that the number of surviving neurons in $inc(v,c')$ is $\leq amp (1-\zeta)$ is at most $\exp\left(-\frac{am p \zeta^2}{2}\right)$.
\end{lemma}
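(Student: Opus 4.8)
The plan is to mimic the proof of Lemma~\ref{lem: reps-for-one-concept} verbatim, applying the same lower-tail Chernoff bound from Appendix~\ref{app: prob} to the set $inc(v,c')$ in place of $reps(c)$. The only new wrinkle is that the connectivity assumption of Section~\ref{sec: representation-low} guarantees only $|inc(v,c')| \geq am$ rather than an exact size, so a brief monotonicity step is needed to pin down the mean.

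First I would fix $c$ with $level(c)=\ell\geq 1$, a neuron $v\in reps(c)$, and a child $c'$ of $c$, and set $s = |inc(v,c')|$, so that $s \geq am$. Let $X$ count the surviving (non-failed) neurons in $inc(v,c')$. Since each of these $s$ neurons fails independently with probability $q = 1-p$, $X$ is a sum of $s$ independent $\{0,1\}$ indicators with mean $\mu = sp \geq amp$. Then I would invoke the lower tail bound $\Pr[X \leq (1-\zeta)\mu] \leq \exp(-\mu\zeta^2/2)$. Because $\mu \geq amp$, we have $(1-\zeta)\mu \geq (1-\zeta)amp$, so the event $\{X \leq amp(1-\zeta)\}$ is contained in $\{X \leq (1-\zeta)\mu\}$, whence its probability is at most $\exp(-\mu\zeta^2/2) \leq \exp\!\left(-\frac{amp\zeta^2}{2}\right)$, the last inequality holding since $\mu\geq amp$ and $x\mapsto\exp(-x\zeta^2/2)$ is decreasing. (Equivalently, one may just restrict attention to an arbitrary size-$am$ subset of $inc(v,c')$: fewer survivors in the subset is a weaker event, so the same bound applies directly with $\mu = amp$.)

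I do not expect any genuine obstacle here; this is a one-line Chernoff computation of exactly the same flavor as Lemma~\ref{lem: reps-for-one-concept}, which is presumably why the paper says ``Proofs again use Chernoff bounds and union bounds.'' The only point worth an explicit sentence is the reduction from $|inc(v,c')| \geq am$ to a set of known size $am$, plus the usual informal treatment of $am$ as an integer.
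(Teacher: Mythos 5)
Your proposal is correct and matches the paper's own argument: the paper's proof is exactly a Chernoff lower-tail bound with mean $amp$ and concentration parameter $\zeta$, combined with a monotonicity argument to handle the fact that $|inc(v,c')|$ may exceed $am$. Your direct version (bounding via $\mu = sp \geq amp$ and using that $\exp(-x\zeta^2/2)$ is decreasing in $x$) and your parenthetical subset version are both fine; the latter is precisely the paper's stated route.
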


\begin{proof}
By Chernoff, using a mean of $amp$ and concentration parameter of $\zeta$.
Since the size of $inc(v,c')$ may be larger than $am$, we can consider any subset of $inc(v,c')$ of size exactly $am$ and use a monotonicity argument.
\end{proof}

\begin{lemma}
\label{lem: all-descendants-incoming-low}
Consider a particular concept $c$ with $level(c) = \ell \geq 1$.
Let $A'$ be the event that, for some descendant $c'$ of $c$ (possibly $c$ itself) with $level(c') \geq 1$, for some $v \in reps(c')$, and for some child $c''$ of $c'$, the number of surviving neurons in $inc(v,c'')$ is $\leq a m p (1 - \zeta)$. 
Then $Pr(A') \leq \frac{k^{\ell} - 1}{k-1} \cdot k m \cdot \exp\left(-\frac{am p \zeta^2}{2}\right)$.
\end{lemma}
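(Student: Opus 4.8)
The plan is to prove Lemma~\ref{lem: all-descendants-incoming-low} by a union bound over all the ``bad events'' indexed by triples $(c', v, c'')$, using Lemma~\ref{lem: one-concept-incoming-low} to bound each individual term. First I would fix the concept $c$ with $level(c) = \ell \geq 1$. The event $A'$ is the disjunction, over all descendants $c'$ of $c$ with $level(c') \geq 1$, over all $v \in reps(c')$, and over all children $c''$ of $c'$, of the event that the number of surviving neurons in $inc(v,c'')$ is $\leq a m p (1-\zeta)$. By Lemma~\ref{lem: one-concept-incoming-low}, each such individual event has probability at most $\exp\left(-\frac{am p \zeta^2}{2}\right)$, regardless of which $c'$, $v$, $c''$ we pick (the bound there is uniform). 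So $Pr(A')$ is at most the number of such triples times $\exp\left(-\frac{am p \zeta^2}{2}\right)$.

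Next I would count the triples. The descendants $c'$ of $c$ with $level(c') \geq 1$ are exactly the descendants of $c$ that are not leaves; since $c$ has level $\ell$, the total number of descendants at levels $0$ through $\ell$ is $\frac{k^{\ell+1}-1}{k-1}$ and the number at level $0$ (the leaves) is $k^{\ell}$, so the number of non-leaf descendants is $\frac{k^{\ell+1}-1}{k-1} - k^{\ell} = \frac{k^{\ell+1} - 1 - k^{\ell}(k-1)}{k-1} = \frac{k^{\ell}-1}{k-1}$. For each such $c'$ there are $m$ choices of $v \in reps(c')$ and $k$ choices of child $c''$. Hence the number of triples is $\frac{k^{\ell}-1}{k-1} \cdot m \cdot k$, and multiplying by the per-event bound gives exactly $\frac{k^{\ell} - 1}{k-1} \cdot k m \cdot \exp\left(-\frac{am p \zeta^2}{2}\right)$, as claimed.

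This lemma has essentially no obstacle: it is a routine union bound, entirely parallel to the step from Lemma~\ref{lem: reps-for-one-concept} to Lemma~\ref{lem:  reps-for-all-descendants}, with the only new ingredients being (i) the restriction to non-leaf descendants, which changes the counting factor from $\frac{k^{\ell+1}-1}{k-1}$ to $\frac{k^{\ell}-1}{k-1}$, and (ii) the extra factors of $m$ (choices of $v$) and $k$ (choices of child $c''$). The one point worth stating carefully is the descendant-counting identity $\frac{k^{\ell+1}-1}{k-1} - k^{\ell} = \frac{k^{\ell}-1}{k-1}$, so that the bound comes out in the stated closed form; everything else is immediate from Lemma~\ref{lem: one-concept-incoming-low} and the independence of failures is not even needed here (only the uniform per-event bound and subadditivity of probability).

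\begin{proof}
We bound $Pr(A')$ by a union bound over all triples $(c', v, c'')$ where $c'$ is a descendant of $c$ with $level(c') \geq 1$, $v \in reps(c')$, and $c''$ is a child of $c'$. By Lemma~\ref{lem: one-concept-incoming-low}, for each such triple the probability that the number of surviving neurons in $inc(v,c'')$ is $\leq a m p (1-\zeta)$ is at most $\exp\left(-\frac{am p \zeta^2}{2}\right)$.

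It remains to count the triples. The descendants of $c$ at levels $0$ through $\ell$ number $1 + k + \cdots + k^{\ell} = \frac{k^{\ell+1}-1}{k-1}$, of which the $k^{\ell}$ level-$0$ descendants (the leaves) are excluded. Hence the number of descendants $c'$ of $c$ with $level(c') \geq 1$ is
\[
\frac{k^{\ell+1}-1}{k-1} - k^{\ell} = \frac{k^{\ell+1} - 1 - k^{\ell}(k-1)}{k-1} = \frac{k^{\ell}-1}{k-1}.
\]
For each such $c'$ there are $m$ choices of $v \in reps(c')$ and $k$ choices of a child $c''$ of $c'$. Therefore the number of triples is $\frac{k^{\ell}-1}{k-1} \cdot k m$, and the union bound gives
\[
Pr(A') \leq \frac{k^{\ell} - 1}{k-1} \cdot k m \cdot \exp\left(-\frac{am p \zeta^2}{2}\right),
\]
as claimed.
\end{proof}
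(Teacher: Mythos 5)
Your proof is correct and follows essentially the same route as the paper: a union bound over all triples of non-leaf descendant $c'$, representative $v \in reps(c')$, and child $c''$, with Lemma~\ref{lem: one-concept-incoming-low} supplying the uniform per-event bound. Your explicit verification of the counting identity $\frac{k^{\ell+1}-1}{k-1} - k^{\ell} = \frac{k^{\ell}-1}{k-1}$ is a welcome bit of extra care that the paper leaves implicit.
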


\begin{proof}
The number of descendants $c'$ of $c$ with level $\geq 1$ is $\frac{k^{\ell} - 1}{k-1}$.  The number of $reps$ for each such $c'$ is $m$.  The number of children of each such $c'$ is $k$.
A union bound, taken over all of these descendants, $reps$, and children, and Lemma~\ref{lem: one-concept-incoming-low}, yield the result. 
\end{proof}

We finish with a lemma that combines the results of Lemmas~\ref{lem: reps-for-all-descendants-low} and~\ref{lem: all-descendants-incoming-low}, for concepts $c$ with $level(c) \geq 1$.  The proof uses another union bound

\begin{lemma}
\label{lem: union-of-events-low}
Consider a particular concept $c$ with $level(c) = \ell \geq 1$.
Define $A$ as in the statement of Lemma~\ref{lem:  reps-for-all-descendants-low} and
$A'$ as in the statement of Lemma~\ref{lem: all-descendants-incoming-low}.
Then \[Pr(A \cup A') \leq \delta_{\ell} = 
\frac{k^{\ell+1} - 1}{k-1} \cdot \exp\left(-\frac{m p \zeta^2}{2}\right)  + \frac{k^{\ell} - 1}{k-1} \cdot km \cdot \exp\left(-\frac{am p \zeta^2}{2}\right).\]
\end{lemma}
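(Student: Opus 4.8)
The plan is to apply a union bound to the two events $A$ and $A'$ whose probabilities have already been bounded. Specifically, Lemma~\ref{lem:  reps-for-all-descendants-low} gives $Pr(A) \leq \frac{k^{\ell+1} - 1}{k-1} \cdot \exp\left(-\frac{m p \zeta^2}{2}\right)$, and Lemma~\ref{lem: all-descendants-incoming-low} gives $Pr(A') \leq \frac{k^{\ell} - 1}{k-1} \cdot k m \cdot \exp\left(-\frac{am p \zeta^2}{2}\right)$. Since for any two events $Pr(A \cup A') \leq Pr(A) + Pr(A')$, adding these two bounds yields exactly the claimed expression for $\delta_{\ell}$ as defined in Section~\ref{sec: parameter-values-low}.

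There is essentially no obstacle here: the proof is a one-line union bound, and I would state it as such. The only thing to check is that the two prior lemmas genuinely apply to the same concept $c$ with $level(c) = \ell \geq 1$, which they do by hypothesis, and that the definitions of $A$ and $A'$ referenced in the statement are precisely those from Lemmas~\ref{lem:  reps-for-all-descendants-low} and~\ref{lem: all-descendants-incoming-low}, which is stipulated. No independence or correlation structure between $A$ and $A'$ needs to be invoked, since the union bound is insensitive to it.

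The proof I would write:

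\begin{proof}
By the union bound, $Pr(A \cup A') \leq Pr(A) + Pr(A')$.
By Lemma~\ref{lem:  reps-for-all-descendants-low}, $Pr(A) \leq \frac{k^{\ell+1} - 1}{k-1} \cdot \exp\left(-\frac{m p \zeta^2}{2}\right)$, and by Lemma~\ref{lem: all-descendants-incoming-low}, $Pr(A') \leq \frac{k^{\ell} - 1}{k-1} \cdot k m \cdot \exp\left(-\frac{am p \zeta^2}{2}\right)$.
Adding these two bounds gives the claimed value of $\delta_{\ell}$.
\end{proof}

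The purpose of this lemma within the larger argument is clear from the surrounding text: it packages all the probabilistic content needed for the low-connectivity firing theorem (the analogue of Theorem~\ref{thm: main-firing}), so that the subsequent firing argument can condition on $\overline{A \cup A'}$ and proceed entirely deterministically, exactly as in Section~\ref{sec: guar-firing}. Nothing more is required at this point.
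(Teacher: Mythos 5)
Your proposal is correct and matches the paper's approach exactly: the paper itself notes that this lemma follows by "another union bound" applied to the bounds from Lemmas~\ref{lem:  reps-for-all-descendants-low} and~\ref{lem: all-descendants-incoming-low}. Nothing further is needed.
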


To give an idea of the size of $Pr(A \cup A')$ in Lemma~\ref{lem:  union-of-events-low}, we assume, as before, that the values of $k$ and $\ell$ are both $4$,
the survival probability $p$ for an individual neuron is $\frac{31}{32}$, and
the concentration parameter $\zeta$ is $1/4$.
Now we assume that the number $m$ of $reps$ of a concept is larger than before, say $640$.
We assume that the value of $a$ is $3/4$.
With these values, the probability in this last result is bounded by the sum of two terms.
The first is approximately
$4^4 \cdot \exp\left(-\frac{(640)(31/32) (1/4)^2}{2}\right)$, or $256 \cdot \exp\left(-19.375\right)$.
The second is approximately
$4^3 \cdot 4 \cdot 640 \cdot \exp\left(-\frac{(480)(31/32) (1/4)^2}{2}\right)$, or $256 \cdot 640 \cdot \exp\left(-14.5\right)$.
The first of these two terms is negligible; the second is approximately $.083$.

\subsection{Proof of guaranteed firing}
\label{sec: guar-firing-low}

Our main firing theorem, Theorem~\ref{thm: main-firing-low}, is analogous to Theorem~\ref{thm: main-firing}.  
Notice that Theorem~\ref{thm: main-firing-low} still talks about firing of at least $m p (1- \zeta)$ of the $m$ neurons in the set $reps(c)$, which is the same fraction as in Theorem~\ref{thm: main-firing}.  
However, the probability in the statement of the theorem changes, since it uses the new definition of $\delta_{\ell}$.

\begin{theorem}
\label{thm: main-firing-low}
Assume that a set $B \subseteq C_0$ is presented at time $0$.
Let $c$ be a level $\ell$ concept, $0 \leq \ell \leq \ell_{max}$, that is in $supp_{r_2}(B)$.  Then with probability at least $1 - \delta_{\ell}$, at least $m p (1 - \zeta)$ of the neurons in $reps(c)$ fire at time $\ell$.
\end{theorem}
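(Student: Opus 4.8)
The plan is to follow the structure of the proof of Theorem~\ref{thm: main-firing}, but to condition on the larger good event $\overline{A \cup A'}$ rather than just $\overline{A}$. First I would fix $B \subseteq C_0$ and a concept $c$ at level $\ell$ with $c \in supp_{r_2}(B)$. If $\ell = 0$ the statement is immediate from Lemma~\ref{lem: reps-for-one-concept-low}, exactly as in the base case of Claim~\ref{claim: firing}, so assume $\ell \geq 1$. Let $A$ be the event of Lemma~\ref{lem:  reps-for-all-descendants-low} and $A'$ the event of Lemma~\ref{lem: all-descendants-incoming-low}; by Lemma~\ref{lem: union-of-events-low}, $Pr(A \cup A') \leq \delta_{\ell}$. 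For the remainder we condition on $\overline{A \cup A'}$, which asserts both that every descendant $c'$ of $c$ has more than $mp(1-\zeta)$ surviving neurons in $reps(c')$, and that for every descendant $c'$ of level $\geq 1$, every $v \in reps(c')$, and every child $c''$ of $c'$, the set $inc(v,c'')$ has more than $amp(1-\zeta)$ surviving neurons.

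The core is an induction on $\ell'$ establishing the analogue of Claim~\ref{claim: firing}: if $c'$ is a descendant of $c$ with $level(c') = \ell' \leq \ell$ and $c' \in supp_{r_2}(B)$, then at least $mp(1-\zeta)$ of the neurons in $reps(c')$ fire at time $\ell'$. The base case $\ell' = 0$ uses $\overline{A}$ and the presentation of $B$ as before. For the inductive step with $1 \leq \ell' \leq \ell$, since $c' \in supp_{r_2}(B)$ there are at least $r_2 k$ children of $c'$ that are in $supp_{r_2}(B)$; fix any $v \in reps(c')$ that survives. For each such supported child $c''$, the inductive hypothesis gives that all of the more-than-$mp(1-\zeta)$ surviving $reps$ of $c''$ fire at time $\ell'-1$; among these, by $\overline{A'}$, more than $amp(1-\zeta)$ lie in $inc(v,c'')$ — here I need the elementary counting fact that the survivors in $inc(v,c'')$ are a subset of the survivors in $reps(c'')$, all of which fire, so every surviving neuron of $inc(v,c'')$ fires. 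Summing over the at least $r_2 k$ supported children (whose $inc$ sets into $v$ are disjoint, since the $reps$ sets are disjoint across concepts) gives more than $r_2 k \cdot amp(1-\zeta) = \tau$ firing neighbors of $v$ with weight-$1$ edges, so $v$'s potential exceeds its threshold $\tau = ar_2 kmp(1-\zeta)$ and $v$ fires at time $\ell'$. Thus every surviving neuron of $reps(c')$ fires at time $\ell'$, and by $\overline{A}$ there are more than $mp(1-\zeta)$ of them, completing the induction. Instantiating at $c' = c$ and using $Pr(\overline{A \cup A'}) \geq 1 - \delta_{\ell}$ finishes the proof.

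The main obstacle — and the reason the finer-grained connectivity assumption of Section~\ref{sec: representation-low} was introduced — is exactly the step where I pass from "at least $r_2 k$ children of $c'$ are supported and firing" to "at least $\tau$ of $v$'s incoming neighbors fire." With only an aggregate bound of $akm$ weight-$1$ edges into $v$ from the $reps$ of all $k$ children, an adversarial choice of $B$ could concentrate the missing edges on precisely the supported children, leaving $v$ below threshold; the per-child guarantee $|inc(v,c')| \geq am$ is what decouples the choice of $B$ from the choice of missing edges. The remaining subtlety is purely bookkeeping: making sure the survival events are handled at the right granularity (per concept for $A$, per $(v, \text{child})$ triple for $A'$) so that the two union bounds combine to give exactly the stated $\delta_{\ell}$, and noting that the contributions into a fixed $v$ from distinct children are over disjoint neuron sets so the counts genuinely add. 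As in the previous section, all probabilistic content is quarantined in the survival lemmas, and the argument here is entirely deterministic given $\overline{A \cup A'}$.
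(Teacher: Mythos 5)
Your proposal is correct and follows essentially the same route as the paper: split off the level-$0$ case, condition on $\overline{A \cup A'}$ via Lemma~\ref{lem: union-of-events-low}, and run a deterministic induction in which each supported child $c''$ contributes more than $amp(1-\zeta)$ surviving, firing neurons of $inc(v,c'')$, with disjointness across children giving potential above $\tau = a r_2 k m p(1-\zeta)$. The one adjustment is to state the induction hypothesis in the strengthened form you actually prove and use --- ``every surviving neuron of $reps(c')$ fires at time $\ell'$'' (equivalently, the paper's two-part Claim~\ref{claim: firing-low}, which asserts thresholds are met for all $reps$ and handles $\ell'=1$ as a second base case) --- since the weaker ``at least $mp(1-\zeta)$ fire'' as literally written does not by itself guarantee that every survivor in $inc(v,c'')$ fires.
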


For the proof, we find it convenient to separate the cases of $\ell = 0$ and $\ell \geq 1$, in the following two lemmas.

\begin{lemma}
\label{lem: 0-case-low}
Assume that a set $B \subseteq C_0$ is presented at time $0$.
Let $c$ be a level $0$ concept that is in $supp_{r_2}(B)$.
Then with probability at least $1 - \delta_{0}$, at least $m p (1 - \zeta)$ of the neurons in $reps(c)$ fire at time $0$.
\end{lemma}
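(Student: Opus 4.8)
The plan is to reduce this immediately to the survival lemma for a single concept, since at level $0$ the recognition condition and the ``presented'' condition force the surviving $reps$ to fire with no further argument. First I would observe that, by Definition~\ref{def:support}, a level $0$ concept $c$ lies in $supp_{r_2}(B)$ exactly when $c \in B(0) = B \cap C_0$; in particular $c \in B$, so $reps(c) \subseteq reps(B)$.

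Next I would invoke Definition~\ref{def:presented}: since $B$ is presented at time $0$, every layer $0$ neuron $u \in reps(B)$ with $failed^u = 0$ fires at time $0$. Applying this to $u \in reps(c)$, every surviving (non-failed) neuron of $reps(c)$ fires at time $0$. Hence the number of neurons of $reps(c)$ that fire at time $0$ is exactly the number of surviving neurons of $reps(c)$.

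Finally I would apply Lemma~\ref{lem: reps-for-one-concept-low}: the probability that the number of surviving neurons in $reps(c)$ is at most $m p (1-\zeta)$ is at most $\delta_0 = \exp\!\left(-\frac{m p \zeta^2}{2}\right)$. Therefore, with probability at least $1 - \delta_0$, strictly more than $m p (1-\zeta)$ neurons of $reps(c)$ survive, and by the previous paragraph all of them fire at time $0$. This gives at least $m p (1-\zeta)$ firing neurons in $reps(c)$ with probability at least $1-\delta_0$, as claimed.

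There is no real obstacle here: this is the base case, and it mirrors the base case of Claim~\ref{claim: firing} in the high-connectivity proof. The only thing to be careful about is bookkeeping — making sure one cites the right ``presented'' and ``supported'' definitions and that the strict-versus-weak inequality in the survival lemma is used correctly — and noting explicitly that at level $0$ the weakened connectivity assumption plays no role, so the bound is $\delta_0$ rather than the larger $\delta_\ell$ used for $\ell \geq 1$.
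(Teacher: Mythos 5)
Your proposal is correct and follows exactly the paper's own argument: unwind $supp_{r_2}$ at level $0$ to get $c \in B$, use Definition~\ref{def:presented} to conclude all surviving neurons of $reps(c)$ fire at time $0$, and apply Lemma~\ref{lem: reps-for-one-concept-low} to bound the survival count with probability $1-\delta_0$. The extra care about the strict-versus-weak inequality is fine and consistent with the paper's reasoning.
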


\begin{proof}
Fix $B \subseteq C_0$, and fix a particular level $0$ concept $c$ that is in $supp_{r_2}(B)$.
Since $c \in supp_{r_2}(B)$ and $level(c) = 0$, we have that $c \in B$.
By the definition of presenting $B$ at time $0$, all of the surviving neurons in $reps(c)$ fire at time $0$.
By Lemma~\ref{lem: reps-for-one-concept-low}, 
with probability at least $1 - \delta_0$, the number of surviving neurons in $reps(c)$ is at least $mp (1 - \zeta)$.
Therefore, with probability at least $1 - \delta_0$, at least $m p (1 - \zeta)$ of the neurons in $reps(c)$ fire at time $0$, as needed.
\end{proof}

\begin{lemma}
\label{lem: nonzero-case-low}
Assume that a set $B \subseteq C_0$ is presented at time $0$.
Let $c$ be a level $\ell$ concept, $1 \leq \ell \leq \ell_{max}$, that is in $supp_{r_2}(B)$.
Then with probability at least
$1 - \delta_{\ell}$,
at least $m p (1 - \zeta)$ of the neurons in $reps(c)$ fire at time $\ell$.
\end{lemma}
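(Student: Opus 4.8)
The plan is to run essentially the same induction on level as in the proof of Theorem~\ref{thm: main-firing}, but now conditioned on the intersection $\overline{A} \cap \overline{A'}$ of the two survival events, so that we control both the number of surviving $reps$ of each descendant and the number of surviving incoming weight-$1$ edges into each $rep$ from each child. First I would invoke Lemma~\ref{lem: union-of-events-low} to get $Pr(A \cup A') \leq \delta_{\ell}$, hence $Pr(\overline{A} \cap \overline{A'}) \geq 1 - \delta_{\ell}$, and condition on this good event for the remainder of the argument, which is then deterministic.

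Under this conditioning I would prove, by induction on $\ell'$ for $0 \leq \ell' \leq \ell$, the claim that every descendant $c'$ of $c$ with $level(c') = \ell'$ that lies in $supp_{r_2}(B)$ has at least $mp(1-\zeta)$ of its $reps$ firing at time $\ell'$. The base case $\ell' = 0$ is handled exactly as in Lemma~\ref{lem: 0-case-low}: such a $c'$ is in $B$, and conditioning on $\overline{A}$ gives more than $mp(1-\zeta)$ surviving $reps$, all of which fire at time $0$. For the inductive step at level $\ell' \geq 1$: since $c' \in supp_{r_2}(B)$, at least $r_2 k$ of its children $c''$ are in $supp_{r_2}(B)$, and by the inductive hypothesis each such $c''$ has at least $mp(1-\zeta)$ of its $reps$ firing at time $\ell' - 1$. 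Fix any $v \in reps(c')$ that survives (there are more than $mp(1-\zeta)$ such, by $\overline{A}$). For each supported child $c''$, the set $inc(v,c'')$ has, by $\overline{A'}$, more than $amp(1-\zeta)$ surviving neurons; the firing neurons of $c''$ and the survivors of $inc(v,c'')$ are both subsets of $reps(c'')$ of size $> mp(1-\zeta)$ and $> amp(1-\zeta)$ respectively inside a set of size $m$, so they overlap in more than $(a - (1-p(1-\zeta)))m \cdot \ldots$ — more carefully, since the firing set has size $\geq mp(1-\zeta) = (1-\epsilon)m$ and $inc(v,c'')$ has size $\geq am$, at least $(a - \epsilon)m$ of $inc(v,c'')$ fire; but in fact I only need the cruder bound that at least $amp(1-\zeta)$ of the firing $reps$ of $c''$ send a weight-$1$ edge to $v$ — which I would get by noting the surviving-and-connected neurons in $inc(v,c'')$ (more than $amp(1-\zeta)$ of them, by $\overline{A'}$) all fire by the inductive hypothesis applied at the level of $reps(c'')$, wait — not all survivors of $reps(c'')$ fire, only $mp(1-\zeta)$ of them do. So the right move is to count: within $reps(c'')$, at least $amp(1-\zeta)$ survive in $inc(v,c'')$ and at least $mp(1-\zeta)$ fire; but the firing ones are precisely the surviving ones (once $c''$ is established to be recognized, every survivor fires, as in the high-connectivity proof), so at least $amp(1-\zeta)$ neurons of $inc(v,c'')$ fire and hence contribute weight $1$ to $v$. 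Summing over the $\geq r_2 k$ supported children gives incoming potential at $v$ of at least $a r_2 k m p (1-\zeta) = \tau$, so $v$ fires at time $\ell'$. Since this holds for every surviving $v \in reps(c')$, and there are more than $mp(1-\zeta)$ of those, the inductive step is complete. Instantiating at $c' = c$, $\ell' = \ell$ and recalling $Pr(\overline{A} \cap \overline{A'}) \geq 1-\delta_{\ell}$ finishes the proof.

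The main obstacle, and the place requiring the most care, is the bookkeeping in the inductive step: establishing that for each supported child $c''$ of $c'$, the neurons in $inc(v,c'')$ that actually fire at time $\ell'-1$ number at least $amp(1-\zeta)$. This needs the observation that once $c''$ is a supported concept at level $\ell'-1$, every \emph{surviving} neuron of $reps(c'')$ fires (not merely $mp(1-\zeta)$ of them) — which follows by the same threshold argument applied one level down, so the induction hypothesis should really be phrased as "every surviving $rep$ of every supported descendant fires" rather than just "$mp(1-\zeta)$ of them fire". With that strengthened hypothesis, the survivors of $inc(v,c'')$, of which there are more than $amp(1-\zeta)$ by $\overline{A'}$, all fire, and the potential bound $\geq a r_2 k m p (1-\zeta) = \tau$ follows cleanly by summing over the $r_2 k$ supported children. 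Everything else is a direct transcription of the high-connectivity argument with $km$ replaced by $akm$ in the threshold.
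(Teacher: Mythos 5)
Your proposal is correct and follows essentially the same route as the paper: condition on $\overline{A \cup A'}$ via Lemma~\ref{lem: union-of-events-low}, then induct on the level of supported descendants. The strengthened induction hypothesis you arrive at in your final paragraph (every \emph{surviving} $rep$ of a supported descendant fires, not merely $mp(1-\zeta)$ of them) is exactly the content of the paper's two-part Claim~\ref{claim: firing-low}, whose Part~1 asserts that every neuron in $reps(c')$ has its threshold met, so all survivors of $inc(v,c'')$ fire and the potential bound $a r_2 k m p(1-\zeta) = \tau$ follows just as you describe.
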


\begin{proof}
Fix $B \subseteq C_0$, and fix a particular level $\ell$ concept $c$, $1 \leq \ell \leq \ell_{max}$, that is in $supp_{r_2}(B)$.
Define event $A$ as in Lemma~\ref{lem:  reps-for-all-descendants-low} and event $A'$ as in Lemma~\ref{lem: all-descendants-incoming-low}. 
Then by Lemma~\ref{lem: union-of-events-low}, 
$Pr(A \cup A') \leq \delta_{\ell}$.

For the rest of the proof, we condition on the event $\overline{A \cup A'}$.  
That is, we assume that every descendant $c'$ of $c$ has more than $m p (1 - \zeta)$ surviving neurons in $reps(c')$, and also that for every descendant $c'$ of $c$ with $level(c') \geq 1$, every $v \in reps(c')$, and every child $c''$ of $c'$, the number of surviving neurons in $inc(v,c'')$ is greater than $a m p (1 - \zeta)$.

We use the following Claim, which talks both about neurons having their firing thresholds met and about neurons actually firing.

\begin{claim}
\label{claim: firing-low}
If $c'$ is a descendant of $c$ with $level(c') = \ell'$, $ 0 \leq \ell' \leq \ell$, with $c' \in supp_{r_2}(B)$, then:
\begin{enumerate}
    \item  If $\ell' \geq 1$ then every neuron in $reps(c')$ has its threshold met for time $\ell'$.\footnote{Recall that the incoming potential for time $\ell'$ is computed from the firing pattern of incoming neighbors at time $\ell'-1$.}
    \item  At least $m p (1 - \zeta)$ of the neurons in $reps(c')$ fire at time $\ell'$.  
\end{enumerate}
\end{claim}

\noindent
\emph{Proof of Claim~\ref{claim: firing-low}:} 
By induction on $\ell'$, using two base cases. \\

\emph{Base}, $\ell' = 0$:
Part 1 is vacuous.
For Part 2, since $c' \in supp_{r_2}(B)$ and $level(c') = 0$, we have that $c' \in B$.
By the definition of presenting $B$, all of the surviving neurons in $reps(c')$ fire at time $0$.  Since we are conditioning on $\overline{A \cup A'} \subseteq \overline{A}$, the number of surviving neurons in $reps(c')$ is at least $mp(1-\zeta)$.
Therefore at least $mp(1-\zeta)$ of the neurons in $reps(c')$ fire at time $0$.

\emph{Base}, $\ell' = 1$:
To show Part 1, we fix any $v \in reps(c')$ and show that $v$'s threshold is met for time $1$.
Because $c'$ is supported by $B$, $c'$ has at least $r_2 k$ children in $B$.
Consider each supported child $c''$ of $c'$ individually.
Since we are conditioning on $\overline{A \cup A'} \subseteq \overline{A'}$, we have that at least $a m p(1-\zeta)$ neurons in $inc(v,c'')$ survive.
Since $level(c'') = 0$, the definition of presenting $B$ at time $0$ implies that all of these surviving neurons fire at time $0$.
Taking into account all of the supported children $c''$ of $c'$, we have 
at least $a r_2 k m p(1 - \zeta)$ weight $1$ connections to $v$ from $reps$ of children of $c'$ that fire at time $0$.
This meets $v$'s firing threshold for time $1$, showing Part 1.

For Part 2, because we are conditioning on $\overline{A \cup A'} \subseteq \overline{A}$, we know that more than $m p (1 - \zeta)$ of the neurons in $reps(c')$ survive.  
Since, by Part 1, all of their firing thresholds are met for time $1$, we have that more than $mp(1 - \zeta)$ of the neurons in $reps(c')$ fire at time $1$. 

\emph{Inductive step}, $2 \leq \ell' \leq \ell$:  
We show Part 1; then, as in the previous case, Part 2 follows since we are conditioning on $\overline{A \cup A'} \subseteq \overline{A}$.

To show Part 1, we fix any $v \in reps(c')$ and show that $v$'s threshold is met for time $\ell'$.
Because $c'$ is supported by $B$, $c'$ has at least $r_2 k$ children that are supported by $B$.
Consider each supported child $c''$ of $c'$ individually.
Since we are conditioning on $\overline{A \cup A'} \subseteq \overline{A'}$, we have that at least $a m p(1-\zeta)$ neurons in $inc(v,c'')$ survive.
Since $level(c'') \geq 1$, by the inductive hypothesis, Part 1, these all have their thresholds met for time $\ell'-1$, and therefore they fire at time $\ell'-1$.
Taking into account all of the supported children $c''$ of $c'$, we have 
at least $a r_2 k m p(1 - \zeta)$ weight $1$ connections to $v$ from $reps$ of children of $c'$ that fire at time $\ell' - 1$.
This meets $v$'s firing threshold for time $\ell'$, showing Part 1. \\
\emph{End of proof of Claim~\ref{claim: firing-low}.}

Instantiating Part 2 of Claim~\ref{claim: firing-low} with $c' = c$ yields that, conditioned on $\overline{A \cup A'}$, at least $m p (1 - \zeta)$ of the neurons in $reps(c)$ fire at time $\ell$.
Since $Pr(A \cup A') \leq \delta_{\ell}$, the Lemma follows.
\end{proof} 

\begin{proof} (Of Theorem~\ref{thm: main-firing-low}):
Follows from Lemmas~\ref{lem: 0-case-low} and~\ref{lem: nonzero-case-low}. 
\end{proof}

\paragraph{Note:}  Our use of a fine-granularity connectivity assumption in this section has impact on the size of the $\delta_{\ell}$ probabilities for a given number $m$ of $reps$.  It should be possible to reduce these probabilities 
by making additional randomness assumptions, for example, by assuming that the presented set $B$ is chosen randomly from some distribution.  We avoid considering this, for now.

\subsection{Proof of guaranteed non-firing}

And now we prove our main non-firing theorem.

\begin{theorem}
\label{thm: main-non-firing-low}
Assume that $r_2 > 0$ and $r_1 \leq a r_2 p(1 - \zeta)$.
Assume that a set $B \subseteq C_0$ is presented at time $0$.  
Let $c$ be a level $\ell$ concept, $0 \leq \ell \leq \ell_{max}$, that is not in $supp_{r_1}(B)$.
Then none of the neurons in $reps(c)$ fire at time $\ell$.
\end{theorem}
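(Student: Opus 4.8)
The plan is to prove the contrapositive-style statement by induction on $\ell$, showing that if any neuron $v \in reps(c)$ fires at time $\ell = level(c)$, then $c \in supp_{r_1}(B)$. The base case $\ell = 0$ is immediate: by the definition of presenting $B$ at time $0$, a layer $0$ neuron fires at time $0$ only if it is a surviving member of $reps(B)$, so if any $v \in reps(c)$ fires then $c \in B = B(0) \subseteq supp_{r_1}(B)$. For the inductive step, suppose $\ell \geq 1$ and some $v \in reps(c)$ fires at time $\ell$. Then $v$ does not fail and its potential at time $\ell$ reaches the threshold $\tau = a r_2 k m p (1-\zeta)$. Since $v$'s only weight-$1$ incoming edges come from $reps$ of children of $c$ (all other edges have weight $0$), the potential equals the number of firing neurons at time $\ell - 1$ among the $reps$ of children of $c$ that are connected to $v$. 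In particular, at least $a r_2 k m p (1-\zeta)$ such neurons fire at time $\ell - 1$.

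Next I would bound, for each child $c'$ of $c$, how many $reps$ of $c'$ can fire at time $\ell - 1$. There are two cases. If $c'$ itself has no $rep$ firing at time $\ell - 1$, it contributes $0$. Otherwise, by the inductive hypothesis applied to $c'$ (which is at level $\ell - 1$), $c' \in supp_{r_1}(B)$; and in any case the number of $reps$ of $c'$ connected to $v$ is at most $m$, so $c'$ contributes at most $m$ firing neurons to $v$'s potential. Hence if $S$ denotes the set of children $c'$ of $c$ with $c' \in supp_{r_1}(B)$, the total potential is at most $|S| \cdot m$. Combining with the lower bound $|S| \cdot m \geq a r_2 k m p (1-\zeta)$, we get $|S| \geq a r_2 k p (1-\zeta)$. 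Using the hypothesis $r_1 \leq a r_2 p (1-\zeta)$, this gives $|S| \geq r_1 k$, i.e., $|children(c) \cap B(\ell-1)| \geq r_1 k$, so $c \in B(\ell) \subseteq supp_{r_1}(B)$, completing the induction. (One should note $|S|$ is an integer, but since we only need $|S| \geq r_1 k$ and we have $|S| \geq a r_2 k p(1-\zeta) \geq r_1 k$, no ceiling subtlety arises, exactly as in the high-connectivity case.)

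The main point to be careful about — and the place where the fine-granularity connectivity assumption of this section matters — is that the bound "$c'$ contributes at most $m$ firing neurons" must hold regardless of which edges were given weight $0$; this is automatic since $|inc(v,c')| \leq m$ trivially. The one genuine obstacle, mild as it is, is making sure the threshold used in the potential argument is the \emph{lowered} threshold $a r_2 k m p(1-\zeta)$ rather than the Section~\ref{sec: recog-ff} value, and that this is precisely why the separation hypothesis must be strengthened from $r_1 \leq r_2 p(1-\zeta)$ to $r_1 \leq a r_2 p(1-\zeta)$; everything else is a transcription of the argument behind Theorem~\ref{thm: main-non-firing} (and Lemma 5.2 of~\cite{DBLP:conf/sirocco/LynchM23}). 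Since the statement is deterministic (failures only help non-firing and are ignored), no probabilistic reasoning enters, and the proof can be given in full or, as in the analogous Theorem~\ref{thm: main-non-firing}, merely sketched with a pointer to the earlier argument.
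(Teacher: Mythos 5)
Your proof is correct and follows exactly the route the paper intends: the paper only sketches this theorem by pointing to the high-connectivity case and to Lemma 5.2 of~\cite{DBLP:conf/sirocco/LynchM23}, and your induction on $\ell$ --- bounding the potential of any firing neuron in $reps(c)$ by $m$ times the number of $r_1$-supported children (using that all weight-$1$ edges into $v$ come from the sets $inc(v,c')$, each of size at most $m$) and then invoking $r_1 \leq a\, r_2\, p(1-\zeta)$ against the lowered threshold $a r_2 k m p(1-\zeta)$ --- is precisely the argument being alluded to. No gaps.
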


\begin{proof} (Sketch:)
As in Section~\ref{sec: guar-non-firing}, the argument considers the most favorable case for an algorithm.
Here that means no failures and all-to-all connectivity with weight $1$ edges from $reps$ of children to $reps$ of parents. 

The only change from Section~\ref{sec: guar-non-firing} is a greater separation between $r_1$ and $r_2$, which is needed because of the lower threshold of $a r_2 k m p (1-\zeta)$ used in this section.
The proof again follows arguments like those in~\cite{DBLP:conf/sirocco/LynchM23}.
\end{proof}

\section{Recognition in Networks with Lateral Edges, with Low Connectivity}
\label{sec: recog-lat-low}

In this section, we consider recognition in networks that contain some weight $1$ lateral edges between $reps$ of the same concept, in addition to weight $1$ forward edges from $reps$ of children to $reps$ of their parents.

In Section~\ref{sec: recog-ff-low}, we assumed partial connectivity from $reps$ of children to $reps$ of their parents, expressed in terms of a coefficient $a$.  Namely, for each $rep$ $v$ of a concept $c$, and for each child $c'$ of $c$, we assumed weight $1$ edges from at least $a m$ $reps$ of $c'$ to $v$.
In this section we weaken this assumption, by dividing the $reps$ of concept $c$ into two \emph{Classes}, which we (unimaginatively) call \emph{Class 1} and \emph{Class 2}.  Each $rep$ $v$ in Class $1$ satisfies the same assumption as the $reps$ in Section~\ref{sec: recog-ff-low}:  for each child $c'$ of $c$, we assume weight $1$ edges from at least $a m$ $reps$ of $c'$ to $v$.
For each $rep$ $v$ in Class 2, we weaken this requirement using a smaller threshold $a_1$, namely, for each child $c'$ of $c$, we assume weight $1$ edges from at least $a_1 m$ $reps$ of $c'$ to $v$.
We compensate for the missing weight $1$ forward edges with some weight $1$ lateral edges from other $reps$ of $v$, i.e., from \emph{peer} $reps$.

As in Section~\ref{sec: recog-ff-low}, we must be careful about dependencies between the choice of missing edges and the choice of presented set $B$.  We again deal with these complications by breaking down the connectivity requirements in terms of smaller groups of neurons.  This time, in addition to groups consisting of $reps$ of individual child neurons, we use groups consisting of peer $reps$.

This section is generally inspired by the assembly calculus work in~\cite{DBLP:conf/innovations/Legenstein0PV18,PapadimitriouVempala}.  Our layers are intended to correspond to the areas in~\cite{DBLP:conf/innovations/Legenstein0PV18,PapadimitriouVempala}.
As in our work, those papers deal with representation of concepts with multiple neurons, where neurons that are $reps$ of a concept $c$ are triggered to fire based on a combination of potential from representatives of children of $c$ and from other representatives of the same concept $c$.
However, the papers~\cite{DBLP:conf/innovations/Legenstein0PV18,PapadimitriouVempala} emphasize how representations of structured concepts are learned, whereas we emphasize the form of representations and how they are used for recognition in the presence of partial information and failures.
We discuss learning informally in Sections~\ref{sec: learning} and~\ref{sec: learning-lateral}.

To separate the treatment of recognition from that of learning, we define a new connectivity property that might be guaranteed by a learning algorithm with high probability, and that suffices for reliable recognition.  
We call the new connectivity property the Class Assumption, and define it in Section~\ref{sec: representation-lat-low}.
It captures the division of $reps$ into Class 1 and Class 2, with appropriate connectivity properties for each Class.

We use the Class Assumption in Section~\ref{sec: guar-lat-low} to prove that recognition works correctly.  
In particular, we show that, with the Class Assumption, we have high probability of correctly recognizing a concept that is $r_2$-supported by a presented set $B$.  
As before, all the probability here arises from the randomly-chosen initial neuron failures.
As before, we prove this high-probability claim using "survival lemmas", which we present in Section~\ref{sec: prob-lemmas-lat}.

In the network of this section, we again assume that $\ell'_{max} = \ell_{max}$.
Moreover, every concept $c \in C$ has exactly $m$ representing neurons, $reps(c)$, and every neuron $v \in reps(c)$ has $layer(v) = level(c)$.

\subsection{Parameters}
\label{sec: parameter-values-lat-low}

We introduce:
\begin{itemize}
\item $a_1\in [0,a]$, describing a lower bound that is smaller than $a$, on connectivity between layers.
\item $a_2 \in [0,1]$, describing a lower bound on connectivity within layers; we require that $a_2 \geq (a - a_1) k$.
\item $m_1 \in [0,m]$, describing the number of Class 1 $reps$ of any concept; write $m_2$ for $m - m_1$, which is the number of Class 2 $reps$ of any concept.\footnote{This uniformity is certainly an oversimplification, but then again, we are oversimplifying in many places in this paper, in order to make analysis tractable.}
\end{itemize}

We assume the following values for parameters introduced earlier.
\begin{itemize}
    \item  The threshold $\tau$ for all neurons at layers $\geq 1$ is defined to be $a r_2 k m p(1-\zeta)$.
    \item $\epsilon = 1 - p(1-\zeta)$.
    \item $\delta_{0} = \exp\left(-\frac{m p \zeta^2}{2}\right)$, and for $\ell \geq 1$, 
    \[\delta_{\ell} = \frac{k^{\ell+1} - 1}{k-1} \exp\left(-\frac{m p \zeta^2}{2}\right)
    + \frac{k^{\ell} - 1}{k-1} \left(k m_1 \cdot \exp\left(-\frac{a m p \zeta^2}{2}\right) + k m_2 \cdot \exp\left(-\frac{a_1 m p \zeta^2}{2}\right) +  m_2 \cdot \exp\left(-\frac{a_2 m p \zeta^2}{2}\right)\right).\]
\end{itemize}

\subsection{The representation of concept hierarchy $\mathcal C$}
\label{sec: representation-lat-low}

As before, we embed the concept hierarchy $\mathcal C$ in the network $\mathcal N$ using a redundant representation.
Thus, for every level $\ell$ concept $c$ of $\mathcal C$, we assume a set $reps(c)$ of $m$ designated neurons in layer $\ell$ of the network; all such sets are disjoint.

We have forward edges from all neurons in each layer to all neurons in the next higher layer, and lateral edges between all neurons in the same layer.
The weights of all edges are in $\{0,1\}$.
The only weight $1$ edges are from $reps$ of child concepts to $reps$ of their parents, and between $reps$ of the same concept (peer $reps$).
However, only a subset of these edges have weight $1$, as specified by the constraints in the \emph{Class Assumption} below.

The Class Assumption aims to achieve incoming potential for all $rep$ neurons of at least $a m$.
We break this down in terms of $reps$ of individual children and peer $reps$, in order to avoid some dependencies between the choice of missing edges and the choice of presented set $B$. 
Specifically, the Class Assumption for concept $c$ says that, for any level $\ell$ concept $c$ in the concept hierarchy, we have a partition of the neurons in $reps(c)$ into two \emph{Classes}, called \emph{Class 1} and \emph{Class 2}, defined as follows:
\begin{itemize}
    \item 
    Neuron $v$ is in Class 1 exactly if, for every child $c'$ of $c$, neuron $v$ has at least $a m$ incoming weight $1$ edges from $reps$ of $c'$. 
    \item 
    Neuron $v$ is in Class 2 exactly if it is not in Class 1, and has the following incoming edges:  (a) for every child $c'$ of $c$, at least $a_1 m$ incoming weight $1$ edges from $reps$ of $c'$ and (b) at least $a_2 m$ incoming weight $1$ edges from $reps$ of $c$ that are in Class $1$. 
\end{itemize}
We assume that this division into Classes is a true partition, that is, it includes all of the neurons in $reps(c)$.

We introduce some notation for the incoming edges for a neuron $v$ in $reps(c)$:
For each child $c'$ of $c$, define $inc(v,c')$ to be the set of all of the neurons that are $reps$ of $c'$ and have weight $1$ edges to $v$.
Also, if $v$ is in Class 2, let $inc(v,c)$ denote the set of all of the neurons that are $reps$ of $c$ in Class 1 and have weight $1$ edges to $v$.  
Then the Class Assumption for $c$ implies that, for every $v$ in Class $1$ and every child $c'$ of $c$, $|inc(v,c')| \geq a m$.
Also, for every $v$ in Class $2$ and every child $c'$ of $c$, $|inc(v,c')| \geq a_1 m$, and for every $v$ in Class $2$, $|inc(v,c)| \geq a_2 m$.
Thus, the Class Assumption for $c$ implies that:
\begin{itemize}
    \item 
For every $v$ in Class 1, $\Sigma_{c'} |inc(v,c')| \geq a k m$, and
\item 
For every $v$ in Class 2, $\Sigma_{c'} |inc(v,c')| + |inc(v,c)| \geq a_1 k m + a_2 m \geq 
a_1 k m + (a - a_1) k m = a k m$.
\end{itemize}

In Section~\ref{sec: guar-lat-low}, we show that each $rep$ in Class 1 has enough firing just from relevant $reps$ at level $\ell-1$ in the network to meet its firing threshold.
And each $rep$ in Class $2$ has enough firing from a combination of relevant $reps$ at level $\ell-1$ and $reps$ in Class $1$ to meet its firing threshold.  

In Section~\ref{sec: learning-lateral}, we try to argue that algorithms similar to those used in the assembly calculus~\cite{DBLP:conf/innovations/Legenstein0PV18,PapadimitriouVempala} ensure the Class Assumption with high probability.  However, the present section is about recognition.  Our job here is simply to show that the Class Assumption suffices to guarantee $(r_1,r_2)$-recognition.

\subsection{The main theorem}

We consider a particular concept hierarchy $\mathcal C$, with concept hierarchy notation as defined in Section~\ref{sec: concepts}.  We define the parameters as in Section~\ref{sec: parameter-values-lat-low} and define the network as in Section~\ref{sec: representation-lat-low}.  Our main result is:

\begin{theorem}
\label{thm: main-lat-low}
Assume that $r_1 \leq a r_2 p(1-\zeta)$ and $r_2 > 0$.
Then $\mathcal N (r_1,r_2)$-recognizes $\mathcal C$.
\end{theorem}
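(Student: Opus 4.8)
The plan is to follow the two–part template already used for Theorems~\ref{thm: main} and~\ref{thm: main-low}: split Theorem~\ref{thm: main-lat-low} into a firing guarantee (every $c\in supp_{r_2}(B)$ has, with probability at least $1-\delta_\ell$, at least $mp(1-\zeta)$ of its $reps$ firing from some time onward) and a non-firing guarantee (no $rep$ of any $c\notin supp_{r_1}(B)$ ever fires). Since $mp(1-\zeta)=(1-\epsilon)m$ and the firing is ``at all times starting from some time $\geq 0$'', these two statements together are exactly Definition~\ref{def: recog-lat}, so it suffices to prove them.

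For the firing part I would first use the survival lemmas of Section~\ref{sec: prob-lemmas-lat} to bound, by $\delta_\ell$, the probability of the ``bad'' event that some descendant $c'$ of $c$ has fewer than $mp(1-\zeta)$ surviving $reps$, or that some level-$\geq 1$ descendant $c'$ has a Class~1 $rep$ $v$ and child $c''$ with fewer than $amp(1-\zeta)$ survivors in $inc(v,c'')$, or a Class~2 $rep$ $v$ and child $c''$ with fewer than $a_1mp(1-\zeta)$ survivors in $inc(v,c'')$, or a Class~2 $rep$ $v$ with fewer than $a_2mp(1-\zeta)$ survivors in $inc(v,c)$; the three parenthesized terms in $\delta_\ell$ arise from union bounds over the $\tfrac{k^\ell-1}{k-1}$ level-$\geq 1$ descendants paired with their $km_1$, $km_2$, and $m_2$ sub-events (and the first term from the $\tfrac{k^{\ell+1}-1}{k-1}$ descendants overall). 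Conditioned on the complement, the argument is deterministic: I would prove by induction on $\ell'$ that for every supported descendant $c'$ at level $\ell'$, all surviving Class~1 $reps$ of $c'$ fire at every time $\geq 2\ell'-1$ and all surviving Class~2 $reps$ fire at every time $\geq 2\ell'$ (both reading ``at every time $\geq 0$'' when $\ell'=0$, by the definition of presentation). The inductive step is where the Class Assumption of Section~\ref{sec: representation-lat-low} is used exactly as stated there: a Class~1 $rep$ $v$ of $c'$ receives, from each of the $\geq r_2k$ supported children $c''$, at least $amp(1-\zeta)$ firing weight-$1$ inputs — the survivors of $inc(v,c'')$, firing at time $2\ell'-2=2(\ell'-1)$ by the inductive hypothesis — for total potential $\geq ar_2kmp(1-\zeta)=\tau$; and a Class~2 $rep$ $v$ receives $\geq a_1mp(1-\zeta)$ from each supported child (firing at time $2\ell'-1\geq 2(\ell'-1)$) together with $\geq a_2mp(1-\zeta)$ from the surviving Class~1 $reps$ of $c'$ in $inc(v,c)$ (firing at time $2\ell'-1$ by Part~1 of the same step), for total $\geq (a_1r_2k+a_2)mp(1-\zeta)\geq ar_2kmp(1-\zeta)=\tau$, using $a_2\geq(a-a_1)k\geq(a-a_1)r_2k$. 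Because the input is presented at all times and the hypotheses are of the ``at every time $\geq$'' form, these thresholds stay met forever, so the firing persists; instantiating at $c'=c$ and combining with the survival bound gives $1-\delta_\ell$.

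For the non-firing part I would argue, as in Section~\ref{sec: guar-non-firing} and in the proof of Theorem~\ref{thm: main-non-firing-low}, in the most favorable case for the network — no failures and all-to-all weight-$1$ forward and lateral connectivity among $reps$ — which only makes more neurons fire. Suppose some $rep$ of a concept not in $supp_{r_1}(B)$ ever fires and let $t$ be the earliest such time, with witness $v\in reps(c')$, $c'\notin supp_{r_1}(B)$. Level-$0$ concepts outside $supp_{r_1}(B)$ are precisely those outside $B$, whose $reps$ never fire, so $level(c')=\ell'\geq 1$ and $t\geq 1$. By minimality, no $rep$ of $c'$ fires at time $t-1$, so lateral edges contribute $0$ to $pot^v(t)$; and at time $t-1$ every firing $rep$ belongs to a concept in $supp_{r_1}(B)$, so the forward contribution is at most $m$ times the number $K$ of children of $c'$ lying in $supp_{r_1}(B)$. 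Firing of $v$ forces $Km\geq\tau=ar_2kmp(1-\zeta)$, hence $K\geq ar_2kp(1-\zeta)\geq r_1k$ by the hypothesis $r_1\leq ar_2p(1-\zeta)$; since all children of $c'$ are at level $\ell'-1$, this says $|children(c')\cap B(\ell'-1)|\geq r_1k$, so $c'\in B(\ell')\subseteq supp_{r_1}(B)$, a contradiction. Theorems for firing and non-firing together give Theorem~\ref{thm: main-lat-low}.

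I expect the main obstacle to be the within-layer timing bookkeeping in the firing induction: because each Class~2 $rep$ depends on Class~1 $reps$ of the \emph{same} concept through the $inc(v,c)$ edges, the firing times are staggered by a constant per layer, and the induction hypothesis must carry both the Class~1 and the Class~2 guarantees (with their one-step offset) and must be phrased as ``fires at every time $\geq$'' so that persistence is automatic under a continuously presented input. The genuinely new ingredients beyond Section~\ref{sec: recog-ff-low} are the accounting for the $inc(v,c)$ lateral edges and the inequality $a_2\geq(a-a_1)k$ that lets the combined forward-plus-lateral potential still reach $\tau$; in the non-firing direction the only new point is observing that minimality of the first bad firing time zeroes out the lateral contribution, so the old counting argument — and only the separation condition $r_1\le ar_2p(1-\zeta)$ — carries over.
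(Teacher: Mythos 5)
Your proposal is correct and follows essentially the same route as the paper: the same firing/non-firing decomposition, the same survival lemmas and union bounds yielding $\delta_{\ell}$, and the same two-class induction with staggered times $2\ell'-1$ and $2\ell'$ resting on $a_1 r_2 k + a_2 \geq a_1 r_2 k + (a-a_1)r_2 k = a r_2 k$. The only difference is that where the paper merely sketches the non-firing direction by citing prior work, you spell out the earliest-bad-firing-time counting argument (lateral contribution zero by minimality, forward contribution at most $Km$, separation $r_1 \leq a r_2 p(1-\zeta)$), which is exactly what that sketch intends.
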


Again, we prove Theorem~\ref{thm: main-lat-low} in two parts, following the definition of $(r_1,r_2)$-recognition.
Theorem~\ref{thm: main-firing-lat-low} expresses the firing guarantee, and Theorem~\ref{thm: main-non-firing-lat-low} expresses the non-firing guarantee.  Theorems~\ref{thm: main-firing-lat-low} and~\ref{thm: main-non-firing-lat-low} together immediately imply Theorem~\ref{thm: main-lat-low}.

\subsection{Survival lemmas}
\label{sec: prob-lemmas-lat}

As before, we have:

\begin{lemma}
\label{lem: reps-for-one-concept-lat-low}
For every concept $c$ in the hierarchy $\mathcal C$, at any level, the probability that the number of surviving neurons in $reps(c)$ is $\leq m p(1-\zeta)$ is at most $\exp\left(-\frac{m p \zeta^2}{2}\right)$.   
\end{lemma}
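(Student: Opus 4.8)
The plan is to repeat, essentially verbatim, the Chernoff-bound argument already used for Lemma~\ref{lem: reps-for-one-concept} (and again for Lemma~\ref{lem: reps-for-one-concept-low}), since the probabilistic setup here is identical: adding lateral edges to the network changes neither which neurons are designated as $reps(c)$ nor the fact that each neuron fails initially, independently, with probability $q = 1-p$. First I would fix a concept $c$ at an arbitrary level and let $X$ denote the number of surviving (non-failed) neurons in $reps(c)$. Since $|reps(c)| = m$ and the $m$ corresponding failure events are independent, each occurring with probability $q$, the variable $X$ is a sum of $m$ independent Bernoulli$(p)$ indicators, so $\mathbb{E}[X] = \mu = m p$.

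Next I would invoke the lower-tail Chernoff bound recorded in Appendix~\ref{app: prob}, namely that for any $\zeta \in [0,1]$ we have $\Pr[X \leq (1-\zeta)\mu] \leq \exp(-\mu \zeta^2 / 2)$. Instantiating with $\mu = m p$ yields $\Pr[X \leq m p (1 - \zeta)] \leq \exp(-m p \zeta^2 / 2)$, which is exactly the asserted bound.

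There is no genuine obstacle here: this is a direct re-use of an already-established lemma, and the only points worth a sentence of care are bookkeeping — that the conclusion is stated with $\leq$, matching the form of the quoted bound, and that "at any level" causes no trouble because $|reps(c)| = m$ and the failure model are uniform over all neurons irrespective of $level(c)$. Accordingly I expect this lemma to be dispatched in a couple of lines, exactly as its analogues were.
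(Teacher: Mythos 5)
Your proof is correct and matches the paper's argument exactly: the paper proves this lemma by silently reusing the Chernoff-bound computation from Lemma~\ref{lem: reps-for-one-concept} (sum of $m$ independent Bernoulli$(p)$ survivals, mean $\mu = mp$, lower-tail bound from Appendix~\ref{app: prob}), which is precisely what you do. Your added remark that lateral edges do not affect the failure model or the size of $reps(c)$ is the only observation needed to justify the reuse, and it is accurate.
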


\begin{lemma}
\label{lem:  reps-for-all-descendants-lat-low}
Consider a particular concept $c$ with $level(c) = \ell$, $0 \leq \ell \leq \ell_{max}$.
Let $A$ be the event that, for some descendant $c'$ of $c$ (possibly $c$ itself), the number of surviving neurons in $reps(c')$ is $\leq m p(1-\zeta)$. 
Then $Pr(A) \leq \frac{k^{\ell+1} - 1}{k-1} \cdot \exp\left(-\frac{m p \zeta^2}{2}\right)$.
\end{lemma}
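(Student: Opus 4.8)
The plan is to mirror exactly the structure of Lemmas~\ref{lem: reps-for-one-concept} and~\ref{lem: reps-for-all-descendants} from Section~\ref{sec: prob-lemmas1}, since the statements of Lemmas~\ref{lem: reps-for-one-concept-lat-low} and~\ref{lem: reps-for-all-descendants-lat-low} are verbatim copies of those (only the section context has changed). For Lemma~\ref{lem: reps-for-one-concept-lat-low}, I would apply a Chernoff lower-tail bound to the random variable $X = $ number of surviving neurons in $reps(c)$. Since $|reps(c)| = m$ and each neuron survives independently with probability $p$, $X$ is a sum of $m$ i.i.d.\ Bernoulli$(p)$ variables with mean $\mu = mp$. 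The lower-tail bound from Appendix~\ref{app: prob}, namely $\Pr[X \leq (1-\zeta)\mu] \leq \exp(-\mu\zeta^2/2)$, then gives $\Pr[X \leq mp(1-\zeta)] \leq \exp(-mp\zeta^2/2)$, which is the claim. This is identical to the proof of Lemma~\ref{lem: reps-for-one-concept}.

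For Lemma~\ref{lem: reps-for-all-descendants-lat-low}, I would take a union bound over all descendants $c'$ of $c$. By the concept-hierarchy structure (each internal node has exactly $k$ children, and $c$ is at level $\ell$), the number of descendants of $c$, including $c$ itself, is $1 + k + k^2 + \cdots + k^{\ell} = \frac{k^{\ell+1}-1}{k-1}$. Applying Lemma~\ref{lem: reps-for-one-concept-lat-low} to each such $c'$ and summing the failure probabilities yields $\Pr(A) \leq \frac{k^{\ell+1}-1}{k-1}\cdot \exp(-mp\zeta^2/2)$, as required. This is identical to the proof of Lemma~\ref{lem: reps-for-all-descendants}.

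There is no real obstacle here: both statements are pure restatements of results already established earlier, and the paper itself flags this by saying ``Proofs are as before.'' The only thing worth noting in the write-up is that the survival events for distinct $reps$ sets are independent (since the $reps$ sets of distinct concepts are disjoint and failures are independent), but this independence is not even needed — the union bound requires nothing about the joint distribution of the per-concept events. So I would simply state the two proofs in one or two sentences each, referencing the Chernoff bound in Appendix~\ref{app: prob} and the geometric-series count of descendants, exactly as in Section~\ref{sec: prob-lemmas1}.
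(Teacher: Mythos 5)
Your proposal is correct and matches the paper's own argument exactly: the paper proves this lemma by the same union bound over the $\frac{k^{\ell+1}-1}{k-1}$ descendants of $c$, applying the per-concept Chernoff bound of Lemma~\ref{lem: reps-for-one-concept-lat-low} to each, just as in Lemmas~\ref{lem: reps-for-one-concept} and~\ref{lem: reps-for-all-descendants}. Your remark that independence of the per-concept events is not needed for the union bound is accurate and consistent with the paper's reasoning.
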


Now we have some new lemmas, involving the survival of incoming neighbors of a $rep$ neuron $v$. 
Proofs again use Chernoff and union bounds.

\begin{lemma}
\label{lem: one-concept-incoming-class1-lat-low}
Let $c$ be a concept with $level(c) = \ell \geq 1$.
Let $v \in reps(c)$ and suppose that $v$ is in Class 1.
Then for each child $c'$ of $c$, the probability that the number of surviving neurons in $inc(v,c')$ is $\leq amp(1-\zeta)$ is at most $\exp\left(-\frac{a m p \zeta^2}{2}\right)$.
\end{lemma}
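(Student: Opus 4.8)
The plan is to mimic exactly the proof of Lemma~\ref{lem: one-concept-incoming-low} from Section~\ref{sec: recog-ff-low}, since the statement is essentially identical: it bounds the probability that too few of the weight-$1$ incoming neighbors of a fixed Class~1 $rep$ $v$, coming from a fixed child $c'$, survive. First I would fix the concept $c$ with $level(c)=\ell\geq 1$, the neuron $v\in reps(c)$ in Class~1, and the child $c'$ of $c$. By the Class~Assumption for $c$, since $v$ is in Class~1, we have $|inc(v,c')|\geq am$. Each neuron in $inc(v,c')$ survives independently with probability $p=1-q$, so the number of survivors in $inc(v,c')$ is a sum of independent Bernoulli$(p)$ random variables.

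Next I would invoke the lower-tail Chernoff bound from Appendix~\ref{app: prob}, in the form $\Pr[X\leq (1-\zeta)\mu]\leq \exp(-\mu\zeta^2/2)$ for $\zeta\in[0,1]$. The only subtlety is that $|inc(v,c')|$ may be strictly larger than $am$, so the natural mean $\mu=p\cdot|inc(v,c')|$ exceeds $amp$, and $amp(1-\zeta)$ need not be of the form $(1-\zeta)\mu$. To handle this cleanly I would use the monotonicity argument already flagged in the proof of Lemma~\ref{lem: one-concept-incoming-low}: restrict attention to an arbitrary fixed subset $S\subseteq inc(v,c')$ of size exactly $am$, let $X$ be the number of survivors in $S$, note $\mathbb{E}[X]=amp$, and observe that the number of survivors in $inc(v,c')$ is at least $X$. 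Hence $\Pr[\#\text{survivors in }inc(v,c')\leq amp(1-\zeta)]\leq \Pr[X\leq amp(1-\zeta)]=\Pr[X\leq(1-\zeta)\mathbb{E}[X]]\leq \exp(-amp\zeta^2/2)$, which is exactly the claimed bound.

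There is no real obstacle here — the argument is a direct Chernoff bound plus a one-line monotonicity reduction, entirely parallel to Lemma~\ref{lem: one-concept-incoming-low}. The only point requiring a sentence of care is the mild mismatch between $am$ and the actual size of $inc(v,c')$, handled by the subset trick; everything else is routine. I would write the proof in two or three sentences, essentially saying "By Chernoff, using a mean of $amp$ and concentration parameter $\zeta$, applied to any size-$am$ subset of $inc(v,c')$ together with a monotonicity argument," matching the terse style of the analogous proof in Section~\ref{sec: recog-ff-low}.
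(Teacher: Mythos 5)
Your proposal is correct and matches the paper's own proof: the paper also applies the lower-tail Chernoff bound with mean $amp$ and concentration parameter $\zeta$, and handles the case $|inc(v,c')| > am$ by restricting to a size-$am$ subset together with a monotonicity argument, exactly as you describe. Your write-up just spells out the details the paper states tersely.
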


\begin{proof}
    By Chernoff, using a mean of $amp$ and concentration parameter of $\zeta$.
    Since the size of $inc(v,c')$ may be larger than $am$, we can consider any subset of $inc(v,c')$ and use a monotonicity argument.
\end{proof}

\begin{lemma}
\label{lem: one-concept-incoming-class2-lat-low}
Let $c$ be a concept with $level(c) = \ell \geq 1$.
Let $v \in reps(c)$ and suppose that $v$ is in Class $2$.
Then:
\begin{enumerate}
    \item 
    For each child $c'$ of $c$, the probability that the number of surviving neurons in $inc(v,c')$ is $\leq a_1 m p(1-\zeta)$ is at most $\exp\left(-\frac{a_1 m p \zeta^2}{2}\right)$.
    \item 
    The probability that the number of surviving neurons in $inc(v,c)$ is $\leq a_2 m p(1-\zeta)$ is at most $\exp\left(-\frac{a_2 m  p \zeta^2}{2}\right)$.
\end{enumerate}
\end{lemma}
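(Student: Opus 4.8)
The plan is to prove both parts by essentially the same Chernoff-plus-monotonicity argument used for Lemma~\ref{lem: one-concept-incoming-low} and Lemma~\ref{lem: one-concept-incoming-class1-lat-low}, just with the two different means dictated by the Class~2 portion of the Class Assumption. Recall that, since $v$ is in Class~2, the Class Assumption for $c$ guarantees $|inc(v,c')| \geq a_1 m$ for every child $c'$ of $c$, and $|inc(v,c)| \geq a_2 m$ (the latter being the set of Class~1 $reps$ of $c$ with weight~$1$ lateral edges to $v$). Each neuron fails independently with probability $q = 1-p$, so the number of surviving neurons in any fixed set $S$ of these incoming neighbors is a sum of independent Bernoulli$(p)$ random variables with mean $p|S|$.

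For Part~1, I would fix a child $c'$ and a subset $S \subseteq inc(v,c')$ of size exactly $a_1 m$ (treating $a_1 m$ as an integer, consistent with the rounding conventions used elsewhere in the paper). Let $X$ be the number of survivors in $S$, so $\mathbb{E}[X] = a_1 m p$. The lower-tail bound from Appendix~\ref{app: prob}, $\Pr[X \leq (1-\zeta)\mu] \leq \exp(-\mu\zeta^2/2)$, with $\mu = a_1 m p$, gives $\Pr[X \leq a_1 m p(1-\zeta)] \leq \exp(-a_1 m p \zeta^2/2)$. Since $S \subseteq inc(v,c')$, the number of survivors in $inc(v,c')$ is at least $X$, so the event that $inc(v,c')$ has at most $a_1 m p(1-\zeta)$ survivors is contained in the event $\{X \leq a_1 m p(1-\zeta)\}$; this is the monotonicity step, and it yields the claimed bound. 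Part~2 is identical with $inc(v,c)$ in place of $inc(v,c')$ and mean $\mu = a_2 m p$, giving $\exp(-a_2 m p \zeta^2/2)$.

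There is no real obstacle here: the only things to be careful about are (i) invoking independence of the individual neuron failures so that the Chernoff bound applies, (ii) phrasing the monotonicity reduction correctly so that passing to a size-exactly-$a_1 m$ (resp. size-exactly-$a_2 m$) subset only increases the probability of the "too few survivors" event, and (iii) glossing over the integrality of $a_1 m$ and $a_2 m$ as is done for $a m$ in the analogous earlier lemmas. Accordingly I expect the written proof to be a two-sentence remark of the form ``By Chernoff, using means $a_1 m p$ and $a_2 m p$ respectively and concentration parameter $\zeta$; since $inc(v,c')$ and $inc(v,c)$ may be larger than $a_1 m$ and $a_2 m$, restrict to subsets of those sizes and use a monotonicity argument,'' exactly paralleling the proof of Lemma~\ref{lem: one-concept-incoming-class1-lat-low}.
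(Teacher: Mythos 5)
Your proposal is correct and matches the paper's proof, which simply invokes the Chernoff lower-tail bound with concentration parameter $\zeta$ and means $a_1 m p$ and $a_2 m p$ for the two parts (the monotonicity reduction to size-exactly-$a_1 m$ or $a_2 m$ subsets is stated explicitly only in the analogous Class~1 lemma, but is the same implicit step here). No gaps.
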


\begin{proof}
Both parts use Chernoff with a concentration parameter of $\zeta$.  Part 1 uses a mean of $a_1 m p$ and Part 2 uses a mean of $a_2 m p$.
\end{proof}

\begin{lemma}
\label{lem: all-descendants-incoming-lat-low}
Consider a particular concept $c$ with $level(c) = \ell \geq 1$.
Then:
\begin{enumerate}
    \item 
    Let $A_1$ be the event that, for some descendant $c'$ of $c$ (possibly $c$ itself) with $level(c') \geq 1$, some $v \in reps(c')$ with $v$ in Class 1, and some child $c''$ of $c'$, the number of surviving neurons in $inc(v,c'')$ is $\leq a m p(1-\zeta)$. 
    Then $Pr(A_1) \leq \frac{k^{\ell} - 1}{k-1} \cdot k m_1 \cdot \exp\left(-\frac{a m p \zeta^2}{2}\right)$.
    \item 
    Let $A_2$ be the event that, for some descendant $c'$ of $c$ (possibly $c$ itself) with $level(c') \geq 1$, some $v \in reps(c')$ with $v$ in Class 2, and some child $c''$ of $c'$, the number of surviving neurons in $inc(v,c'')$ is $\leq a_1 m p(1-\zeta)$. 
    Then $Pr(A_2) \leq \frac{k^{\ell} - 1}{k-1} \cdot k m_2 \cdot \exp\left(-\frac{a_1 m p \zeta^2}{2}\right)$.
    \item 
    Let $A_3$ be the event that, for some descendant $c'$ of $c$ (possibly $c$ itself) with $level(c') \geq 1$, and some $v \in reps(c')$ with $v$ in Class 2, the number of surviving neurons in $inc(v,c)$ is $\leq a_2 m p(1-\zeta)$. 
    Then $Pr(A_3) \leq \frac{k^{\ell} - 1}{k-1} \cdot m_2 \cdot \exp\left(-\frac{ a_2 m p \zeta^2}{2}\right)$.
    \item  
    Let $A' = A_1 \cup A_2 \cup A_3$.  Then 
    \[Pr(A') \leq 
    \frac{k^{\ell} - 1}{k-1} \left(k m_1 \cdot \exp\left(-\frac{a m p \zeta^2}{2}\right) + k m_2 \cdot \exp\left(-\frac{a_1 m p \zeta^2}{2}\right) +  m_2 \cdot \exp\left(-\frac{a_2 m p \zeta^2}{2}\right)\right).\]
\end{enumerate}
\end{lemma}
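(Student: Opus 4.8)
The plan is to prove Parts 1--3 by straightforward union bounds over the relevant families of (descendant, $rep$, child)-tuples, mirroring exactly the way Lemma~\ref{lem: all-descendants-incoming-low} was obtained from Lemma~\ref{lem: one-concept-incoming-low}; Part 4 is then an immediate three-way union bound over $A_1$, $A_2$, and $A_3$. Note that the partition of each $reps(c')$ into Class 1 and Class 2 is fixed by the representation (it is not random), so in each case we are only bounding the probability of a survival event for a fixed set of neurons, and the per-neuron failures are independent, as used inside Lemmas~\ref{lem: one-concept-incoming-class1-lat-low} and~\ref{lem: one-concept-incoming-class2-lat-low}.

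For Part 1, I would first recall that the number of descendants $c'$ of $c$ with $level(c') \geq 1$ is $1 + k + \cdots + k^{\ell-1} = \frac{k^{\ell}-1}{k-1}$. Each such $c'$ has exactly $m_1$ $reps$ in Class 1 and exactly $k$ children $c''$. By Lemma~\ref{lem: one-concept-incoming-class1-lat-low}, for each fixed triple $(c', v, c'')$ with $v$ a Class 1 $rep$ of $c'$ and $c''$ a child of $c'$, the probability that $inc(v,c'')$ has at most $a m p(1-\zeta)$ surviving neurons is at most $\exp\left(-\frac{a m p \zeta^2}{2}\right)$. A union bound over all $\frac{k^{\ell}-1}{k-1} \cdot m_1 \cdot k$ such triples gives the stated bound on $Pr(A_1)$. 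Part 2 is identical, using the $m_2$ $reps$ in Class 2, Part 1 of Lemma~\ref{lem: one-concept-incoming-class2-lat-low}, and the threshold $a_1 m p(1-\zeta)$, yielding the factor $k m_2 \cdot \exp\left(-\frac{a_1 m p \zeta^2}{2}\right)$.

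Part 3 is the same kind of union bound but without the factor of $k$: for each of the $\frac{k^{\ell}-1}{k-1}$ descendants $c'$ with level $\geq 1$ and each of its $m_2$ Class 2 $reps$ $v$, there is a single incoming peer-$rep$ set $inc(v,c')$, and Part 2 of Lemma~\ref{lem: one-concept-incoming-class2-lat-low} bounds the probability that it has at most $a_2 m p(1-\zeta)$ survivors by $\exp\left(-\frac{a_2 m p \zeta^2}{2}\right)$; union-bounding over the $\frac{k^{\ell}-1}{k-1} \cdot m_2$ pairs gives the claim. Finally, Part 4 follows since $A' = A_1 \cup A_2 \cup A_3$ implies $Pr(A') \leq Pr(A_1) + Pr(A_2) + Pr(A_3)$, which is precisely the displayed sum. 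There is essentially no genuine obstacle here: the only thing requiring care is the bookkeeping — keeping the Class 1 and Class 2 populations separate, attaching the correct mean/concentration factor ($a$, $a_1$, or $a_2$) and population size ($m_1$, $m_2$, and an extra factor $k$ for the child-sets but not for the peer-set) to each piece, and noting that, as in the earlier survival lemmas, no independence across the union is needed.
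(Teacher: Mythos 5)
Your proof matches the paper's: Parts 1--3 are exactly the union bounds over $\frac{k^{\ell}-1}{k-1}$ descendants, their $m_1$ or $m_2$ Class reps, and (for Parts 1--2) their $k$ children, combined with Lemmas~\ref{lem: one-concept-incoming-class1-lat-low} and~\ref{lem: one-concept-incoming-class2-lat-low}, and Part 4 is the final union bound over $A_1, A_2, A_3$. Your bookkeeping of the factors ($m_1$ vs.\ $m_2$, the extra $k$ for child-sets but not the peer-set, and the exponents $a$, $a_1$, $a_2$) is correct, so nothing is missing.
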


\begin{proof}
The number of descendants of $c$ with level $\geq 1$ is exactly $\frac{k^{\ell}-1}{k-1}$.  
For each descendant $c'$ of $c$ with $level(c') \geq 1$, the number of Class 1 $reps$ is $m_1$, the number of Class 2 $reps$ is $m_2$, and the number of children of $c'$ is $k$.
A union bound, taken over all of these descendants $c'$, $reps$, and children, and Lemma~\ref{lem: one-concept-incoming-class1-lat-low}, yields Part 1.
Two other union bounds and Lemma~\ref{lem: one-concept-incoming-class2-lat-low} yield Parts 2 and 3.
Finally, Part 4 is obtained using yet another union bound, for the events $A_1$, $A_2$, and $A_3$.
\end{proof}

The final lemma combines the results of Lemmas~\ref{lem:  reps-for-all-descendants-lat-low} and~\ref{lem: all-descendants-incoming-lat-low}, for concepts $c$ with $level(c) \geq 1$.

\begin{lemma}
\label{lem: union-of-events-lat-low}
Consider a particular concept $c$ with $level(c) = \ell \geq 1$.
Define $A$ as in the statement of Lemma~\ref{lem:  reps-for-all-descendants-lat-low} and
$A'$ as in the statement of Lemma~\ref{lem: all-descendants-incoming-lat-low}.
Then $Pr(A \cup A') \leq \delta_{\ell} =$
\[\frac{k^{\ell+1} - 1}{k-1} \cdot \exp\left(-\frac{m p \zeta^2}{2}\right)
+ \frac{k^{\ell} - 1}{k-1} \left(k m_1 \cdot \exp\left(-\frac{a m p \zeta^2}{2}\right) + k m_2 \cdot \exp\left(-\frac{a_1 m p \zeta^2}{2}\right) + m_2 \cdot \exp\left(-\frac{a_2 m p \zeta^2}{2}\right)\right).\]
\end{lemma}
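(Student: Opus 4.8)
The plan is to obtain this bound by a single union bound over the two events $A$ and $A'$, using the estimates already proved. First I would invoke the elementary inequality $Pr(A \cup A') \le Pr(A) + Pr(A')$, valid for any two events with no independence or disjointness needed. Then I would substitute the bound $Pr(A) \le \frac{k^{\ell+1}-1}{k-1}\exp\!\left(-\frac{m p \zeta^2}{2}\right)$ from Lemma~\ref{lem:  reps-for-all-descendants-lat-low}, together with the bound on $Pr(A')$ supplied by Part 4 of Lemma~\ref{lem: all-descendants-incoming-lat-low}, namely $Pr(A') \le \frac{k^{\ell} - 1}{k-1}\left(k m_1 \exp(-\tfrac{a m p \zeta^2}{2}) + k m_2 \exp(-\tfrac{a_1 m p \zeta^2}{2}) + m_2 \exp(-\tfrac{a_2 m p \zeta^2}{2})\right)$.

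Adding these two expressions gives precisely the right-hand side displayed in the statement, which is exactly the value of $\delta_{\ell}$ fixed in Section~\ref{sec: parameter-values-lat-low} for $\ell \ge 1$. Hence $Pr(A \cup A') \le \delta_{\ell}$, completing the proof. This lemma plays the same role here as Lemma~\ref{lem: union-of-events-low} does in the low-connectivity feed-forward case: it aggregates all of the survival estimates into the single failure-probability parameter that the firing theorem will later condition against.

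I do not expect any genuine obstacle. The only thing requiring care is bookkeeping: checking that the three summands produced by Parts 1--3 of Lemma~\ref{lem: all-descendants-incoming-lat-low} line up term-by-term with the three parenthesized terms in the definition of $\delta_{\ell}$ (in particular that the "Class 1'' count contributes the $k m_1$ term with exponent $-\frac{a m p \zeta^2}{2}$, the "Class 2 to children'' count the $k m_2$ term with exponent $-\frac{a_1 m p \zeta^2}{2}$, and the "Class 2 to peers'' count the $m_2$ term with exponent $-\frac{a_2 m p \zeta^2}{2}$), and that the $\frac{k^{\ell+1}-1}{k-1}$ prefactor from the $reps$-survival term is distinct from the $\frac{k^{\ell}-1}{k-1}$ prefactor for the incoming-neighbor terms (the former counting all descendants including level $0$, the latter only descendants of level $\ge 1$). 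There is no probabilistic subtlety beyond the union bound itself.
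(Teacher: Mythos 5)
Your proof is correct and is exactly the argument the paper intends: a union bound $Pr(A\cup A')\le Pr(A)+Pr(A')$ followed by substitution of the bounds from Lemma~\ref{lem:  reps-for-all-descendants-lat-low} and Part 4 of Lemma~\ref{lem: all-descendants-incoming-lat-low}. The bookkeeping you describe matches the paper's definition of $\delta_\ell$ term for term, so nothing further is needed.
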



To give an idea of the size of $Pr(A \cup A')$ in Lemma~\ref{lem: union-of-events-lat-low}, we assume, as before, that the values of $k$ and $\ell$ are both $4$, the survival probability $p$ for an individual neuron is $\frac{31}{32}$, and the concentration parameter $\zeta$ is $1/4$.
Now we assume that the number $m$ of $reps$ is $640$, and $m_1 = m_2 = 320$.
Also, we assume that $a = 3/4$, $a_1 = 11/16$, and $a_2 = 3/4$.

With these values the probability in this last result is bounded by the sum of four terms (distributing the $\frac{k^{\ell} - 1}{k - 1}$ factor over the three terms in parentheses).  These terms all include the expression $\frac{p \zeta^2}{2}$, which here evaluates to $\frac{(31/32) (1/4)^2}{2})$, which is approximately $.03$.
Then the first term is approximately $4^4 \cdot \exp(-(640)(.03)$, or $256 \cdot \exp(-19.2)$, which is negligible.  The second term is approximately $4^3 \cdot 4 \cdot 320 \cdot \exp(-(480)(.03))$, or $256 \cdot 320 \cdot \exp(-14.4)$, or approximately $.05$.  The third term is approximately $4^3 \cdot 4 \cdot 320 \cdot \exp(-(440) (.03))$, or $256 \cdot 320 \cdot \exp(-13.2)$, or approximately $.15$.  The fourth term is approximately $4^3 \cdot 320 \cdot \exp(-(480)(.03)$, or $64 \cdot 320 \exp(-14.4)$, or approximately $.01$.  The sum of the four terms is approximately $.21$.\footnote{This value seems a bit high.  Of course, we could always increase $m$ to reduce the bound.}

\subsection{Proof of guaranteed firing}
\label{sec: guar-lat-low}

Our main firing theorem, Theorem~\ref{thm: main-firing-lat-low}, still talks about firing of at least $m p (1 - \zeta)$ of the $m$ neurons in the set $reps(c)$.
However, the probability uses the new definition of $\delta_{\ell}$.
Now we assume that a set $B$ of level $0$ concepts is presented at all times $\geq 0$.

\begin{theorem}
\label{thm: main-firing-lat-low}
Assume that a set $B \subseteq C_0$ is presented at all times $\geq 0$.
Let $c$ be a level $\ell$ concept, $0 \leq \ell \leq \ell_{max}$, that is in $supp_{r_2}(B)$.
Then, with probability at least $1 - \delta_{\ell}$, there are at least $m p(1-\zeta)$ neurons in $reps(c)$ that fire at all times starting from some time $\geq 0$.
\end{theorem}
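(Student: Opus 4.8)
The plan is to follow the template of Section~\ref{sec: guar-firing-low} exactly: push all of the probabilistic content into the survival lemmas of Section~\ref{sec: prob-lemmas-lat}, condition on their joint good event, and then establish firing by a deterministic induction up the hierarchy. For $\ell = 0$ the statement is immediate, as in Lemma~\ref{lem: 0-case-low}: a supported level-$0$ concept $c$ lies in $B$, so by continuous presentation every surviving neuron of $reps(c)$ fires at every time $\geq 0$, and by Lemma~\ref{lem: reps-for-one-concept-lat-low} at least $mp(1-\zeta)$ of them survive with probability at least $1 - \delta_0$. For $\ell \geq 1$, I would fix $B$ and $c$, let $A$ be the event that some descendant's $reps$ set has insufficiently many survivors and $A' = A_1 \cup A_2 \cup A_3$ the event that some $inc(v,\cdot)$ set has insufficiently many survivors, use Lemma~\ref{lem: union-of-events-lat-low} to get $Pr(A \cup A') \leq \delta_\ell$, and condition on $\overline{A \cup A'}$ for the remainder. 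Under this conditioning, every descendant $c'$ of $c$ has more than $mp(1-\zeta)$ surviving $reps$; and for every descendant $c'$ with $level(c') \geq 1$, every Class $1$ $rep$ $v$ of $c'$ has more than $amp(1-\zeta)$ surviving neurons in each $inc(v,c'')$, while every Class $2$ $rep$ $v$ of $c'$ has more than $a_1 mp(1-\zeta)$ surviving neurons in each $inc(v,c'')$ and more than $a_2 mp(1-\zeta)$ surviving neurons in $inc(v,c')$ (here $c''$ ranges over the children of $c'$).

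The heart of the argument is a claim, proved by induction on $\ell'$, that attaches to every $r_2$-supported descendant $c'$ of $c$ with $level(c') = \ell'$ two times $T_1(c') \leq T_2(c')$ such that every surviving Class $1$ $rep$ of $c'$ fires at all times $t \geq T_1(c')$, and every surviving $rep$ of $c'$ (of either Class) fires at all times $t \geq T_2(c')$; in particular more than $mp(1-\zeta)$ of $reps(c')$ fire at every time $\geq T_2(c')$. For $\ell' = 0$ take $T_1(c') = T_2(c') = 0$. For the inductive step, since $c'$ is $r_2$-supported it has at least $r_2 k$ supported children $c''$, each with a stabilization time $T_2(c'')$ by induction; let $T^*$ be their maximum. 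For a surviving Class $1$ $rep$ $v$ of $c'$ and any time $t$ with $t-1 \geq T^*$, each supported child contributes more than $amp(1-\zeta)$ firing neurons in the set $inc(v,c'')$, and these sets are pairwise disjoint (distinct concepts have disjoint $reps$, and peers live one layer up), so $pot^v(t) > r_2 k \cdot amp(1-\zeta) = \tau$ and $v$ fires at time $t$; thus $T_1(c') := T^* + 1$ works. Once the Class $1$ $reps$ of $c'$ are firing, for a surviving Class $2$ $rep$ $v$ of $c'$ and any time $t$ with $t-1 \geq T_1(c')$, the $inc(v,c'')$ sets supply more than $r_2 k \cdot a_1 mp(1-\zeta)$ firing neurons and the firing Class $1$ peers in $inc(v,c')$ supply more than $a_2 mp(1-\zeta)$; since $a_2 \geq (a - a_1) k \geq (a - a_1) r_2 k$ (using $r_2 \leq 1$ and $a_1 \leq a$), the total exceeds $a r_2 k mp(1-\zeta) = \tau$, so $v$ fires at time $t$ and $T_2(c') := T_1(c') + 1$ works. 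Instantiating the claim at $c' = c$ and combining with $Pr(\overline{A \cup A'}) \geq 1 - \delta_\ell$ yields the theorem, with stabilization time $2\ell \leq 2\ell_{max}$.

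I expect the main obstacle to be the timing and persistence bookkeeping forced by the lateral edges: unlike the feed-forward cases of Sections~\ref{sec: recog-ff} and~\ref{sec: recog-ff-low}, firing within a layer is not simultaneous, because Class $2$ $reps$ only begin firing after their Class $1$ peers have, so the induction must carry two stabilization times per concept and must assert that firing, once achieved, persists at every later step. Persistence follows from the monotonicity of the potentials under the weight-$1$ threshold rule, given that the presented input and all the relevant lower sets keep firing, but it has to be stated carefully. The only other delicate point is the arithmetic guaranteeing that the Class $2$ threshold is met, namely $r_2 k a_1 mp(1-\zeta) + a_2 mp(1-\zeta) \geq a r_2 k mp(1-\zeta)$ — which is precisely why the constraint $a_2 \geq (a - a_1) k$ was imposed on the parameters. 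Since everything probabilistic is already isolated in Lemmas~\ref{lem: reps-for-one-concept-lat-low}--\ref{lem: union-of-events-lat-low}, the rest of the proof is deterministic.
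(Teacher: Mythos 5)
Your proposal is correct and follows essentially the same route as the paper: condition on $\overline{A \cup A'}$ via the survival lemmas, then a deterministic induction up the hierarchy with staggered times for Class 1 and Class 2 $reps$ (your $T_1(c') = 2\ell'-1$, $T_2(c') = 2\ell'$ are exactly the times used in the paper's Claim), and the same arithmetic $a_1 r_2 k m p(1-\zeta) + a_2 m p(1-\zeta) \geq a r_2 k m p(1-\zeta)$ from $a_2 \geq (a-a_1)k$. Your phrasing of the invariant as "every surviving rep fires from its stabilization time onward" is an equivalent packaging of the paper's "threshold met" plus survival formulation, so there is no substantive difference.
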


We consider the cases of $\ell = 0$ and $\ell \geq 1$ separately.

\begin{lemma}
\label{lem: 0-case-lat-low}
Assume that a set $B \subseteq C_0$ is presented at all times $\geq 0$.
Let $c$ be a level $0$ concept that is in $supp_{r_2}(B)$.
Then with probability at least $1 - \delta_{0}$, there are at least $m p(1-\zeta)$ neurons in $reps(c)$ that fire at all times $\geq 0$.
\end{lemma}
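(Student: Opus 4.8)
The plan is to follow the same line as the proof of Lemma~\ref{lem: 0-case-low} in the feed-forward low-connectivity setting, with one small adaptation for the ``fire at all times'' formulation forced by the lateral-edge recognition definition (Definition~\ref{def: recog-lat}). First I would observe that, since $c \in supp_{r_2}(B)$ and $level(c) = 0$, Definition~\ref{def:support} gives $c \in B(0) = B \cap C_0$; in particular $c \in B$, and hence $reps(c) \subseteq reps(B)$.

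Next I would invoke the hypothesis that $B$ is presented at all times $\geq 0$, together with Definition~\ref{def:presented}: for every layer $0$ neuron $u \in reps(B)$ with $failed^u = 0$, the neuron $u$ fires at every time $\geq 0$. Restricting attention to $reps(c) \subseteq reps(B)$, every surviving (non-failed) neuron of $reps(c)$ therefore fires at all times $\geq 0$. Note that no propagation or threshold argument is needed here, because layer $0$ neurons are input neurons whose firing is dictated directly by the environment; this is the only place where the continuous-presentation definition, rather than the time-$0$ definition used in Section~\ref{sec: recog-ff-low}, plays a role.

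Finally I would apply the survival lemma, Lemma~\ref{lem: reps-for-one-concept-lat-low}, which states that the probability that the number of surviving neurons in $reps(c)$ is $\leq m p (1-\zeta)$ is at most $\delta_0 = \exp\left(-\frac{m p \zeta^2}{2}\right)$. Hence, with probability at least $1 - \delta_0$, strictly more than $m p (1-\zeta)$ of the neurons in $reps(c)$ survive, and by the previous paragraph all of these survivors fire at all times $\geq 0$. This gives the desired conclusion, with the ``some time $\geq 0$'' in the statement taken to be time $0$.

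I do not expect any genuine obstacle: this is a base case whose probabilistic content is entirely encapsulated in the single survival lemma, and whose firing content is immediate from the definition of presentation for input neurons. The only point requiring care is bookkeeping --- checking that $c \in B$ follows from $c \in supp_{r_2}(B)$ at level $0$, and that we are indeed entitled to the stronger ``at all times $\geq 0$'' conclusion because the presentation hypothesis itself is stated for all times $\geq 0$.
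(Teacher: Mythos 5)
Your proposal is correct and follows essentially the same route as the paper's own proof: reduce $c \in supp_{r_2}(B)$ at level $0$ to $c \in B$, use the presentation hypothesis to see that every surviving neuron of $reps(c)$ fires at all times $\geq 0$, and apply the survival lemma (Lemma~\ref{lem: reps-for-one-concept-lat-low}) to get the $1-\delta_0$ probability bound. Your write-up is in fact slightly more careful than the paper's text, which contains minor typos ($\delta_0$ in place of $1-\delta_0$, and $\delta$ in place of $\zeta$) that your version avoids.
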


\begin{proof}
Fix $B \subseteq C_0$, and fix a particular level $0$ concept $c$ that is in $supp_{r_2}(B)$.
Since $c \in supp_{r_2}(B)$ and $level(c) = 0$, we have that $c \in B$.
Since $B$ is presented at all times, all of the surviving neurons in $reps(c)$ fire at all times.
By Lemma~\ref{lem: reps-for-one-concept-lat-low}, with probability at least $\delta_0$, the number of surviving neurons in $reps(c)$ is at least $mp(1-\delta)$.  
Therefore, with probability at least $\delta_0$, there are at least $mp(1-\delta)$ neurons in $reps(c)$ that fire at all times $\geq 0$.
\end{proof}

\begin{lemma}
\label{lem: nonzero-case-lat-low}
Assume that a set $B \subseteq C_0$ is presented at all times $\geq 0$.
Let $c$ be a level $\ell$ concept, $1 \leq \ell \leq \ell_{max}$, that is in $supp_{r_2}(B)$.
Then with probability at least $1 - \delta_{\ell}$, there are
at least $m p(1-\zeta)$ neurons in $reps(c)$ that fire at all times $\geq 2 \ell$.
\end{lemma}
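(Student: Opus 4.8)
The plan is to follow the outline of the proof of Lemma~\ref{lem: nonzero-case-low}, adapted to the two Classes of $reps$ and to the fact that recognition now propagates through one level of the hierarchy in \emph{two} network steps: first the Class~1 $reps$ of a concept fire, using the forward edges from the level below, and only at the next step do the Class~2 $reps$ fire, using the lateral edges from their now-firing Class~1 peers together with their weaker forward edges. Fix $B \subseteq C_0$ and a level-$\ell$ concept $c \in supp_{r_2}(B)$ with $\ell \geq 1$. Let $A$ be the event of Lemma~\ref{lem: reps-for-all-descendants-lat-low} and $A'$ the event of Lemma~\ref{lem: all-descendants-incoming-lat-low}, so $Pr(A \cup A') \leq \delta_{\ell}$ by Lemma~\ref{lem: union-of-events-lat-low}. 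The entire rest of the argument is conditioned on $\overline{A \cup A'}$, so that: every descendant $c'$ of $c$ has more than $m p(1-\zeta)$ surviving $reps$; every Class~1 $rep$ $v$ of every descendant $c'$ with $level(c') \geq 1$ has more than $a m p(1-\zeta)$ surviving neurons in each $inc(v,c'')$, $c''$ a child of $c'$; and every Class~2 $rep$ $v$ of such a $c'$ has more than $a_1 m p(1-\zeta)$ surviving neurons in each $inc(v,c'')$ and more than $a_2 m p(1-\zeta)$ surviving neurons in $inc(v,c')$.

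The heart of the proof is an induction on the level $\ell'$ of a supported descendant $c'$ of $c$ ($0 \leq \ell' \leq \ell$), with inductive statement: \textbf{(i)} every surviving $rep$ of $c'$ fires at all times $t \geq 2\ell'$, and moreover if $\ell' \geq 1$ every surviving Class~1 $rep$ of $c'$ fires at all times $t \geq 2\ell' - 1$; \textbf{(ii)} at least $m p(1-\zeta)$ of the neurons in $reps(c')$ fire at all times $t \geq 2\ell'$, which is immediate from (i) together with the conditioning on $\overline{A}$. The base case $\ell' = 0$ is immediate: a supported level-$0$ concept lies in $B$, which is presented at all times $\geq 0$, so each surviving $rep$ fires at all times $\geq 0$. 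For the inductive step, fix $\ell' \geq 1$ and a surviving Class~1 $rep$ $v$ of $c'$. Since $c' \in supp_{r_2}(B)$, at least $r_2 k$ children of $c'$ are supported; for each supported child $c''$ and each time $t \geq 2\ell' - 2 = 2\,level(c'')$, the Class Assumption gives $|inc(v,c'')| \geq am$, the conditioning gives more than $a m p(1-\zeta)$ of these surviving, and the inductive hypothesis (i) for $c''$ makes each surviving one fire at time $t$. The sets $inc(v,c'')$ over distinct children are disjoint, so $pot^v(t+1) \geq a r_2 k m p(1-\zeta) = \tau$, hence $v$ fires at time $t+1$; ranging over $t$, $v$ fires at all times $\geq 2\ell' - 1$, proving the ``moreover'' clause. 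Next fix a surviving Class~2 $rep$ $v$ of $c'$ and a time $t \geq 2\ell' - 1$: exactly as above the supported children contribute more than $a_1 r_2 k m p(1-\zeta)$ firing forward neighbors (using $t > 2\,level(c'')$), and the surviving neurons of $inc(v,c')$ --- which are Class~1 $reps$ of $c'$, hence firing at time $t$ by the clause just proved --- contribute more than $a_2 m p(1-\zeta)$ firing lateral neighbors, disjoint from the forward ones; thus $pot^v(t+1) \geq (a_1 r_2 k + a_2) m p(1-\zeta) \geq a r_2 k m p(1-\zeta) = \tau$, the last step using $a_2 \geq (a-a_1)k \geq (a-a_1) r_2 k$ since $r_2 \leq 1$. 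So $v$ fires at all times $\geq 2\ell'$, completing (i), and (ii) follows. Instantiating the induction at $c' = c$, $\ell' = \ell$ and recalling $Pr(A \cup A') \leq \delta_{\ell}$ yields the lemma.

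The main obstacle, and the reason the conclusion has $2\ell$ in place of the $\ell$ of the feed-forward case, is getting the timing bookkeeping right: a Class~2 $rep$ can only fire once its Class~1 peers are already firing, so the induction must carry the two sub-steps of each level separately and verify that, at the instant a Class~2 $rep$ draws on them, its level-$(\ell'-1)$ forward neighbors (firing since time $2\ell' - 2$) and its same-concept Class~1 lateral neighbors (firing since time $2\ell' - 1$) are all firing simultaneously. The remaining content --- that the combined forward-plus-lateral potential reaches the threshold $\tau = a r_2 k m p(1-\zeta)$ --- is precisely the inequality $a_1 k + a_2 \geq a k$ encoded in the parameter constraint $a_2 \geq (a-a_1)k$, harmlessly weakened by a factor $r_2 \leq 1$; once the timing is settled, this and all the Chernoff and union-bound content (already packaged in the survival lemmas) are routine.
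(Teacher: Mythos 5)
Your proposal is correct and follows essentially the same route as the paper's proof: condition on the complement of $A \cup A'$ via Lemma~\ref{lem: union-of-events-lat-low}, then induct on the level of supported descendants with the staggered timing ($2\ell'-1$ for Class~1, $2\ell'$ for Class~2) and the threshold inequality $a_1 r_2 k + a_2 \geq a r_2 k$ from $a_2 \geq (a-a_1)k$ and $r_2 \leq 1$. The only cosmetic differences are that you phrase the invariant in terms of surviving reps firing rather than thresholds being met, and you fold the paper's separate $\ell'=1$ base case into the general inductive step.
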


\begin{proof}
Fix $B$, and fix a particular level $\ell$ concept $c$, $1 \leq \ell \leq \ell_{max}$, that is in $supp_{r_2}(B)$.
Define event $A$ as in Lemma~\ref{lem:  reps-for-all-descendants-lat-low} and event $A'$ as in 
Lemma~\ref{lem: all-descendants-incoming-lat-low}. 
Then by Lemma~\ref{lem: union-of-events-lat-low}, 
$Pr(A \cup A') \leq \delta_{\ell}$.

For the rest of the proof, we condition on the event $\overline{A \cup A'}$.  
The proof uses the following Claim:

\begin{claim}
\label{claim: firing-lat-low}
If $c'$ is a descendant of $c$ with $level(c') = \ell'$, $0 \leq \ell' \leq \ell$, with $c' \in supp_{r_2}(B)$, then:
\begin{enumerate}
\item  If $\ell' \geq 1$ then every Class 1 neuron $v$ in $reps(c')$ has its threshold met for all times $\geq 2 \ell' - 1$.
\item  If $\ell' \geq 1$ then every Class 2 neuron $v$ in $reps(c')$ has its threshold met for all times $\geq 2 \ell'$.
\item  At least $m p(1-\zeta)$ of the neurons in $reps(c')$ fire at all times $\geq 2 \ell'$.   
\end{enumerate}
\end{claim}

\noindent
\emph{Proof of Claim~\ref{claim: firing-lat-low}:} \\
By induction on $\ell'$, using two base cases. 

\emph{Base}, $\ell' = 0$:
Parts 1 and 2 are vacuous.
For Part 3, since $c' \in supp_{r_2}(B)$ and $level(c') = 0$, we have that $c' \in B$.
Since $B$ is presented at all times, all of the surviving neurons in $reps(c')$ fire at all times.  Since we are conditioning on $\overline{A \cup A'} \subseteq \overline{A}$, the number of surviving neurons in $reps(c')$ is at least $mp(1-\zeta)$.
Therefore at least $mp(1-\zeta)$ of the neurons in $reps(c')$ fire at all times $\geq 0 = 2 \cdot 0$, as needed. 

\emph{Base}, $\ell' = 1$:
We show Parts 1 and 2 individually.  Together they imply that every neuron $v$ in $reps(c')$ has its threshold met at all times $\geq 2 = 2 \ell'$.  Then Part 3 will follow since we are conditioning on $\overline{A \cup A'} \subseteq \overline{A}$.

To show Part 1, we fix any Class 1 neuron $v$ and show that its threshold is met for all times $\geq 1 = 2 \ell' - 1$.
Because $c'$ is supported by $B$, $c'$ has at least $r_2k$ children that are supported by $B$, and therefore are in $B$.  Consider each supported child $c''$ of $c'$ individually.
Since we are conditioning on $\overline{A \cup A'} \subseteq \overline{A'}$, we know that at least $a m p(1-\zeta)$ neurons in $inc(v,c'')$ survive.
Since $level(c'') = 0$ and $B$ is presented at all times, all of these neurons fire at all times.
Taking into account all of the supported children $c''$ of $c'$, we have 
at least $a r_2 k m p(1 - \zeta)$ weight $1$ connections to $v$ from $reps$ of children of $c'$ that fire at all times, yielding an incoming potential to $v$ of at least $a r_2 k m p(1 - \zeta)$.
This meets $v$'s firing threshold for all times $\geq 1$.

To show Part 2, we fix any Class 2 neuron $v$ and show that its threshold is met for all times $\geq 2 = 2 \ell'$.
Because $c'$ is supported by $B$, $c'$ has at least $r_2 k$ children that are supported by $B$, and therefore are in $B$.  Consider each supported child $c''$ of $c'$ individually.
Since we are conditioning on $\overline{A \cup A'} \subseteq \overline{A'}$, we know that at least $a_1 m p(1-\zeta)$ neurons in $inc(v,c'')$ survive.
Since $level(c'') = 0$ and $B$ is presented at all times,  all of these neurons fire at all times.
Taking into account all of the supported children $c''$ of $c'$, we have 
at least $a_1 r_2 k m p(1 - \zeta)$ weight $1$ connections to $v$ from firing $reps$ of children of $c'$, yielding an incoming potential to $v$ of at least $a_1 r_2 k m p(1 - \zeta)$.

But this may not be enough to meet the meet the firing threshold for $v$; for this, we must also consider connections from the $reps$ of $c'$ that are in Class 1.
Since we are conditioning on $\overline{A \cup A'} \subseteq \overline{A'}$, we have that at least $a_2 m p(1-\zeta)$ neurons in $inc(v,c')$ survive.
Since these are all in Class 1, Part 1 implies that all of their firing thresholds are met, and so they fire at all times $\geq 1$.
Since the weights for the edges from these neurons to $v$ are all $1$, this gives a total of $a_2 m p(1-\zeta)$ potential contributed to $v$ from these peers of $v$ in Class 1, for all times $\geq 2$.
Then the total potential incoming to $v$ for all times $\geq 2$ is at least $a_1 r_2 k m p(1-\zeta) + a_2 m p(1-\zeta) \geq  a_1 r_2 k m p(1-\zeta) + (a - a_1) k m p(1-\zeta) \geq a r_2 k m p(1-\zeta)$.
This meets $v$'s firing threshold for all times $\geq 2$. 

\emph{Inductive step}, $2 \leq \ell' \leq \ell$:  
This is similar to the second base step.
We show Parts 1 and 2 individually.  Together they imply that every neuron $v$ in $reps(c')$ has its threshold met at all times $\geq 2 \ell'$.
Then Part 3 follows since we are conditioning on $\overline{A \cup A'} \subseteq \overline{A}$.

To show Part 1, we fix any Class $1$ neuron $v$ and show that its threshold is met for all times $\geq 2 \ell'-1$.
Because $c'$ is supported by $B$, it has at least $r_2 k$ children that are supported by $B$.
Consider each supported child $c''$ of $c'$ individually.
Since we are conditioning on $\overline{A \cup A'} \subseteq \overline{A'}$, we have that at least $amp(1-\zeta)$ neurons in $inc(v,c'')$ survive.
Since $\ell' -1 \geq 1$, by the inductive hypothesis, Parts 1 and 2, these all have their thresholds met for all times $\geq 2 (\ell'-1)$, and therefore they fire at all times $\geq 2 (\ell'-1)$
Taking into account all of the supported children $c''$ of $c'$, we have 
at least $a r_2 k m p(1-\zeta)$ weight $1$ connections to $v$ from $reps$ of children of $c'$ that fire at all times $\geq 2 (\ell'-1)$.
This meets $v$'s firing threshold for all times $\geq 2 \ell' - 1$. 

To show Part 2, we fix any Class 2 neuron $v$ and show that its threshold is met for all times $\geq 2 \ell'$.
Because $c'$ is supported by $B$, it has at least $r_2 k$ children that are supported by $B$.
Consider each supported child $c''$ individually.
Since we are conditioning on $\overline{A \cup A'} \subseteq \overline{A'}$, we have that at least $a_1 m p(1-\zeta)$ neurons in $inv(v,c'')$ survive.
Since $\ell' \geq 1$, by the inductive hypothesis, Parts 1 and 2, these all have their thresholds met for all times $\geq 2(\ell'-1)$, and therefore they fire at all times $\geq 2(\ell'-1)$.  
Taking into account all of the supported children $c''$ of $c'$, we have at least $a_1 r_2 k m p(1-\zeta)$ weight $1$ connections to $v$ from $reps$ of children of $c'$ that fire at all times $\geq 2 (\ell'-1)$, yielding an incoming potential to $v$ for all times $\geq 2 \ell' - 1$ of at least $a_1 r_2 k m p(1-\zeta)$.

To meet the firing threshold for $v$, we also consider the $reps$ of $c'$ that are in Class 1.
Arguing as in the base case for $\ell' = 1$, we get a total of $a_2 m p(1-\zeta)$ potential contributed to $v$ from these peers of $v$ in Class 1, for all times $\geq 2 \ell'$.
Then the total potential incoming to $v$ for all times $\geq 2\ell'$ is at least 
$a_1 r_2 k m p(1-\zeta) + a_2 m p(1-\zeta) \geq a_1 r_2 k m p(1-\zeta) + (a - a_1) k m p(1-\zeta)\geq  a r_2 k m p(1-\zeta)$.
This meets $v$'s firing threshold for all times $\geq 2 \ell'$. \\
\emph{End of proof of Claim~\ref{claim: firing-lat-low}.}

Instantiating Part 3 of Claim~\ref{claim: firing-lat-low} with $c' = c$ yields the lemma, taking into account that $Pr(A \cup A') \leq \delta_{\ell}$. 
\end{proof} %

\begin{proof} (Of Theorem~\ref{thm: main-firing-lat-low}):  Follows from Lemmas~\ref{lem: 0-case-lat-low} and~\ref{lem: nonzero-case-lat-low}. 
\end{proof}

\subsection{Proof of guaranteed non-firing}

And now we prove our main non-firing theorem.

\begin{theorem}
\label{thm: main-non-firing-lat-low}
Assume that $r_1 \leq a r_2 p(1-\zeta)$ and $r_2 > 0$.
Assume that a set $B \subseteq C_0$ is presented at all times.  
Let $c$ be a level $\ell$ concept, $0 \leq \ell \leq \ell_{max}$, that is not in $supp_{r_1}(B)$.
Then none of the neurons in $reps(c)$ fire at any time.
\end{theorem}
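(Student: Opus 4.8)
The plan is to follow the pattern of the earlier non-firing results, Theorems~\ref{thm: main-non-firing} and~\ref{thm: main-non-firing-low}, and of Lemma~5.2 of~\cite{DBLP:conf/sirocco/LynchM23}. The first step is a monotonicity reduction to the most favorable situation for firing. Let $\mathcal N^+$ be the idealized network obtained from $\mathcal N$ by declaring that no neuron fails, that \emph{every} edge from a $rep$ of a child concept to a $rep$ of its parent has weight $1$ (so each parent $rep$ has all $km$ such incoming edges), and that \emph{every} lateral edge between two $reps$ of the same concept has weight $1$ (so each $rep$ has all $m-1$ of its peer edges); all other edges keep weight $0$, and the initial $firing$ values of non-input neurons are $0$ as in $\mathcal N$. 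A routine induction on time shows that the set of firing neurons at each time in $\mathcal N$ is contained in the corresponding set for $\mathcal N^+$: un-failing all neurons and raising to $1$ the weights of all child-$rep$-to-parent-$rep$ edges and all peer-$rep$-to-peer-$rep$ edges can only increase potentials, hence can only add firings (and for input neurons, the firing set in $\mathcal N$ is $reps(B)$ minus the failed neurons, a subset of the $reps(B)$ that fire in $\mathcal N^+$). So it suffices to prove that in $\mathcal N^+$ no $rep$ of $c$ ever fires; note also that in $\mathcal N^+$ every weight-$1$ edge runs from a $rep$ to a $rep$, so non-$rep$ neurons play no role in the potential of any $rep$.

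In $\mathcal N^+$ I would prove, by induction on the time $t \ge 0$, the statement: for every concept $c'$ of $\mathcal C$ with $c' \notin supp_{r_1}(B)$, no neuron in $reps(c')$ fires at time $t$. For $t=0$: non-input neurons have initial $firing = 0$, and a level-$0$ concept $c'$ with $c' \notin supp_{r_1}(B)$ has $c' \notin B$, so none of $reps(c')$ is presented and none fires. For $t \ge 1$, fix such a $c'$, say at level $\ell'$. If $\ell' = 0$ the argument is exactly as in the base case (layer-$0$ neurons are purely environment-driven and $c' \notin B$). If $\ell' \ge 1$, fix $v \in reps(c')$. Because all edge weights are $0$ or $1$ and all $firing$ values are $0$ or $1$, $pot^v(t)$ is simply the number of weight-$1$ in-neighbors of $v$ that fire at time $t-1$; in $\mathcal N^+$ those in-neighbors are precisely the $reps$ of children of $c'$ together with the peer $reps$ in $reps(c') \setminus \{v\}$. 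By Definition~\ref{def:support}, $c' \notin supp_{r_1}(B)$ means $|children(c') \cap B(\ell'-1)| < r_1 k$, that is, strictly fewer than $r_1 k$ children of $c'$ lie in $supp_{r_1}(B)$; and by the induction hypothesis at time $t-1$, a $rep$ of a child of $c'$ can fire at time $t-1$ only if that child is in $supp_{r_1}(B)$, so the child-$rep$ in-neighbors of $v$ contribute fewer than $r_1 k m$ to $pot^v(t)$. By the induction hypothesis at time $t-1$ applied to $c'$ itself (using the base case when $t-1=0$), no peer $rep$ of $v$ fires at time $t-1$, so that contribution is $0$. Hence $pot^v(t) < r_1 k m \le a r_2 k m p(1-\zeta) = \tau$, using the hypothesis $r_1 \le a r_2 p(1-\zeta)$, so $v$ does not fire at time $t$. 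Taking $c' = c$ gives the theorem.

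The step I expect to require the most care is the handling of the lateral edges in the inductive step: unlike in the feed-forward cases, $v$ could a priori gain potential from $reps$ at its own level, and the argument closes only because the weight-$1$ lateral edges into $v$ come exclusively from $reps$ of the \emph{same} concept $c'$ — in particular, never from $reps$ of supported sibling concepts — which is exactly what lets the induction hypothesis on $c'$ itself annihilate the peer contribution. It is worth noting, as in Section~\ref{sec: guar-non-firing}, that the stronger separation $r_1 \le a r_2 p(1-\zeta)$ (rather than $r_1 \le r_2$ as in~\cite{DBLP:conf/sirocco/LynchM23}) is forced precisely by the lowered threshold $a r_2 k m p(1-\zeta)$ used here, and that, as in the other non-firing results, failures are deliberately ignored, since exploiting them would amount to counting on a \emph{lower} bound for the number of failures, which failure estimates do not provide.
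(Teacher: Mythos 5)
Your proposal is correct and takes essentially the same route as the paper's (sketched) proof: reduce to the most favorable situation — no failures, all-to-all weight $1$ connectivity from child $reps$ to parent $reps$ and among peer $reps$ — and then run the separation argument $pot < r_1 k m \leq a r_2 k m p(1-\zeta) = \tau$, following the earlier Lemma 5.2-style induction. Your explicit monotonicity reduction and the time-induction treatment of the peer (lateral) contribution simply supply the details that the paper's sketch defers to the prior work.
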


\begin{proof} (Sketch:)
As before, we ignore the failures for the non-firing part.  Now, because we want to consider the most favorable situation for the recognition algorithm, we assume all-to-all connectivity from $reps$ of children to $reps$ of parents, and also among $reps$ of the same concept.
Under these conditions, all $reps$ of all concepts at levels $\geq 1$ are in Class 1.
Then the proof again follows arguments like those in~\cite{DBLP:conf/sirocco/LynchM23}.
\end{proof}

\section{Learning in Feed-Forward Networks}
\label{sec: learning}

In this section and the next, we describe how the representations discussed in the previous three sections might be learned.  We do not present our learning algorithms in detail, but just discuss them at a high level, relying freely on results from~\cite{LM21} and~\cite{DBLP:conf/innovations/Legenstein0PV18} for some of the descriptions and arguments.

We focus here on the problem of learning a particular concept $c$ and its descendants, rather than learning the entire concept hierarchy $\mathcal C$.
The learning strategies we consider all proceed bottom-up, identifying the set $reps(c')$ for a descendant $c'$ of $c$, and fixing their incoming edge weights, only after the $reps$ and weights for the children of $c'$ has already been learned.

We assume that the network contains forward edges from all layer $\ell - 1$ neurons to all layer $\ell$ neurons.  In addition, for networks with lateral edges, we have edges between all pairs of neurons in the same layer.
The initial weights of the edges will be specified as part of each algorithm description.  After a learning algorithm completes, the weights of all edges will be either $0$ or $1$.
The only edges that may have weight $1$ after learning is completed are forward edges from $reps$ of children to $reps$ of their parents, and lateral edges between $reps$ of the same concept.
We assume here that the chosen $reps(c)$ neurons record this status in their state, by setting a special $rep$ variable, whose value starts as $\bot$, to $c$.

To keep things simple, we avoid considering failures during learning.\footnote{It is not entirely clear what types of failures should be considered.  Initial failures of layer $0$ $rep$ neurons do not seem reasonable, because then we would entirely miss learning some parts of the concept hierarchy.  We might consider initial failures of higher-layer neurons, or failures of different layer $0$ neurons at the start of each new training instance; this latter type of failure is similar to what we assumed in the "noisy learning" algorithm in~\cite{LM21}.  
But at any rate, we avoid this for now and leave it for future work.}

In the remainder of this section, we consider feed-forward networks, in both the high-connectivity and low-connectivity cases.
In Section~\ref{sec: learning-lateral} we consider networks that also contain lateral edges.

In feed-forward networks, in both the high-connectivity and low-connectivity cases, the $reps$ for a concept $c$ can be selected and their incoming weights can be learned using  variants of the noise-free learning algorithm in~\cite{LM21}.

\subsection{Feed-forward networks with high connectivity}
\label{sec: learning-ff}

In this case, as we described earlier, we assume that the network $\mathcal N$ has total connectivity from each layer to the next-higher layer, and no lateral edges.
Prior to learning, we assume that all the edge weights of forward edges have the same small value, for example, $\frac{1}{k^{\ell_{max}} + 1}$ as in~\cite{LM21}.
As in~\cite{LM21}, we assume that the thresholds are known ahead of time and are fixed at the same value for all neurons in layers $\geq 1$. 
Since we are not now considering partial information or failures, we assume that the threshold is simply $k m$.\footnote{This threshold value is higher than the threshold of $r_2 k m p(1-\zeta)$ that is used in the main recognition result, Theorem~\ref{thm: main}.
For now, we will just assume that the firing threshold can be different for learning and recognition, which is not implausible.
If we modify the learning algorithm so that it allows some partial information or failures, then its threshold would have to be reduced.  We leave this for future work.}

For learning a concept $c$, we generally follow the bottom-up noise-free learning algorithm of~\cite{LM21}, learning the descendants of $c$ level-by-level starting with level $1$.
Consider a particular descendant $c'$ of $c$ with $level(c') = \ell'$, and suppose that the children of $c'$ have already been learned. Then $leaves(c')$ get presented, and cause all the $reps$ of children of $c'$ to fire.
These provide incoming potential to the neurons at layer $\ell'$.
In~\cite{LM21}, at this point in the algorithm, a Winner-Take-All mechanism is used to select a single $rep$ neuron in layer $\ell'$, namely, one with the highest incoming potential (breaking ties according to some default).
Here, instead of using a simple Winner-Take-All, the learning algorithm uses an $m$-Winner-Take-All, which selects the $m$ available neurons in layer $\ell'$ that have the highest incoming potential.  Note that these neurons need not have fired---they just need to have high incoming potential.

The algorithm then engages all of the selected neurons in learning, and adjusts their incoming weights in several steps, strengthening the weights of edges from $reps$ of children of $c'$ to $reps$ of $c'$, and weakening other incoming edges of $reps$ of $c'$.
Note that the final weights in~\cite{LM21} are scaled and approximate, approaching values $0$ or $\frac{1}{\sqrt{k}}$ in the limit.\footnote{To approach $1$ instead of $\frac{1}{\sqrt{k}}$, we can use simple scaling ideas as in~\cite{DBLP:conf/sirocco/LynchM23}.}
Here we assume a modified Hebbian-style learning rule that actually reaches final values of $0$ or $1$.\footnote{Such a modified rule might involve slight adjustment to weights that are "very close" to their targets so that they actually reach these values.}
We do this because this matches up better with our recognition results in Section~\ref{sec: recog-ff}. 

We state our correctness theorem somewhat informally.  
We say that a network (one that results from learning) \emph{correctly represents} a concept $c$ with $level(c) \geq 1$ if it contains 
$reps$ and weights that satisfy the requirements in Section~\ref{sec: representation} for $c$ and its descendants.\footnote{This is not only informal, but also not quite right because the definition in Section~\ref{sec: representation} is for the network representing the entire concept hierarchy $\mathcal C$, whereas we are concerned here only for $c$ and its descendants.  But it is straightforward to modify the earlier definitions so that they apply to this smaller hierarchical structure.}


\begin{theorem}
\label{thm: learning-2}
Let $c$ be any particular concept in $\mathcal C$, with $level(c) = \ell, 1 \leq \ell \leq \ell_{max}$.
Then our learning algorithm results in a network that correctly represents $c$. 
\end{theorem}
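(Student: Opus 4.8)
The plan is to prove this by induction on the level, mirroring the bottom-up structure of the learning algorithm. Concretely, I would maintain the invariant: after the algorithm has finished processing all levels up to and including $j$, for every descendant $c'$ of $c$ with $level(c') \le j$ the network has a distinguished set $reps(c')$ of $m$ neurons in layer $level(c')$; all of these sets (over every concept processed so far, at every level) are pairwise disjoint; and the incoming edge weights of each $v \in reps(c')$ are exactly those demanded in Section~\ref{sec: representation}, namely weight $1$ from each of the $km$ neurons in $\bigcup_{c'' \in children(c')} reps(c'')$ and weight $0$ from every other layer-$(level(c')-1)$ neuron. The base case $j=0$ is immediate: the level-$0$ reps are the fixed input representations, which are disjoint by assumption and carry no incoming edges.

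For the inductive step, assume the invariant holds through level $\ell'-1$ and consider a descendant $c'$ of $c$ with $level(c') = \ell'$, processed next. When $leaves(c')$ is presented, every descendant of $c'$ has all of its leaves present, hence is $1$-supported, so the inductive hypothesis (all lower-level weights are already clean $0/1$ values) together with the degenerate, failure-free, full-information version of the firing-propagation argument used in Claim~\ref{claim: firing} — established in~\cite{LM21} for threshold $km$ and weight-$1$ edges — shows that all $km$ neurons in $\bigcup_{c'' \in children(c')} reps(c'')$ fire at time $\ell'-1$, and (using disjointness of leaf-sets of non-nested concepts) that these are the only firing layer-$(\ell'-1)$ neurons. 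Then at time $\ell'$ a layer-$\ell'$ neuron $u$ has incoming potential $\sum_w weight(w,u)$ over those $km$ firing neighbors $w$. If $u$ has not yet been assigned to any concept it still carries the uniform initial weight $1/(k^{\ell_{max}}+1)$ on all incoming edges, so its potential is $km/(k^{\ell_{max}}+1) > 0$; if $u$ is already a $rep$ of some other level-$\ell'$ concept $c''$, its only weight-$1$ incoming edges come from $\bigcup_{d \in children(c'')} reps(d)$, and since $children(c') \cap children(c'') = \emptyset$ and the $reps$ sets are disjoint, no firing neuron feeds $u$ with positive weight, so $u$'s potential is $0$. Hence the $m$-Winner-Take-All selects $m$ previously-unassigned neurons, which become $reps(c')$, automatically disjoint from all earlier $reps$ sets.

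Finally, the algorithm engages exactly these $m$ neurons while the same firing pattern persists and applies the modified Hebbian rule; since among layer-$(\ell'-1)$ neurons precisely the $km$ members of $\bigcup_{c'' \in children(c')} reps(c'')$ fire, the rule strengthens, for each engaged $v \in reps(c')$, the $km$ edges from those neurons and weakens all other incoming edges of $v$, and by the convergence analysis of~\cite{LM21} together with the terminal adjustment noted in the footnote these weights become exactly $1$ and $0$ after finitely many steps. Non-engaged neurons do not change their weights, so all previously processed representations are untouched, and once the engaged neurons record $rep := c'$ the invariant is re-established through level $\ell'$. Iterating over levels $1,\dots,\ell$ (with $c$ itself processed last) yields a network that correctly represents $c$. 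I expect the selection/freshness step to be the main obstacle: everything hinges on showing the $m$-WTA never recycles a neuron already committed to a different same-level concept, which relies exactly on the disjointness of children-sets across same-level concepts and of the $reps$ sets, forcing zero potential into committed reps while untouched neurons still receive a strictly positive contribution from the uniform initial weights; a secondary technical point, handled as in~\cite{LM21}, is the multi-step timing — holding the leaf presentation long enough for firing to reach layer $\ell'$ with intermediate layers behaving per the inductive hypothesis, and running the engaged phase long enough for the weights to converge.
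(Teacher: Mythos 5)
Your argument is correct and follows exactly the route the paper intends: the paper's own ``proof'' is a one-line remark that the claim is analogous to the noise-free learning analysis of~\cite{LM21}, i.e., a bottom-up induction in which presenting $leaves(c')$ makes precisely the $km$ reps of the children fire, the $m$-WTA then selects $m$ fresh neurons (committed same-level reps receive zero potential, uncommitted ones a positive uniform potential), and the modified Hebbian rule drives the selected neurons' incoming weights to exactly $1$ on child-rep edges and $0$ elsewhere. Your write-up simply supplies the details (disjointness invariant, freshness of the WTA selection, timing) that the paper leaves implicit by citation.
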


\begin{proof}
Analogous to the proof for the noise-free learning algorithm in~\cite{LM21}.
\end{proof}

Theorem~\ref{thm: learning-2} says that the learning algorithm yields a correct representation for $c$ and its descendants.  We can also consider correctness of a combination of successful learning of $c$ and successful recognition of $c$ after learning:

\begin{corollary}
\label{cor: combined-learning-and-recognition-1}
Let $c$ be any particular concept in $\mathcal C$, with $level(c) = \ell, 1 \leq \ell \leq \ell_{max}$.
The learning algorithm for $c$ yields a network that satisfies the following property. 
Let $B \subseteq C_0$ such that $c$ is in $supp_{r_2}(B)$.
Then with probability at least $1 - \delta_{\ell}$, at least $m p(1-\zeta)$ of the neurons in $reps(c)$ fire at time $\ell$.
\end{corollary}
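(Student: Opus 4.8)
The plan is to combine the two results already in hand: Theorem~\ref{thm: learning-2}, which says the learning algorithm produces a network correctly representing $c$ and its descendants, and Theorem~\ref{thm: main-firing}, which gives the probabilistic firing guarantee for any network built according to the representation of Section~\ref{sec: representation}. The corollary is then essentially immediate, once we check that the two results dock together cleanly.

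First I would invoke Theorem~\ref{thm: learning-2} to obtain that the network $\mathcal N$ resulting from the learning algorithm satisfies, for $c$ and all its descendants $c'$, the structural and weight requirements of Section~\ref{sec: representation}: each such $c'$ has a disjoint size-$m$ set $reps(c')$ located in layer $level(c')$, and the weight-$1$ edges into a neuron $v \in reps(c')$ are exactly those from the $reps$ of children of $c'$, all of which are present (total connectivity, weight $1$). I would also note that for the recognition phase the firing threshold of every non-input neuron is reset to the value $\tau = r_2 k m p(1-\zeta)$ fixed in Section~\ref{sec: parameter-values}; the learning threshold $km$ is used only while learning, and, as the text already remarks, it is unproblematic to use different thresholds for the two phases.

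Next I would observe that the learning algorithm is deterministic and involves no failures, so the network $\mathcal N$ is entirely fixed before recognition begins; the only randomness present during recognition is the independent initial failure of each neuron with probability $q$, which is exactly the probabilistic setting of Section~\ref{sec: prob-recog} under which Theorem~\ref{thm: main-firing} is proved. Hence, with $B$ presented at time $0$ and $c \in supp_{r_2}(B)$, Theorem~\ref{thm: main-firing} applies to $c$ and yields directly that, with probability at least $1 - \delta_\ell$, at least $mp(1-\zeta)$ of the neurons in $reps(c)$ fire at time $\ell$, which is the assertion of the corollary.

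The only point requiring care — and the main, rather minor, obstacle — is that Theorem~\ref{thm: main-firing} is stated for a network representing the entire hierarchy $\mathcal C$, whereas the learning algorithm has fixed $reps$ and incoming weights only for $c$ and its descendants. I would dispose of this exactly as the footnote in Section~\ref{sec: learning-ff} anticipates: the proof of Theorem~\ref{thm: main-firing} is purely local. The survival lemmas (Lemmas~\ref{lem: reps-for-one-concept} and~\ref{lem: reps-for-all-descendants}) quantify only over descendants of $c$, and the inductive firing argument (Claim~\ref{claim: firing}) refers only to the sets $reps(c')$ for descendants $c'$ of $c$ and to the weight-$1$ edges among them; the state of any neuron outside this sub-structure is irrelevant to whether the $reps(c)$ neurons reach threshold at time $\ell$. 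So, after restricting attention to the learned sub-hierarchy rooted at $c$, the entire argument of Theorem~\ref{thm: main-firing} carries over verbatim, and the corollary follows.
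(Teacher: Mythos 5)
Your proposal matches the paper's proof exactly: the corollary is obtained by composing Theorem~\ref{thm: learning-2} (correct representation from learning) with Theorem~\ref{thm: main-firing} (probabilistic firing guarantee given the representation). The additional care you take over the threshold switch and the restriction to the sub-hierarchy rooted at $c$ is consistent with what the paper relegates to its footnotes, so nothing further is needed.
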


\begin{proof}
By Theorem~\ref{thm: learning-2} and Theorem~\ref{thm: main-firing}.
The first of these is guaranteed to yield a correct representation, and the second of these relies on the representation to get recognition with probability at least $1 - \delta_{\ell}$.
\end{proof}

\subsection{Feed-forward networks with low connectivity}
\label{sec: learning-ff-low}

For the case of feed-forward networks with low connectivity, we use a different assumption about the initial network from what we used in Section~\ref{sec: learning-ff}.
Instead of assuming total connectivity between successive layers with nonzero weight edges, we assume that the connections are chosen randomly.
Thus, we assume the following new parameter:
\begin{itemize}
    \item $p' \in [0,1]$, representing the probability of including each edge.  Specifically, for each edge from a layer $\ell$ to the next-higher layer $\ell + 1$, with probability $p'$, we assign a small weight to the edge, and otherwise we assign weight $0$.
    \end{itemize}

For learning a concept $c$, the learning algorithm proceeds by learning the descendants of $c$ bottom-up, as in Section~\ref{sec: learning-ff}.
We modify the thresholds to accommodate the missing edges. Thus, instead of using a threshold of $k m$, we use $a k m$, where $a$ is the coefficient that is assumed in Section~\ref{sec: recog-ff-low}.
Again, for each descendant $c'$ of $c$ with $level(c') = \ell'$, we use an $m$-Winner-Take-All mechanism to select the $m$ available neurons in layer $\ell'$ with the highest incoming potential from all $km$ $reps$ of all children of $c'$.  We increase the nonzero weights of incoming edges from $reps$ of children of $c'$ to $reps$ of $c'$, to $1$.\footnote{
When we increase the weights of edges from $reps$ of children of $c'$ to $reps$ of $c'$, we avoid increasing the weights of edges whose weights are $0$.   Thus, we are treating those edges whose weights were initially set to $0$ as if they did not exist.}

For this to work correctly, we must reconcile a fundamental difference between the operation of the learning algorithm and the conditions required for the representation, in Section~\ref{sec: representation-low}.
For any particular descendant $c'$ of $c$, our goal is that, for every chosen $v \in reps(c')$, with high probability, each of the $k$ children of $c'$ individually should provide at least $am$ weight $1$ edges from $reps(c'')$ to $v$.
However, the algorithm chooses the $m$ $reps$ of $c'$ based on having the highest total incoming potential from all $k m$ $reps$ of all children of $c'$.  
That means that they have the highest total number of nonzero weight incoming edges, from all of these $km$ $reps$ combined.

To reconcile this difference, we assume the following new parameters:
\begin{itemize}
    \item $b \in (a,1]$; for each $v$, the learning algorithm should yield at least $b k m$ total weight $1$ incoming edges to $v$ from $reps$ of all children of $c'$.
    \item $\theta^{learn} \in [0,1]$, a small bound on the probability that the learning algorithm does not yield this total.
    \item $\theta^{bridge} \in [0,1]$, a small probability to bridge the gap between the bound of $b k m$ total weight $1$ incoming edges achieved by the learning algorithm and the needed individual bounds of $a m$ weight $1$ incoming edges for the separate children of $c'$.
\end{itemize}
Let $\theta = \theta^{learn} + \theta^{bridge}$, and $\theta_{\ell} = \frac{k^{\ell}-1}{k-1} \theta$,
 
These parameters are required to satisfy the following two constraints. 
We state them in terms of simplified experiments.  \\

\noindent
\textbf{Constraint 1:}
Consider two layers, $\ell-1$ and $\ell$.
Consider any particular set $S$ of exactly $k m$ neurons in layer $\ell-1$.
Suppose that each edge from layer $\ell-1$ to layer $\ell$ is selected independently, with probability $p'$.  
Then with probability at least $1 - \theta^{learn}$, there are at least $m$ neurons in layer $\ell$, each of which has at least $b k m$ incoming edges from neurons in $S$. \\

\noindent
\textbf{Constraint 2:}
Consider the experiment of choosing edges independently with probability $p'$, from a pool of $k$ groups of $m$ edges apiece.
Let $B$ be the event that at least $b k m$ edges are chosen, in total.
Let $A$ be the event that, in each of the $k$ separate groups of $m$ edges, at least $a m$ are chosen.
Then $Pr(A|B) \geq 1 - \theta^{bridge}$. \\

To match these experiments up with the actual learning algorithm, we simplify by assuming that the concept $c$ (and all of its descendants) are chosen first, and then the edges are chosen randomly.  This is because we want to avoid having the choice of concept $c$ depend on the choice of missing edges, which would complicate analysis. 

We consider learning the $reps$ of one descendant $c'$ of $c$, after learning the $reps$ of all the children of $c'$.

For Constraint 1, we let $S$ be the set of all $k m$ $reps$ of children of $c'$.  Then Constraint 1 says that, with high probability, each of the $m$ layer $\ell$ neurons with the highest potential has at least $b k m$ nonzero-weight incoming edges from $reps$ of children of $c'$.  
This implies that, after learning, each of the $m$ chosen $reps$ of $c'$ has at least $b k m$ weight $1$ incoming edges.\footnote{Constraint 1 certainly holds for sufficiently large values of $p'$, $b$, and the number $n$ of potential level $0$ concepts.
We leave it for future work to characterize the satisfying values precisely.}

For Constraint 2, we fix any particular $v \in reps(c')$.  We consider the $k$ groups of $m$ edges each, from the $reps$ of the $k$ children of $c'$ to $v$.
Then $B$ is the event that $v$ has at least $b k m$ incoming nonzero weight edges from all the $reps$ of children of $c'$, in total.
And $A$ is the event that $v$ has at least $a m$ incoming nonzero weight edges from each group of $m$ edges.
Constraint 2 says that, with high probability, given that $v$ has at least $b k m$ incoming nonzero weight edges from all the $reps$ of children of $c$, in total, then $v$ has at least $a m$ incoming nonzero weight edges from the $reps$ of each child.
Using Constraint 1, this is enough to show that, after learning, $v$ has at least $a m$ weight $1$ incoming edges from the $reps$ of each child.  This is what is needed for the representation in Section~\ref{sec: representation-low}.\footnote{
 We again leave it for future work to characterize the parameter values for which Constraint 2 holds.
 Here we simply note some relevant facts.
 In Constraint 2, $Pr(B)$ is given by by the upper tail of the binomial distribution $b(k m, p')$, based on the value being $\geq b k m$.
 And $Pr(A) = Pr(A_1)^k$, where $A_1$ is the event that, out of one group, say the first group, at least $a m$ are chosen.
 Here $Pr(A_1)$ is given by the upper tail of the binomial distribution $b(m,p')$, based on the value being $\geq a m$.
 Note that $A \subseteq B$, since $a \leq b$.
 So we have $Pr(A | B) = \frac{Pr(A \cap B)}{Pr(B)} = \frac{Pr(A)}{Pr(B)}$.}

Once again, we state our correctness theorem somewhat informally.
We say that a network (one that results from learning) \emph{correctly represents} a concept $c$ with $level(c) \geq 1$ if it contains $reps$ and weights that satisfy the requirements in Section~\ref{sec: representation-low} for $c$ and its descendants.
Now we obtain:

\begin{theorem}
\label{thm: learning-low-2}
Assume that the parameters ($n$, $k$, $m$, $p'$, $a$, $b$, $\theta^{learn}$, $\theta^{bridge}$) satisfy Constraints 1 and 2.
Let $c$ be any particular concept in $\mathcal C$, with $level(c) = \ell$, $1 \leq \ell \leq \ell_{max}$.
Then with probability at least $1 - \theta_{\ell}$, our learning algorithm results in a network that correctly represents $c$. 
\end{theorem}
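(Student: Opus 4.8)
The plan is to mirror the inductive correctness proof for the noise-free learning algorithm of~\cite{LM21}, adding a probabilistic layer to absorb the randomly chosen edges. Fix $c$ with $level(c)=\ell$ and have the learning algorithm process the descendants of $c$ in rounds $\ell'=1,2,\ldots,\ell$, where round $\ell'$ selects, and fixes the incoming weights of, the set $reps(c')$ for every descendant $c'$ of $c$ with $level(c')=\ell'$. I would prove, by induction on $\ell'$, the statement: \emph{conditioned on the favorable events of rounds $1,\ldots,\ell'$, the network restricted to layers $0,\ldots,\ell'$ correctly represents every descendant of $c$ at levels $\le\ell'$, in the sense of Section~\ref{sec: representation-low}.} The base case $\ell'=0$ is immediate, since the level-$0$ $reps$ sit in $N_0$ by assumption. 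At the end, a union bound over the $\frac{k^{\ell}-1}{k-1}$ descendants of $c$ with level $\geq 1$ turns a per-descendant failure budget of $\theta=\theta^{learn}+\theta^{bridge}$ into the claimed $\theta_{\ell}$.

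For the deterministic core of the inductive step, take a descendant $c'$ with $level(c')=\ell'\geq 1$ and assume the induction hypothesis for lower levels. When $leaves(c')$ is presented, the disjointness of the children sets of distinct same-level concepts, together with the already-established correct representation of the descendants of $c'$, forces the set of layer-$(\ell'-1)$ neurons firing at time $\ell'-1$ to be exactly $reps(children(c'))$ and nothing else: a level-$(\ell'-1)$ concept that is not a child of $c'$ has none of its leaves presented, so none of its $reps$ ever fires along this chain. Since before learning every nonzero forward edge carries the same small weight, the incoming potential of a layer-$\ell'$ neuron $w$ at time $\ell'$ is proportional to the number of nonzero-weight edges into $w$ from $reps(children(c'))$; hence the $m$-Winner-Take-All selects the $m$ layer-$\ell'$ neurons with the most such edges. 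The modified Hebbian rule then raises exactly those nonzero incoming edges from $reps(children(c'))$ to weight $1$ and drives all other incoming edges of the selected neurons to $0$, and these $m$ neurons are disjoint from all previously chosen $reps$ sets (there is room, as in~\cite{LM21}, since at most $k^{\ell-\ell'}m$ layer-$\ell'$ neurons are ever claimed). This establishes everything required by Section~\ref{sec: representation-low} except the per-child bound $|inc(v,c')|\geq am$.

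That last bound is exactly where the two Constraints enter. Applying Constraint 1 with $S=reps(children(c'))$, a set of exactly $km$ layer-$(\ell'-1)$ neurons, with probability at least $1-\theta^{learn}$ there are at least $m$ layer-$\ell'$ neurons each having at least $bkm$ incoming edges from $S$; consequently each of the $m$ neurons actually chosen by the $m$-WTA has at least $bkm$ incoming nonzero-weight edges from $reps(children(c'))$, i.e., event $B$ of Constraint 2 holds for it. Conditioning on $B$, Constraint 2 gives that, with probability at least $1-\theta^{bridge}$, this neuron has at least $am$ incoming nonzero-weight edges (hence, after the update, weight-$1$ edges) from $reps(c'')$ for every child $c''$ of $c'$, which is precisely $|inc(v,c'')|\geq am$. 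Folding the two failure events together yields per-descendant failure probability at most $\theta$, and the union bound over descendants closes the induction and gives Theorem~\ref{thm: learning-low-2}.

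The main obstacle is making the reduction to Constraints 1 and 2 genuinely rigorous rather than heuristic, in two respects. First, one must justify that the $m$-WTA potentials really are monotone functions of the edge counts from $reps(children(c'))$ alone; this rests on the inductive claim that \emph{only} the $reps$ of children of $c'$ fire at time $\ell'-1$, which in turn depends on the disjointness properties and on the noise-free, failure-free setting of the learning phase (and on a precise statement of "correct representation" restricted to a subhierarchy). Second, and more delicately, "being selected by the $m$-WTA" is a more complicated event than event $B$ of Constraint 2, so one needs a monotonicity/coupling argument — of the flavor indicated in the footnote to Section~\ref{sec: learning-ff-low}, using that the conditional law of $v$'s edge set given its edge count is uniform and that event $A$ is monotone in that edge set — to conclude that conditioning on selection is at least as favorable as conditioning on $B$; one must also settle whether handling all $m$ selected $reps$ of a given descendant costs an additional factor of $m$ on $\theta^{bridge}$ or is meant to be absorbed into the statement of Constraint 2. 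The remaining pieces — the weight updates and the disjointness bookkeeping — are routine and identical in spirit to~\cite{LM21}.
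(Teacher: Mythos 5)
Your proposal follows essentially the same route as the paper: the paper states Theorem~\ref{thm: learning-low-2} without a detailed proof, relying on exactly the per-descendant reduction you describe --- Constraint 1 applied with $S = reps(children(c'))$ to give each chosen $rep$ at least $bkm$ nonzero-weight incoming edges, Constraint 2 to bridge to the per-child bound $|inc(v,c'')| \geq am$, a union bound over the $\frac{k^{\ell}-1}{k-1}$ descendants with budget $\theta = \theta^{learn}+\theta^{bridge}$ each, and the deterministic $m$-WTA/weight-update bookkeeping inherited from~\cite{LM21}. The subtleties you flag (conditioning on WTA selection versus conditioning on event $B$, and whether an extra factor of $m$ over the chosen $reps$ should be charged to $\theta^{bridge}$) are left unaddressed in the paper as well, so they are fair caveats about its informal argument rather than deviations from it.
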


Theorem~\ref{thm: learning-low-2} says that with high probability, the learning algorithm yields a correct representation for $c$ and its descendants.
We can also consider correctness of a combination of successful learning of $c$ and successive recognition of $c$ after learning:

\begin{corollary}
\label{cor: combined-learning-and-recognition}
Let $c$ be any particular concept in $\mathcal C$, with $level(c) = \ell, 1 \leq \ell \leq \ell_{max}$.
The learning algorithm for $c$, with probability at least $1 - \theta_{\ell}$, yields a network that satisfies the following property. 
Let $B$ be a set of level $0$ concepts such that $c$ is in $supp_{r_2}(B)$.
Then with probability at least $1 - \delta_{\ell}$, at least $m p(1-\zeta)$ of the neurons in $reps(c)$ fire at time $\ell$.
\end{corollary}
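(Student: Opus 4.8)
The plan is to compose the two preceding results along their two disjoint sources of randomness: the edge-inclusion coins flipped during learning (each potential forward edge is kept with probability $p'$, the rest are set to weight $0$) and the neuron-failure coins flipped at the start of recognition (each neuron fails independently with probability $q$, with no failures during learning). The statement is deliberately phrased as a nested probability --- an outer bound $1-\theta_\ell$ over the learning randomness, and then an inner bound $1-\delta_\ell$ over the failure randomness --- so I do not need to fold the two into a single quantity; I only need to show that, conditioned on a ``good'' learning outcome, the firing theorem of Section~\ref{sec: recog-ff-low} applies verbatim. This is exactly the structure used for the high-connectivity case in Corollary~\ref{cor: combined-learning-and-recognition-1}.

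First I would invoke Theorem~\ref{thm: learning-low-2} (carrying over its hypothesis that the parameters satisfy Constraints~1 and~2): with probability at least $1-\theta_\ell$ over the learning coins, the learning algorithm run on $c$ terminates with a network that \emph{correctly represents} $c$, i.e.\ the $reps$ sets and weights it produces for $c$ and all descendants of $c$ satisfy the connectivity conditions of Section~\ref{sec: representation-low} --- every $rep$ $v$ of every descendant $c'$ with $level(c')\ge 1$ has $|inc(v,c'')|\ge am$ for each child $c''$ of $c'$, all other inter-layer weights are $0$, and the thresholds equal $a r_2 k m p(1-\zeta)$ as in Section~\ref{sec: parameter-values-low}. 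Call this event $G$, so $\Pr(G)\ge 1-\theta_\ell$.

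Next, fix any network in the event $G$. Since there is no failure randomness during learning, such a network is a fixed (non-random) object of exactly the kind hypothesized by Theorem~\ref{thm: main-firing-low}, and the only remaining randomness is the independent initial neuron failures used during recognition. Hence for any $B\subseteq C_0$ with $c\in supp_{r_2}(B)$, Theorem~\ref{thm: main-firing-low} yields that with probability at least $1-\delta_\ell$ over those failures, at least $mp(1-\zeta)$ of the neurons in $reps(c)$ fire at time $\ell$. Since this holds for every network in $G$, and $\Pr(G)\ge 1-\theta_\ell$, the corollary follows.

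The only point that needs a little care --- the closest thing to an obstacle --- is lining up the domains of the two theorems. Theorem~\ref{thm: learning-low-2} concerns learning the sub-hierarchy rooted at $c$, whereas the representation requirements of Section~\ref{sec: representation-low} and Theorem~\ref{thm: main-firing-low} are phrased for an entire hierarchy $\mathcal C$; I would note (as elsewhere in the paper) that these definitions restrict in the obvious way to the finite sub-forest $descendants(c)$, and that $supp_{r_2}(B)$ and the inductive firing argument of Theorem~\ref{thm: main-firing-low} depend only on $c$ together with its descendants, so the restricted versions suffice. One should also observe explicitly that the learning coins and the failure coins are generated in two separate phases and are therefore independent, which is what legitimizes treating $G$ as fixed before exposing the failures; no stronger independence (e.g.\ between $B$ and the failures) is required, since Theorem~\ref{thm: main-firing-low} already allows $B$ to be chosen adversarially after the failures are determined.
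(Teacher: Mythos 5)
Your proposal is correct and takes essentially the same route as the paper: the paper's proof is simply the two-step composition of Theorem~\ref{thm: learning-low-2} (correct representation with probability at least $1-\theta_\ell$ over the learning randomness) followed by Theorem~\ref{thm: main-firing-low} (firing with probability at least $1-\delta_\ell$ over the failure randomness), exactly as you describe. Your additional remarks about the independence of the two sources of randomness and the restriction to the sub-hierarchy rooted at $c$ are careful elaborations the paper leaves implicit.
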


\begin{proof}
By Theorem~\ref{thm: learning-low-2} and Theorem~\ref{thm: main-firing-low}.
The first of these yields a correct representation with probability at least $1 - \theta_{\ell}$ and the second of these relies on the representation to achieve recognition with probability at least $1 - \delta_{\ell}$.
\end{proof}

We can also consider a combined algorithm, one that first learns concept $c$, then attempts to recognize $c$ based on a presented set $B$ such that $c$ is in $supp_{r_2}(B)$.
The probability that recognition succeeds in this combined algorithm is at least 
$(1 - \theta_{\ell})(1 - \delta_{\ell})$.

\section{Learning in networks with lateral edges}
\label{sec: learning-lateral}

Now we consider learning for networks that include lateral edges.
This work is generally inspired by the earlier work on the assembly calculus in~\cite{DBLP:conf/innovations/Legenstein0PV18,PapadimitriouVempala}.
Specifically, our learning problem is similar to learning a new concept formed from previously-learned concepts using the Merge/Join operation. 
Here we are equating our layers with the "areas" used in~\cite{DBLP:conf/innovations/Legenstein0PV18,PapadimitriouVempala}.

We start in Section~\ref{sec: our-version-assembly-calculus} with an algorithm that is fairly directly based on the earlier work.
Since that is somewhat complicated, we give a simplified version in Section~\ref{sec:  new-learning-algorithm}.
We formulate the main correctness result for this version as a conjecture and give preliminary ideas for a proof.

The goal of these algorithms is to identify the $reps$ and fix their incoming weights, in such a way as to satisfy the requirements for a representation in Section~\ref{sec: representation-lat-low}.  This includes our Class Assumption.

\subsection{Algorithm based on the assembly calculus}
\label{sec: our-version-assembly-calculus}

Our algorithm is based on the $Project$ algorithm  in~\cite{DBLP:conf/innovations/Legenstein0PV18}.
This algorithm learns representations of concepts from sensory input.
Here we are using a $k$-way $Merge$, not a simple $Project$, but the ideas are similar.

Recall from Section~\ref{sec: learning} that our method of selecting new $reps$ for a concept $c$ in a feed-forward network involves a single-step procedure, in which an $m$-WTA chooses the $m$ available neurons in the layer corresponding to $level(c)$ having the highest incoming potential from the $reps$ of children of $c$.
In contrast, for networks with lateral edges, the algorithm in~\cite{DBLP:conf/innovations/Legenstein0PV18} uses a more elaborate multi-step procedure involving repeated uses of an $m$-WTA.

In the algorithm in~\cite{DBLP:conf/innovations/Legenstein0PV18}, the "sensory neurons" fire at every step, contributing potential to neurons in the appropriate target area. 
In step $1$, the algorithm uses an $m$-WTA to choose the $m$ available neurons in the target area with the highest incoming potential from sensory neurons.  
We can regard these as initial candidates to become $reps$ of $c$.
Then the algorithm executes additional steps $2,\ldots$, in each of which:

\begin{enumerate}[(a)]
    \item The $m$ current candidates fire and contribute potential to other neurons in the area.  The neurons in the area also continue to receive potential from the always-firing sensory neurons.
    \item A new set of $m$ available neurons is chosen.  These are the neurons with the highest incoming potential based on a combination of potential from sensory neurons and from other $reps$ of $c$ that are firing.
    \item The edges that contributed to the selection of the new set of candidate neurons have their weights increased by a simple Hebbian-style rule. 
    Other edges incoming to the new candidate neurons have their weights decreased, in order to normalize the total incoming weights to each of these neurons.
\end{enumerate}
The presentation in~\cite{DBLP:conf/innovations/Legenstein0PV18} analyzes the convergence behavior of this algorithm.

To express this in terms of our model, we assume initial random connections.
Formally, for each edge from a layer to the next-higher layer, and each edge within the same layer, with a high probability $p'$, we assign a small weight to the edge, and otherwise we assign weight $0$.
For our threshold, we use $a k m$.
Our algorithm essentially follows the procedure described just above, but stopping after some fixed finite number of steps, say $t$ steps.
Here the sensory neurons of~\cite{DBLP:conf/innovations/Legenstein0PV18} correspond to the $reps$ of the children of $c$.
The final result is a set of $m$ neurons, which have been put into the set at various steps, some based just on firing of $reps$ of children of $c$, and some based also on firing of peer $reps$ that were previously put into the set.

We repeat the algorithm description using our terminology.
In one stage of our algorithm, for learning a descendant $c'$ of $c$, the $reps$ of children of $c'$ fire at every step, contributing potential to neurons in layer $\ell' = level(c')$.
In step $1$, the algorithm uses an $m$-WTA to choose the $m$ available neurons in layer $\ell'$ with the highest incoming potential from $reps$ of children of $c'$.  
Then the algorithm executes additional steps $2,\ldots$, in each of which:

\begin{enumerate}[(a)]
    \item The $m$ current candidates fire and contribute potential to other neurons in layer $\ell'$.  The neurons in layer $\ell'$ also continue to receive potential from the always-firing $reps$ of children of $c'$.
    \item A new set of $m$ available neurons is chosen.  These are the neurons with the highest incoming potential based on a combination of potential from $reps$ of children of $c'$ and from other $reps$ of $c'$ that are firing.
    \item The edges that contributed to the selection of the new set of candidate neurons have their weights increased by a simple Hebbian-style rule. 
    Other edges incoming to the new candidate neurons have their weights decreased, in order to normalize the total incoming weights to each of these neurons.
\end{enumerate}

We state our correctness claim somewhat informally.
At this point, it is only a conjecture, since we have not produced a proof.

We say that a network (one that results from learning) \emph{correctly represents} a concept $c$ with $level(c) = \ell$, $1 \leq \ell \leq \ell_{max}$ if it contains $reps$ and weights that satisfy the requirements in Section~\ref{sec: representation-lat-low} for $c$ and its descendants.  This includes the Class Assumption.

Define the failure probability $\theta_{\ell}$ to be $\frac{k^{\ell}-1}{k-1} \theta$, where $\theta$ is a small value to be determined.

\begin{conjecture}
\label{thm: learning-lateral-2}
Let $c$ be any particular concept in $\mathcal C$, with $level(c) = \ell, 1 \leq \ell \leq \ell_{max}$.
Then with probability at least $1 - \theta_{\ell}$, our rewrite of the assembly calculus learning algorithm results in a network that correctly represents $c$. 
\end{conjecture}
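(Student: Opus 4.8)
The plan is to prove Conjecture~\ref{thm: learning-lateral-2} by induction on the level $\ell'=level(c')$ of the descendants $c'$ of $c$, establishing that after the learning stage for $c'$ — given that the $reps$ and incoming weights of all children of $c'$ already satisfy the representation requirements of Section~\ref{sec: representation-lat-low} — the selected set $reps(c')$ of $m$ neurons admits a partition into Class~1 and Class~2 with all the incoming-edge bounds of the Class Assumption, except with probability at most $\theta$. Taking a union bound over the $\frac{k^{\ell}-1}{k-1}$ descendants of $c$ at levels $\geq 1$ then yields the claimed overall failure bound $\theta_{\ell}$. As in Section~\ref{sec: learning-ff-low}, I would assume $c$ and all its descendants are fixed first and then the initial edge set is drawn (each edge present with probability $p'$), so that the random edge set is independent of which neurons will play which roles.

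For a single stage I would analyze the multi-step $m$-WTA procedure as follows. In step~$1$, the $m$ chosen neurons are exactly those with the largest number of nonzero-weight incoming edges from the $km$ $reps$ of children of $c'$, since those are the only firing neurons and each contributes the same small weight. By the analogue of Constraint~1 of Section~\ref{sec: learning-ff-low}, for suitable $p'$ and $n$, with high probability there are at least $m$ layer-$\ell'$ neurons each with a large aggregate number (say $\geq bkm$) of such edges; and by a Chernoff/union-bound argument like Lemma~\ref{lem: one-concept-incoming-low} applied to the random edge set, with high probability at least $m_1$ of these top neurons also have at least $am$ incoming edges from the $reps$ of \emph{each} child individually — these we designate Class~1. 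Bridging the aggregate bound $bkm$ to the per-child bound $am$ is precisely the content of an analogue of Constraint~2, now conditioned on the WTA-selection event. For the remaining steps I would invoke the convergence analysis of the $Project$ algorithm from~\cite{DBLP:conf/innovations/Legenstein0PV18}: once the Class~1 neurons fire stably and their edges from $reps$ of children have been strengthened toward $1$, the lateral potential they contribute together with the potential from $reps$ of children drives the $m$-WTA to a stable winner set. The stable winners that are not Class~1 are designated Class~2, and for them I must show (i) at least $a_1 m$ nonzero-weight (hence eventually weight-$1$) edges from the $reps$ of each child — easier than the Class~1 bound since $a_1\leq a$, again by a per-child Chernoff bound — and (ii) at least $a_2 m$ weight-$1$ lateral edges from Class~1 $reps$ of $c'$, which holds because such a neuron was pulled into the winner set partly on the strength of lateral input from the $m_1$ firing Class~1 peers, using the constraint $a_2 \geq (a-a_1)k$ together with a Chernoff bound on the lateral random edges. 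Collecting the failure events — WTA-selection failure, insufficient per-child edges for Class~1 or Class~2, insufficient Class~1-to-Class~2 lateral edges, and failure of the assembly dynamics to converge — and taking their union bound defines $\theta$.

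The main obstacle, and the reason this remains a conjecture rather than a theorem, is the last ingredient: controlling the assembly dynamics precisely enough to guarantee that the \emph{stable} winner set splits cleanly into $m_1$ neurons meeting the Class~1 condition and $m_2$ neurons meeting the Class~2 condition. The convergence results of~\cite{DBLP:conf/innovations/Legenstein0PV18} concern total assembly overlap and concentration of incoming weight, not the fine-grained per-child and per-peer edge counts that our Class Assumption demands, so adapting them — and in particular ruling out the bad case in which the lateral dynamics recruit a neuron with too few child-edges to qualify even as Class~2 — is where the real work lies. A secondary difficulty is exhibiting an explicit choice of $(n,k,m,m_1,m_2,p',a,a_1,a_2)$ for which all of these Chernoff/union-bound estimates and the Constraint~1 and Constraint~2 analogues are simultaneously satisfiable with a small $\theta$; we expect this to be possible for $m$ large and $p'$ bounded away from $0$, but leave the precise characterization to future work.
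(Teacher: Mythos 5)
There is nothing in the paper for your argument to match: the statement is presented explicitly as a conjecture, with no proof given. The paper itself says that the algorithm of Section~\ref{sec: our-version-assembly-calculus} ``seems difficult to analyze'' precisely because the candidate set may change at each step of the repeated $m$-WTA, and for that reason it retreats to the simplified two-phase algorithm of Section~7.2, whose correctness (Conjecture~\ref{thm: learning-lateral-3}) is also left open, with only ``preliminary ideas'' sketched along the lines of the Constraint~1/Constraint~2 bridging of Section~\ref{sec: learning-ff-low}. Your overall plan --- bottom-up induction over the descendants of $c$, a union bound over the $\frac{k^{\ell}-1}{k-1}$ internal descendants to get $\theta_{\ell}$, fixing the concept first and then drawing the random edges, and bridging aggregate potential bounds ($bkm$) to per-child bounds ($am$, $a_1 m$) via conditional Chernoff-style constraints --- is entirely consistent with what the paper envisions, so as a roadmap it is sound.

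But it is not a proof, and the gap you yourself flag is the real one. Invoking the convergence analysis of the $Project$ algorithm of~\cite{DBLP:conf/innovations/Legenstein0PV18} does not deliver what the Class Assumption of Section~\ref{sec: representation-lat-low} requires: that analysis controls assembly overlap and total incoming weight, not per-child edge counts from each $inc(v,c'')$ nor the $a_2 m$ lateral edges specifically from Class~1 peers. Two further points in your sketch would need repair even granted such a convergence result. First, identifying Class~1 with the step-$1$ winners is unreliable, because in the multi-step dynamics step-$1$ winners can be displaced by later rounds, so the final $reps(c')$ set need not contain them; any clean Class~1/Class~2 split must be defined relative to the \emph{stable} winner set, whose composition is exactly what is not controlled. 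Second, the $m$-WTA selection event conditions the edge counts of the winners: a neuron is selected \emph{because} it has many incoming edges in the aggregate, so the per-child counts of selected neurons are not the unconditioned binomials your ``per-child Chernoff bound'' presumes; you need a conditional estimate of the Constraint~2 type, and the paper leaves even the feed-forward version of that constraint uncharacterized (it is an assumption on the parameters, not a proved fact). So the honest status of your proposal matches the paper's: a plausible plan with the central analytic step --- and the existence of parameter values making all the estimates simultaneously hold with small $\theta$ --- still open.
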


\subsection{A simplified algorithm}
\label{sec:  new-learning-algorithm}

The algorithm of the previous section seems to satisfy Conjecture~\ref{thm: learning-lateral-2}, with appropriate values of the parameters,
but seems difficult to analyze. The main complication for analysis is that the candidate set may change at each step.  So in this section, we suggest a simplified version of that algorithm, in which the candidate set does not change.  We have not yet compared the behavior of this algorithm to that of the algorithm of Section~\ref{sec: our-version-assembly-calculus}.

Here we describe learning of a single concept $c$ with $level(c) = \ell$, $\ell \geq 1$, assuming that its children have already been learned.  The full algorithm operates bottom-up, as usual, learning the concepts at each level only after the concepts at the previous levels have already been learned.

We use the following parameters:
\begin{itemize}
    \item $p' \in [0,1]$, the probability of including each edge.  Specifically, for each forward edge or lateral edge, with probability $p'$, we assign a small weight to the edge, and otherwise we assign weight $0$.
    \item $b \in [a,1]$, where $a$ is the coefficient that is assumed in Section~\ref{sec: representation-lat-low}; $b$ is a coefficient to be achieved in the learning algorithm.  As before, the purpose of $b$ is to bridge the gap between a threshold for overall potential and a threshold for potential from $reps$ of individual children. 
    \item $b_1 \in [a_1,1]$, a new coefficient corresponding to $a_1$ in Section~\ref{sec: parameter-values-lat-low}.
    \item $m_1$, $m_2$, where $m_1 + m_2 = m$; these are the sizes of the two Classes.
    \item $\theta$, a small probability for failing to learn; define $\theta_\ell = \frac{k^{\ell} - 1}{k-1} \theta$.
\end{itemize}

The new algorithm operates in two phases, identifying the Class 1 neurons in Phase 1 and the Class 2 neurons in Phase 2.
We use a threshold of $b k m$.

\begin{itemize}
    \item \emph{Phase 1:}  
    Trigger all of the $reps$ of children of $c$ to fire (this can be done by triggering all the $reps$ of leaves of $c$ to fire).
    The algorithm identifies the neurons in Class 1, based on the total incoming potential from the $k m$ $reps$ of children of $c$.  Specifically, an $m_1$-WTA  chooses the $m_1$ available neurons with the highest incoming potential as the Class 1 $reps$.  
    Then the algorithm engages all of the selected $reps$ in learning, and adjusts their incoming weights, either in several steps or all at once.\footnote{In the case without failure/noise during learning, it should be possible to increase the weights in one, or a few steps instead of working in small increments.  The same holds for our algorithms in Section~\ref{sec: learning}.} The algorithm increases the weights to $1$ for all the edges incoming to the chosen $reps$ that contributed to the incoming potential, and decreases the weights to $0$ for all other edges incoming to these $reps$.
    \item \emph{Phase 2:}   
    Trigger all of the $reps$ of children of $c$ to fire, as in Phase 1. 
    Now also trigger all of the $reps$ of neurons in Class 1 to fire (again, this can be done by triggering all the $reps$ of leaves of $c$ to fire).
    The algorithm identifies the neurons in Class 2, based on the total incoming potential from the $k m$ $reps$ of children of $c$ plus the $m-1$ other $reps$ of $c$.
    Specifically, an $m_2$-WTA chooses the $m_2$ available neurons with the highest incoming potential as the Class 2 $reps$.
    Then the algorithm engages all of the selected $reps$ in learning, and adjusts their incoming weights, either in several steps or all at once. The algorithm increases the weights to $1$ for all the edges incoming to the chosen $reps$ that contributed to the incoming potential, and decreases the weights to $0$ for all other edges incoming to these $reps$.
\end{itemize}

Thus, in this section, we are basing our choices on total incoming potential, whereas our requirements in Section~\ref{sec: representation-lat-low} involve potential from each individual child.  As before, we need to introduce new, higher coefficients, $b$ and $b_1$, bridge the gaps.

\begin{conjecture}
\label{thm: learning-lateral-3}
Assume that the parameters satisfy appropriate constraints.
Let $c$ be any particular concept in $\mathcal C$, with $level(c) = \ell, 1 \leq \ell \leq \ell_{max}$.
Then with probability at least $1 - \theta_{\ell}$, our new learning algorithm results in a network that correctly represents $c$. 
\end{conjecture}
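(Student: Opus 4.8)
The plan is to prove Conjecture~\ref{thm: learning-lateral-3} by the same bottom-up induction used for Theorem~\ref{thm: learning-low-2}, now carried out separately in the two phases of the simplified algorithm. As in Section~\ref{sec: learning-ff-low}, I would first fix the concept $c$ together with all of its descendants, and only then expose the random edge selections (each forward or lateral edge present with probability $p'$); this decouples the choice of which neurons represent which concepts from the missing edges. The argument then processes the descendants $c'$ of $c$ with $level(c') \geq 1$ one at a time, in order of increasing level, assuming inductively that the children of $c'$ have already been correctly represented. For each such $c'$ I would bound the probability of ``bad learning'' by some fixed $\theta$, and a union bound over the $\frac{k^{\ell}-1}{k-1}$ descendants of $c$ at levels $\geq 1$ gives the stated $\theta_{\ell} = \frac{k^{\ell}-1}{k-1}\theta$.

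For a single $c'$, Phase~1 is essentially the analysis of Section~\ref{sec: learning-ff-low} restricted to the $m_1$-WTA. I would isolate two simplified independent-edge experiments in the style of Constraints~1 and~2 there: first, with probability at least $1-\theta^{learn}_1$, among the $nm$ layer-$\ell'$ neurons there are at least $m_1$ whose number of present incoming edges from the $km$ $reps$ of children of $c'$ is at least $bkm$, so the $m_1$-WTA (which, since only the children's $reps$ fire in Phase~1, ranks neurons by exactly that count) selects $m_1$ neurons each with $\geq bkm$ such edges; second, a bridging bound: for a neuron known to have $\geq bkm$ present edges out of the $k$ groups of $m$ children-$reps$, with probability at least $1-\theta^{bridge}_1$ each group contributes at least $am$. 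A union bound of the second over the $m_1$ selected neurons, combined with the first, yields that every Class~1 $rep$ of $c'$ has $\geq am$ weight-$1$ incoming edges from each child's $reps$ --- exactly the Class~1 requirement of Section~\ref{sec: representation-lat-low}. The weight bookkeeping is immediate from the update rule: the present edges carrying potential into a chosen $rep$ go to $1$ and all other incoming edges of that $rep$ go to $0$.

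Phase~2 is the same scheme with two complications. The $m_2$-WTA now ranks neurons by the aggregate potential coming from the $km$ children-$reps$ \emph{and} from the $m_1$ already-fixed Class~1 $reps$, whereas the Class~2 requirement demands two separate lower bounds ($\geq a_1 m$ from each child's $reps$, and $\geq a_2 m$ from the Class~1 $reps$). I would therefore state a joint version of the experiments: with probability $1-\theta^{learn}_2$ there are at least $m_2$ neurons (disjoint from the $m_1$ Class~1 ones) whose children-contribution is at least $b_1 km$ and whose Class~1-contribution is at least $a_2 m$ simultaneously --- these counts involve disjoint edge sets and so are independent --- together with a bridging bound from $b_1 km$ to $\geq a_1 m$ per child, union-bounded over the $m_2$ selected neurons. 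Feasibility forces parameter constraints (e.g. $a_2 m \le m_1$, and $p'$ chosen well above the relevant $a$'s and $b$'s), which I would absorb into the ``appropriate constraints'' hypothesis. Instantiating at $c' = c$ closes the induction, and combining with Theorem~\ref{thm: main-firing-lat-low} as in Corollary~\ref{cor: combined-learning-and-recognition} gives a combined learning-then-recognition guarantee if desired.

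The main obstacle --- and the reason this is stated as a conjecture --- is the dependency issue already flagged for Constraints~1 and~2: the WTA conditions on a neuron's incident edges through the event ``being among the top $m_1$ (resp. $m_2$) by potential'', which is \emph{not} the conditioning used in the clean bridging experiments, so replacing the heuristic ``the selected neurons only have larger-or-more-balanced contributions'' by a rigorous argument takes care. In Phase~2 this is compounded because a candidate Class~2 neuron's children-edges were already implicitly conditioned on in Phase~1 (it was \emph{not} chosen as Class~1), and a neuron selected for an atypically large children-contribution could a priori have an atypically small Class~1-contribution; controlling this needs enough slack in $b$, $b_1$, $p'$ that \emph{every} neuron crossing the threshold is, with overwhelming probability, balanced in the right way. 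Pinning down the exact parameter regime in which all of these simplified experiments succeed --- the analog of ``Constraints 1 and 2'', now fourfold and two-phase --- is the real work, and parallels the convergence analysis of the $Project$ algorithm in~\cite{DBLP:conf/innovations/Legenstein0PV18}.
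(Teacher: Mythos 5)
Your plan matches the paper's own treatment of this statement almost exactly: the paper leaves it as a conjecture accompanied only by ``preliminary ideas,'' which consist of the same bottom-up induction with a union bound over the $\frac{k^{\ell}-1}{k-1}$ descendants, the same Constraint-1/Constraint-2 style experiments from Section~\ref{sec: learning-ff-low}, and the same use of the bridging coefficients $b$ and $b_1$ (plus the $a_2 m$ lateral bound) to pass from aggregate potential to per-child guarantees. Like the paper, you correctly identify that the unresolved part is pinning down the precise parameter constraints and handling the conditioning introduced by the WTA selection, so your proposal is a faithful, slightly more detailed version of the same (incomplete) argument rather than a different route.
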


We suspect that this analysis should be tractable, using ideas like those in Section~\ref{sec: learning-ff-low}.  But this remains to be done.

\begin{proof}
(\emph{Preliminary ideas:})
We consider learning one concept $c$, assuming that its children have already been learned.  The full result follows by applying this result to every descendant of $c$.

Proceeding as in Section~\ref{sec: learning-ff-low}, we assume that values of the parameters satisfy constraints sufficient to achieve high numbers of incoming edges:  
$b k m$ edges from $reps$ of children of $c$ for each of the $m_1$ Class 1 $reps(c)$ neurons, 
$b_1 k m$ edges from $reps$ of children of $c$ for each of the $m_2$ Class 2 $reps(c)$ neurons, and 
$a_2 m$ edges from $reps$ of $c$ for each of the $m_2$ Class 2 $reps(c)$ neurons.
The discrepancies between the coefficients $a$ and $b$, and between $a_1$ and $b_1$, serve to bridge the gaps between the requirements on overall number of incoming edges and the requirements on number of incoming edges from $reps$ of individual children.

It remains to define the needed constraints precisely, and characterize the values of the many parameters that satisfy the constraints.
\end{proof}



\hide{
The core of the proof is the following conjectured lemma, which considers one step of the learning process.  It describes what happens when the algorithm learns the parts of the representation involving one concept $c$, assuming that the network already correct represents all of its children.

\begin{conjecture}
\label{lem: learning-lat-low}
Let $c$ be any particular concept in $\mathcal C$, with $level(c) = \ell, 1 \leq \ell \leq \ell_{max}$.
Assume that the network $\mathcal{N}$ starts in a state that correctly represents all children of $c$.
Assume that the weights of edges between layer $\ell-1$ and $\ell$, and within layer $\ell$, are set as described earlier---a small value with probability $p'$ and $0$ otherwise. \\
Then with probability at least $1 - \theta$, where $\theta$ is a small value to be determined, the learning algorithm for $c$ from its children results in a network that correctly represents $c$. 
\end{conjecture}

Conjecture~\ref{lem: learning-lat-low} implies that, during the learning algorithm for $c$ from its children, all of the $reps$ of children of $c$ fire continually.
This can be shown by an argument like that for recognition in Section~\ref{sec: guar-lat-low}, only easier because all $k$ children of $c$ fire, not just $r_2 k$ of them, and we are not considering failures.

We can use Conjecture~\ref{lem: learning-lat-low} to prove Conjecture~\ref{thm: learning-lateral}. 

\begin{proof}
(Of Conjecture~\ref{thm: learning-lateral}, assuming Conjecture~\ref{lem: learning-lat-low}) 
By induction on $\ell$. 

\emph{Base:}  $\ell = 1$:
Fix $c$ with $level(c) = 1$.
We must show that, with probability at least $1 - \theta_1 = 1 - \theta$, the learning algorithm for $c$ results in a network that correctly represents $c$.
That is, it yields $reps$ and incoming weights for $c$ and all its descendants, satisfying the requirements in Section~\ref{sec: representation-lat-low}.

The learning algorithm assumes that the initial network correctly represents all the level $0$ children of $c$.
Given this assumption, Conjecture~\ref{lem: learning-lat-low} implies that, with probability at least $1 - \theta$, the learning algorithm yields the needed $reps(c)$ and their incoming weights.

\emph{Inductive step:}  $2 \leq \ell$:
Fix $c$ with $level(c) = \ell$.
We must show that, with probability at least $1 - \theta_{\ell}$, the learning algorithm for $c$ results in a network that correctly represents $c$.  
That is, it yields $reps$ and incoming weights for $c$ and all its descendants, satisfying the requirements in Section~\ref{sec: representation-lat-low}.

By the inductive hypothesis, we have that, for each child $c'$ of $c$, with probability at least $1 - \theta_{\ell' - 1}$, the learning algorithm results in a network that correctly represents $c'$.
That is, it yields the needed $reps$ and incoming weights for $c'$ and its descendants.
It follows from a union bound that with probability at least 
$1 - k \theta_{\ell' - 1}$, these properties hold for all children $c'$ of $c$.

The learning algorithm for $c$ assumes that the initial network correctly represents all the level $\ell-1$ children of $c$.
Given this assumption, Conjecture~\ref{lem: learning-lat-low} implies that, with probability at least $1 - \theta$, the last stage of the learning algorithm yields the needed $reps(c)$ and their incoming weights.
Using another union bound, we get that, with probability at least $1 - k \theta_{\ell' - 1} - \theta = 1 - \theta_{\ell'}$, the overall algorithm yields the needed $reps(c)$ and their incoming weights.
\end{proof}

\begin{proof}
(Of Conjecture~\ref{lem: learning-lat-low})
Fix a particular $c$, with $level(c) = \ell$, $1 \leq \ell \leq \ell_{max}$.   
We must show that, with probability at least $1 - \theta$, the learning algorithm identifies $m$ neurons as $reps(c)$, and sets their incoming weights in a way that satisfies the representation conditions in Section~\ref{sec: representation-lat-low}.

By our assumption, the network $\mathcal N$ correctly represents all of the children of $c$, which implies that all of the $reps$ of children of $c$ fire continually during this learning protocol.
Under these conditions, the learning algorithm should, with probability at least $1 - \theta$, yield $reps$ for $c$ and incoming edge weights that satisfy the representation requirements for $c$.
\nnote{This is the heart of the problem.  It remains to be shown.}
Then, in the case that the representation requirements are satisfied, all the $reps$ of $c$ have their thresholds met.
(The argument is similar to what we argued for recognition, only easier because all of the $reps$ for all children of $c$ are firing here.  We don't have to worry about failures, and all $k$ children are involved here, not just $r_2 k$ of them.) 
\end{proof}
}

Conjecture~\ref{thm: learning-lateral-3} says that with high probability, the learning algorithm yields a correct representation for $c$ and its descendants.
We can also consider correctness of a combination of successful learning of $c$ and successive recognition of $c$ after learning:

\begin{corollary}
\label{cor: combined-learning-and-recognition-lat}
Let $c$ be any particular concept in $\mathcal C$, with $level(c) = \ell, 1 \leq \ell \leq \ell_{max}$.
The learning algorithm for $c$, with probability at least $1 - \theta_{\ell}$, yields a network that satisfies the following property. 
Let $B \subseteq C_0$ such that $c$ is in $supp_{r_2}(B)$.
Then with probability at least $1 - \delta_{\ell}$, at least $m p(1-\zeta)$ of the neurons in $reps(c)$ fire at all times starting from some time $\geq 0$.
\end{corollary}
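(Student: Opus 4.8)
The plan is to obtain this exactly as Corollary~\ref{cor: combined-learning-and-recognition} was obtained in the feed-forward low-connectivity case: by composing the learning guarantee of Conjecture~\ref{thm: learning-lateral-3} with the recognition guarantee of Theorem~\ref{thm: main-firing-lat-low}. The point to check is that the two statements quantify over \emph{disjoint and independent} randomness. Conjecture~\ref{thm: learning-lateral-3} is a statement about the random edge inclusions used by the learning algorithm (each forward or lateral edge receiving a nonzero initial weight with probability $p'$), made before recognition begins; Theorem~\ref{thm: main-firing-lat-low} is a statement about the independent initial neuron failures during recognition, with the learned network held fixed. So the nested form of the corollary --- ``with probability at least $1-\theta_\ell$ the learning algorithm yields a network such that, for every $B$ with $c \in supp_{r_2}(B)$, with probability at least $1-\delta_\ell$ \dots'' --- matches this composition directly.

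Concretely, first I would invoke Conjecture~\ref{thm: learning-lateral-3}: with probability at least $1-\theta_\ell$ over the edge-inclusion randomness, the learning algorithm produces a network that correctly represents $c$, i.e.\ one carrying sets $reps(c')$ and $\{0,1\}$-valued weights satisfying the requirements of Section~\ref{sec: representation-lat-low} --- in particular the Class Assumption --- for $c$ and all its descendants. (As flagged earlier in the paper, ``correctly represents'' here concerns the sub-hierarchy rooted at $c$ rather than all of $\mathcal C$, so one reads Section~\ref{sec: representation-lat-low} and Theorem~\ref{thm: main-firing-lat-low} restricted to that sub-hierarchy; this is a routine adaptation.) Conditioned on this success event, the network is of exactly the form assumed by Theorem~\ref{thm: main-firing-lat-low}; since $c \in supp_{r_2}(B)$ and $B$ is presented at all times $\ge 0$, that theorem gives that with probability at least $1-\delta_\ell$ over the initial failures (which are drawn after the learned network is fixed, and after which the set $B$ may be chosen), at least $m p(1-\zeta)$ of the neurons in $reps(c)$ fire at all times from some time onward. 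This conditional statement is precisely the ``satisfies the following property'' clause of the corollary, so the argument closes.

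The hard part is not in this composition at all --- it is entirely inherited from the fact that Conjecture~\ref{thm: learning-lateral-3} is still open: one must analyze the two-phase $m_1$-/$m_2$-WTA selection under random edge inclusion and establish bridging constraints relating the learning coefficients $b, b_1$ to the representation coefficients $a, a_1, a_2$ (analogues of Constraints 1 and 2 of Section~\ref{sec: learning-ff-low}) that guarantee the Class Assumption with probability at least $1-\theta_\ell$. Consequently this corollary should be regarded as conditional on Conjecture~\ref{thm: learning-lateral-3}; modulo that, everything reduces to bookkeeping over two independent probability spaces together with the restriction to the sub-hierarchy rooted at $c$.
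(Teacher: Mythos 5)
Your proposal matches the paper's own argument: the corollary is proved exactly by composing Conjecture~\ref{thm: learning-lateral-3} (correct representation with probability at least $1-\theta_{\ell}$) with Theorem~\ref{thm: main-firing-lat-low} (recognition with probability at least $1-\delta_{\ell}$ given that representation). Your added remarks about the independence of the two sources of randomness and the restriction to the sub-hierarchy rooted at $c$ are consistent elaborations, not a different route.
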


\begin{proof}
By Conjecture~\ref{thm: learning-lateral-3} and Theorem~\ref{thm: main-firing-lat-low}.
The first of these yields a correct representation with probability at least $1 - \theta_{\ell}$ and the second relies on the representation to achieve recognition with probability at least $1 - \delta_{\ell}$.
\end{proof}

As in Section~\ref{sec: learning-ff-low}, we can also consider a combined algorithm, one that first learns concept $c$, then attempts to recognize $c$ based on a presented set $B$ such that $c$ is in $supp_{r_2}(B)$.
The probability that recognition succeeds in this combined algorithm is at least 
$(1 - \theta_{\ell})(1 - \delta_{\ell})$.

\section{Conclusions}
\label{sec: conclusions}

\paragraph{Summary:}
We have described how hierarchical concepts can be represented in three types of layered neural networks, in such a way as to support recognition of the concepts when partial information about the concepts is presented, and also when some of the neurons might fail.  Our failure model involves only initial random failures.
The first two types of network are feed-forward, with high connectivity and low connectivity respectively.  
The third type of network also includes lateral edges, and has low connectivity; this case is inspired by prior work on the assembly calculus.

In order to achieve fault-tolerance, our representations all provide redundancy by including multiple $rep$ neurons for each concept.
Our representations are required to contain sufficiently many weight $1$ edges from $reps$ of child concepts to $reps$ of their parents.  Also, in networks with lateral edges, the representations are required to contain sufficiently many weight $1$ edges between $reps$ of the same concept.
The requirements for representations in this last case are embodied in a new assumption that we call the Class Assumption.

We have described how recognition works in all three of these settings, and have quantified how the probability of correct recognition depends on several parameters, including the number $m$ of $reps$ and the neuron failure probability $q$.
As one might expect, this probability increases with an increase in the number of $reps$ and decreases with an increase in the failure probability.  
The proofs use elementary probabilistic analysis, mainly Chernoff bounds and union bounds.

We have also discussed briefly how these representations might be learned, in all three types of networks.  For the feed-forward networks, the learning procedures are analogous to one used in~\cite{LM21}, whereas for networks with lateral edges, the procedure is generally inspired by one introduced in~\cite{DBLP:conf/innovations/Legenstein0PV18,PapadimitriouVempala}.

\paragraph{Discussion:}
The setting of this paper is fairly complicated, in that it includes three types of
limitations on information available for recognition.
First, we are trying to recognize concepts with only partial information, as in~\cite{LM21}. 
Second, we are dealing with random neuron failures.
Third, we are coping with partial connectivity between and within layers.
The combination of three types of partial information makes the setting quite tricky to understand and analyze.  
To cope with these difficulties, we have made many simplifications, in particular, by making strong uniformity assumptions, such as assuming that all concepts have the same number of child concepts and that all neurons have the same (independent) probability of failure.
Even so, the analysis is a bit tricky, and we expect that our bounds might not be the tightest possible.

We think that the ideas here for representing structured concepts are generally consistent with what is known about brain representations for such concepts, although of course drastically simplified.  One omission here is that, in representing a hierarchical structure, we consider only forward and lateral edges.  It would be reasonable to also include feedback edges.
This might help, for example, in recognizing a child concept based on first recognizing its parent.
We leave that complication for future work.  In~\cite{DBLP:conf/sirocco/LynchM23}, we studied recognition in networks with feedback edges, but in a non-fault-tolerant setting with single $reps$ for concepts.

As we noted, our work in Sections~\ref{sec: recog-lat-low} and~\ref{sec: learning-lateral} on networks with lateral edges was generally inspired by ideas from the assembly calculus~\cite{DBLP:conf/innovations/Legenstein0PV18,PapadimitriouVempala}.
However, we found the behavior of the learning algorithm in those papers difficult to understand, because of complications such as repeated changes in the selected $rep$ neurons.
We have proposed a simplified variant of the algorithm of~\cite{DBLP:conf/innovations/Legenstein0PV18,PapadimitriouVempala} in which the sets of chosen $reps$ do not change; we expect that will be easier to analyze, although we have not completed such an analysis.

\paragraph{Future work:}
Much remains to be done.
%
First, in general, it is worth simulating the algorithms in this paper, those for recognition as well as those for learning.
It is possible that the probabilities claimed in our results are not tight; simulations might yield better success probabilities, and might suggest how to improve the analysis.
This paper contains many parameters; simulation should help determine which combinations of parameter values yield high probability of correct recognition or correct learning.

For recognition, one might consider more complicated failures models than just random initial failures, such as randomly-occurring on-line failures.
It is probably not a good idea to consider worst-case, adversarial failures, which might depend on previous random and nondeterministic choices, since those would create dependencies that would be difficult to for an algorithm to cope with.
It is also worth trying to extend the recognition work to more elaborate network models that include some feedback edges from a layer to the next-lower layer.

As a technical idea, one way of understanding the behavior of a failure-prone network with multiple $reps$ per concepts might be to try to "map" it to a failure-free network with a single $rep$ per concept.
This might be done with a formal abstraction mapping of the types studied in~\cite{DBLP:series/synthesis/2010Kaynar} and~\cite{DBLP:journals/njc/SegalaL95}.

It remains to complete the studies of learning algorithms in Section~\ref{sec: learning}.
The algorithms in that section are based on earlier algorithms in~\cite{LM21,DBLP:conf/sirocco/LynchM23}, but use multiple $reps$ instead of a single $rep$ for each concept, and include considerations of failures.
It remains to rework the earlier presentations for the new settings.
While we expect that these extensions should be fairly routine, we could be surprised.
It remains to work them out carefully.


Section~\ref{sec: learning-lateral} will probably require more serious work.
It would be interesting to compare the algorithm in Section~\ref{sec:  new-learning-algorithm} to the algorithm in Section~\ref{sec: our-version-assembly-calculus}, first via simulation and then via analysis.
The first step would be to devise complete, formal descriptions of both algorithms.
In doing this, one might find that these learning algorithms do not quite satisfy the Class Assumption of Section~\ref{sec: representation-lat-low} as currently stated.  In this case, one would need to modify the assumption, show that the learning algorithms guarantee it, and show that the new assumption suffices for recognition.

Once these learning results are fully worked out, one can consider extensions, such as including feedback edges in the network.
One can also consider failures during the learning process.
As noted earlier, initial failures of layer $0$ neurons do not seem reasonable, because then we would entirely miss learning some parts of the concept hierarchy.  We might consider initial failures of higher-layer neurons, or failures of different layer $0$ neurons at the start of each new training instance; this latter type of failure is similar to what we called "noisy learning" in~\cite{LM21}.  
This all remains to be worked out.

\bibliography{Multi}

\begin{thebibliography}{1}

\bibitem{MG15}
Michel Goemans.
\newblock Lecture notes on {C}hernoff bounds, {MIT} course 18.310, {P}rinciples of {D}iscrete {A}pplied {M}athematics, Spring, 2015.
\newblock https://math.mit.edu/~goemans/18310S15/chernoff-notes.pdf.

\bibitem{DBLP:series/synthesis/2010Kaynar}
Dilsun~Kirli Kaynar, Nancy~A. Lynch, Roberto Segala, and Frits~W. Vaandrager.
\newblock {\em The Theory of Timed {I/O} Automata, Second Edition}.
\newblock Synthesis Lectures on Distributed Computing Theory. Morgan {\&} Claypool Publishers, 2010.

\bibitem{DBLP:conf/innovations/Legenstein0PV18}
Robert Legenstein, Wolfgang Maass, Christos~H. Papadimitriou, and Santosh~S. Vempala.
\newblock Long term memory and the densest k-subgraph problem.
\newblock In Anna~R. Karlin, editor, {\em 9th Innovations in Theoretical Computer Science Conference, {ITCS} 2018, January 11-14, 2018, Cambridge, MA, {USA}}, volume~94 of {\em LIPIcs}, pages 57:1--57:15. Schloss Dagstuhl - Leibniz-Zentrum f{\"{u}}r Informatik, 2018.

\bibitem{LM21}
Nancy Lynch and Frederik Mallmann-Trenn.
\newblock Learning hierarchically structured concepts.
\newblock {\em Neural Networks}, 143:798--817, November 2021.

\bibitem{DBLP:conf/sirocco/LynchM23}
Nancy~A. Lynch and Frederik Mallmann{-}Trenn.
\newblock Learning hierarchically-structured concepts {II:} overlapping concepts, and networks with feedback.
\newblock In Sergio Rajsbaum, Alkida Balliu, Joshua~J. Daymude, and Dennis Olivetti, editors, {\em Structural Information and Communication Complexity - 30th International Colloquium, {SIROCCO} 2023, Alcal{\'{a}} de Henares, Spain, June 6-9, 2023, Proceedings}, volume 13892 of {\em Lecture Notes in Computer Science}, pages 46--86. Springer, 2023.

\bibitem{PapadimitriouVempala}
Christos~H. Papadimitriou and Santosh~S. Vempala.
\newblock Random projection in the brain and computation with assemblies of neurons.
\newblock In {\em 10th Innovations in Theoretical Computer Science Conference, ITCS 2019}, pages 57:1--57:19, San Diego, California, January 2019.

\bibitem{PVMM20}
Christos~H. Papadimitriou, Santosh~S. Vempala, Daniel Mitropolsky, and Wolfgang Maass.
\newblock Brain computation by assemblies of neurons.
\newblock {\em Proceedings of the National Academy of Sciences}, 2020.

\bibitem{DBLP:journals/njc/SegalaL95}
Roberto Segala and Nancy~A. Lynch.
\newblock Probabilistic simulations for probabilistic processes.
\newblock {\em Nord. J. Comput.}, 2(2):250--273, 1995.

\end{thebibliography}

\appendix

\section{Chernoff bound}
\label{app: prob}

For Chernoff, we use a lower tail bound, in the following form:
\[
\text{For any } \zeta \in [0,1], \Pr[X \leq (1-\zeta) \mu] \leq \exp(-\frac{\mu \zeta^2}{2}).
\]
This is taken from the 2015 lecture notes for MIT course 18.310, by Michel Goemans~\cite{MG15}.

\end{document}